\pdfoutput=1
\pdfoutput=1
\documentclass[a4paper,11pt]{article}
\usepackage[margin=0.9in]{geometry}
\usepackage{amsmath,amsthm}
\allowdisplaybreaks
\usepackage[dvipsnames]{xcolor}
\usepackage{algorithm}
\usepackage[hidelinks]{hyperref}
\hypersetup{colorlinks,linkcolor={red!80!black},
  citecolor={blue!80!black},urlcolor={ForestGreen}}
\usepackage[capitalize,noabbrev]{cleveref}
\newtheorem{theorem}{Theorem}[section]
\newtheorem{lemma}[theorem]{Lemma}
\newtheorem{proposition}[theorem]{Proposition}

\theoremstyle{definition}

\theoremstyle{remark}
\newtheorem{remark}[theorem]{Remark}

\AddToHook{env/lemma/begin}{\crefalias{theorem}{lemma}}
\AddToHook{env/proposition/begin}{\crefalias{theorem}{proposition}}
\AddToHook{env/corollary/begin}{\crefalias{theorem}{corollary}}
\AddToHook{env/definition/begin}{\crefalias{theorem}{definition}}
\AddToHook{env/remark/begin}{\crefalias{theorem}{remark}}
\AddToHook{env/example/begin}{\crefalias{theorem}{example}}
\numberwithin{equation}{section}
\numberwithin{algorithm}{section}
\crefname{figure}{Figure}{Figures}
\Crefname{figure}{Figure}{Figures}
\crefname{algorithm}{Algorithm}{Algorithms}
\Crefname{algorithm}{Algorithm}{Algorithms}
\newenvironment{keywords}{\par\small\noindent\textbf{Keywords.} }{\par}
\newenvironment{MSCcodes}{\par\small\noindent\textbf{MSC codes.} }{\par}
\let\preprintparagraph\paragraph
\renewcommand{\paragraph}{%
  \par
  \ifdim\dimexpr\pagegoal-\pagetotal\relax<5\baselineskip
    \newpage
  \fi
  \preprintparagraph
}
\makeatletter
\renewcommand{\@maketitle}{%
  \vbox{%
    \vspace*{2cm}%
    \centering
    {\LARGE\bfseries\@title\par}%
    \vskip 1.5em
    {\large\begin{tabular}[t]{c}\@author\end{tabular}\par}%
    \vskip 0.3in minus 0.1in
  }%
}
\makeatother
\AddToHook{cmd/maketitle/after}{\thispagestyle{empty}}
\date{}
\usepackage{amssymb,mathtools,stmaryrd}
\usepackage{algpseudocode}
\SetSymbolFont{stmry}{bold}{U}{stmry}{m}{n}
\usepackage{graphicx}
\usepackage{array,xcolor,framed,tikz}
\usetikzlibrary{arrows.meta,calc}
\usepackage{pgfplots}
\usepgfplotslibrary{groupplots}
\pgfplotsset{compat=1.18}
\usepackage{fix-cm}
\newcommand{\figurefont}{\rmfamily\fontsize{8.5bp}{10bp}\selectfont}
\newcommand{\plotfont}{\figurefont} 
\crefname{section}{Section}{Sections}
\crefname{subsection}{Section}{Sections}
\Crefname{subsection}{Section}{Sections}
\crefname{subsubsection}{Section}{Sections}
\Crefname{subsubsection}{Section}{Sections}
\usepackage[acronym]{glossaries}
\glsdisablehyper
\newacronym{sci}{SCI}{sequential contact insertion}
\newacronym{scp}{SCP}{sequential convex programming} 
\providecommand{\norm}[1]{\lVert #1 \rVert}
\providecommand{\inner}[1]{\langle #1 \rangle}
\providecommand{\abs}[1]{\left\lvert #1 \right\rvert}
\providecommand{\dd}{\mathop{}\!\mathrm{d}}
\providecommand{\jmp}[1]{\llbracket #1 \rrbracket} 
\let\citep\cite
\crefname{subsection}{Section}{Sections}
\Crefname{subsection}{Section}{Sections}
\crefname{subsubsection}{Section}{Sections}
\Crefname{subsubsection}{Section}{Sections}
\crefname{appendix}{Appendix}{Appendices}
\Crefname{appendix}{Appendix}{Appendices}
\crefname{subappendix}{Section}{Sections}
\Crefname{subappendix}{Section}{Sections}
\crefname{subsubappendix}{Section}{Sections}
\Crefname{subsubappendix}{Section}{Sections}
\newif\ifproofInMain
\newif\ifproofInProofs
\proofInMainfalse
\proofInProofstrue

\newcommand{\PaperTitle}{A complete analytical characterization of optimal collision-free motion for two robots}
\newcommand{\PaperAuthors}{Jan Kamm\thanks{\PaperAffiliation} \and
  Antonio Terpin\footnotemark[1] \and
  Raffaello~D'Andrea\footnotemark[1]}
\newcommand{\PaperAffiliation}{Institute for Dynamic Systems and Control,
  ETH Z\"urich, Z\"urich, Switzerland.\\
  Jan Kamm and Antonio Terpin contributed equally to this work.\\
  Corresponding author: Jan Kamm
  (\href{mailto:jakamm@ethz.ch}{jakamm@ethz.ch}).\\
  ORCID: Jan Kamm,
  \href{https://orcid.org/0009-0005-6289-4198}{0009-0005-6289-4198};\\
  Antonio Terpin,
  \href{https://orcid.org/0000-0002-8456-6786}{0000-0002-8456-6786};\\
  Raffaello~D'Andrea,
  \href{https://orcid.org/0000-0001-5287-7849}{0000-0001-5287-7849}}
\newcommand{\PaperAbstract}{Collision-free motion planning for multi-agent robotic
systems remains an active research area because of the complexity of coordinating
robot trajectories, with applications ranging from coordinated drone shows
to autonomous operations on land, in the air, and on water. 
Importantly, the analytical
difficulty of nonconvex collision-avoidance constraints has so far driven the development
of exclusively numerical methods. 
We derive a complete analytical characterization of the optimal
trajectories for two robots that start and end at rest,
under double-integrator dynamics and a squared-acceleration objective.
We prove that these trajectories belong to eight distinct contact
families and determine which family applies for each configuration of initial
and final positions. We analyze the geometrical properties of the optimal trajectories, provide a numerical algorithm for computing them, and discuss how these insights may improve the performance of existing numerical trajectory optimizers.}

\newcommand{\PaperKeywords}{two-robot motion planning, trajectory optimization,
  optimal control, collision avoidance, state constraints}
\newcommand{\PaperMSC}{Primary 49K15; Secondary 49M05, 49M25, 70Q05} 
\title{\PaperTitle}
\expandafter\author\expandafter{\PaperAuthors}
\hypersetup{pdftitle={\PaperTitle},
  pdfauthor={Jan Kamm, Antonio Terpin, Raffaello D'Andrea}} 
\begin{document}
\maketitle

\begin{abstract}
\PaperAbstract
\end{abstract}

\begin{keywords}
\PaperKeywords
\end{keywords}

\begin{MSCcodes}
\PaperMSC
\end{MSCcodes}

\section{Introduction}
Collision-free motion planning for multi-agent robotic systems remains an active
research area because of the complexity of coordinating robot trajectories,
with applications ranging from coordinated drone shows to autonomous operations
on land \citep{wurman_coordinating_2008}, in the air
\citep{augugliaro_generation_2012}, and on water \citep{kamm_floating_2026}.
A variety of numerical methods exist to solve this problem, from sampling and
graph search
\citep{karaman_sampling_2011,shome_drrt_2020,sharon_conflict_2015,wagner_subdimensional_2015}
to joint assignment and planning \citep{turpin_capt_2014} and hybrid search and
trajectory optimization \citep{honig_trajectory_2018}.
Trajectory optimizers use mixed-integer formulations \citep{earl_iterative_2005},
spline parameterizations \citep{vanloock_spline_2015,wang_geometric_2022},
\gls{scp} \citep{augugliaro_generation_2012,schulman_trajectories_2013,bonalli_gusto_2019,elango_continuous_2025}, learning-based methods
\citep{chen_decentralized_2017,grontas_pinet_2026}, distributed optimization
\citep{vanparys_distributed_2017,terpin_feedback_2022}, and separation principles \citep{terpin_dynamic_2024,pezzetti2026a}.

We consider collision-free motion planning under double-integrator dynamics,
with fixed duration and endpoint data.
The squared-acceleration objective measures control effort and is equivalent to
squared-thrust minimization \citep{augugliaro_generation_2012}.
This objective is used for multi-agent assignment and collision avoidance
\citep{beaver_assignment_2020,beaver_energy_2021}, navigation among polygonal
obstacles \citep{beaver_graph_2023}, and, alongside other cost terms,
distributed quadrotor planning
\citep{luis_trajectory_2019}.
In drone and floating-robot experiments, simplified planning models generate
reference trajectories while low-level controllers handle the physical dynamics
during execution
\citep{augugliaro_generation_2012,luis_trajectory_2019,kamm_floating_2026}.
However, these approaches compute trajectories numerically because nonconvex
collision-avoidance constraints make an analytical solution difficult. Even for
two robots, existing theory provides no complete analytical characterization of minimum-effort
transitions between prescribed endpoint states over a fixed time horizon under
double-integrator dynamics.

Importantly, the two-robot case isolates the pairwise interaction that underlies
fleet planning, and an analytical characterization would identify the fundamental
properties of optimal collision-free motion, opening up new possibilities for
efficient planning.
Analytical characterizations exist in other settings.
Hu et al.~\cite{hu_optimal_2003} construct exact optimal motions for two planar
agents through a geometric analysis of the separation constraint.
They minimize squared velocity and leave endpoint velocities free.
Kirkpatrick and Liu~\cite{kirkpatrick_characterizing_2016} give a complete
geometric characterization for two discs minimizing total path length, with
paths consisting of straight segments and circular arcs. They extend this
characterization to convex centrally symmetric robots
\citep{kirkpatrick_minimum_2025}.
The path-length objective does not distinguish different time parameterizations,
whereas acceleration cost depends on them.
For the squared-acceleration objective, Beaver and Malikopoulos~\cite{beaver_assignment_2020}
construct trajectories by successively inserting constrained arcs and solving
for their joining times. Their method optimizes each agent against prescribed
higher-priority trajectories and restricts sustained contact to constant relative
speed. A complete characterization of the jointly optimal two-robot motion
requires accounting for variable relative speed and determining which combinations
of isolated contacts and sustained contact are globally optimal.

\begingroup
\definecolor{contributionsPanelBlue}{RGB}{240,245,250}
\colorlet{shadecolor}{contributionsPanelBlue}
\setlength{\FrameSep}{8pt}
\brokenpenalty=10000
\begin{shaded*}
\ifdefined\internallinenumbers
  \internallinenumbers
  \addtolength{\linenumbersep}{\FrameSep}
\fi
    \noindent\textbf{Contributions.}
    In this paper, we provide a complete analytical characterization for the
    optimal trajectories of two robots starting and ending at rest under double-integrator dynamics and a squared-acceleration objective.
        We prove that the contact
        set consists of finitely many isolated points and boundary arcs,
        with cubic motion between contacts. We derive an exact cost identity that gives
        a sufficient condition for global optimality
        (\cref{sec:necessary,sec:certificate}).
        We prove planarity, classify selected minimizers into
        eight contact families, and show that their admissible construction
        images partition endpoint space
        (\cref{sec:zero-endpoint-velocities}).
        We give a certified numerical procedure that recovers the selected
        trajectory and its cost to any prescribed positive accuracy for
        algebraic endpoint data (\cref{sec:numerical-recovery}).
        Our two-robot characterization underlies the contact-based fleet planner
        in~\cite{kamm_contact_in_preparation}, which incorporates acceleration
        constraints and computes collision-free trajectories for one million
        robots in minutes.

\par\unskip %
\end{shaded*}
\endgroup 
\section{Results}
\label{sec:results}

We consider two robots that start and end at rest, with strict initial
separation and terminal contact allowed.
Their relative position $r(t)$ must remain
outside the interior of an exclusion ball whose radius $R$ is the
required separation. We call an encounter at the minimum separation a
\emph{contact}. The relative trajectory may touch its boundary at isolated points
or follow it along an arc, representing sustained contact
(\cref{fig:contact-catalogue}(d--g)).
Every global optimum belongs to one of eight contact families
(\cref{fig:contact-catalogue}(a--c)).
When the endpoint segment avoids the disk's interior, the
free cubic is optimal (F). Otherwise, the relative trajectory joins distinct
cubic pieces at one, two, or three isolated contacts (P, PP, PPP), or follows a
single boundary arc (A), possibly with one isolated contact before
(PA), after (AP), or on each side (PAP).
\Cref{fig:contact-catalogue}(a--c) illustrates numerically how the optimal contact pattern depends on the
robots' initial and final relative positions. Every globally minimizing relative
trajectory lies
in a plane (\cref{prop:planar-reduction}). In that plane, the initial and
final relative positions are $(d_0,0)$ and $(r_{T,1},r_{T,2})$, respectively,
in units of $R$.
\begin{figure}[tp]
  \centering
  \scalebox{0.88}{\begin{minipage}{\linewidth}\centering
\definecolor{phaseC0}{HTML}{BCCEE1}
\definecolor{phaseP}{HTML}{EBCEC2}
\definecolor{phasePP}{HTML}{D6C2D2}
\definecolor{phasePPP}{HTML}{B5D6CE}
\definecolor{phaseA}{HTML}{D5D5D5}
\definecolor{phasePA}{HTML}{EDE7BC}
\definecolor{phaseAP}{HTML}{F9C8CE}
\definecolor{phasePAP}{HTML}{D0E4AC}
\definecolor{phaseunresolved}{HTML}{EEEEEE}
\definecolor{overviewfree}{HTML}{4478AB}
\definecolor{overviewP}{HTML}{C87855}
\definecolor{overviewPP}{HTML}{8C5582}
\definecolor{overviewPPP}{HTML}{328C78}
\definecolor{overviewleft}{HTML}{CDBB44}
\definecolor{overviewright}{HTML}{EF6677}
\usetikzlibrary{patterns.meta}
\tikzset{
  phase texture C0/.style={pattern={Lines[angle=45,distance=6pt,line width=.3pt]},pattern color=black!28},
  phase texture P/.style={pattern={Lines[angle=-45,distance=6pt,line width=.3pt]},pattern color=black!28},
  phase texture PP/.style={pattern={Lines[angle=0,distance=6pt,line width=.3pt]},pattern color=black!28},
  phase texture PPP/.style={pattern={Lines[angle=90,distance=6pt,line width=.3pt]},pattern color=black!28},
  phase texture A/.style={},
  phase texture PA/.style={pattern={Hatch[angle=45,distance=7pt,line width=.3pt]},pattern color=black!28},
  phase texture AP/.style={pattern={Hatch[angle=0,distance=7pt,line width=.3pt]},pattern color=black!28},
  phase texture PAP/.style={pattern={Dots[distance=5pt,radius=.45pt]},pattern color=black!32},
  phase texture unresolved/.style={},
}  
\newcommand{\slicetag}[1]{{\plotfont(#1)}}
\newcommand{\swatch}[2]{\tikz[baseline={([yshift=-.3em]current bounding box.center)}]{\fill[phase#1](0,0) rectangle (.30,.26);\ifx\relax#2\relax\else\draw[overview#2,line width=.5pt](0,0) rectangle (.30,.26);\fi}\,}
\begingroup
\figurefont
\raisebox{0sp}[23838550sp][0sp]{\includegraphics[width=22876058sp,height=23838550sp]{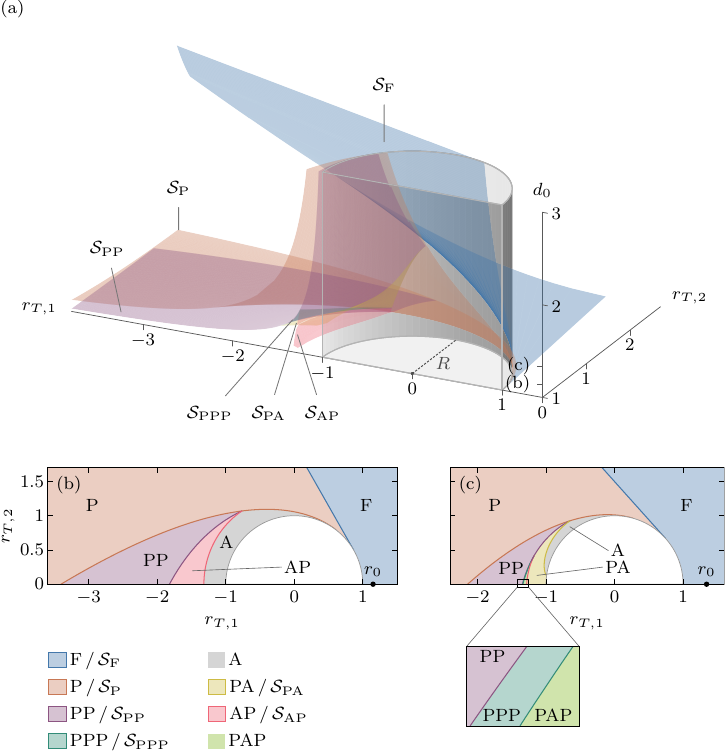}}
\endgroup{} 
  \end{minipage}}\par
  \vspace{2pt}
  \scalebox{0.9}{\begin{minipage}{\linewidth}\centering
\begingroup
\definecolor{freeMotionBlue}{RGB}{68,120,171}
\definecolor{contactRed}{RGB}{168,66,40}
\pgfplotsset{
  title/.code={},
  exclusion disk/.style={draw=black!30,fill=black!5,line width=.45pt,
    domain=0:360,samples=101},
  disk center/.style={only marks,mark=*,mark size=.7pt,
    mark options={draw=none,fill=black!50}},
  free piece/.style={draw=freeMotionBlue,line width=1.1pt,no marks,
    line cap=round,line join=round},
  boundary arc/.style={draw=contactRed,line width=1.8pt,no marks,
    line cap=round,line join=round,
    preaction={draw=black,line width=2.1pt}},
  reaction mark/.style={only marks,mark=*,mark size=1.6pt,
    mark options={draw=none,fill=contactRed}},
  start mark/.style={only marks,mark=*,mark size=1.4pt,
    mark options={draw=freeMotionBlue,fill=freeMotionBlue,line width=.75pt}},
  finish mark/.style={only marks,mark=*,mark size=1.4pt,
    mark options={draw=freeMotionBlue,fill=white,line width=.75pt}},
}
\figurefont
\raisebox{0sp}[6979822sp][0sp]{\includegraphics[width=19793795sp,height=6979822sp]{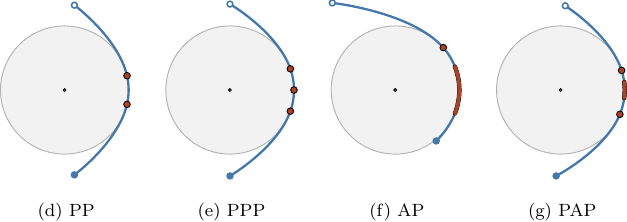}}
\endgroup{} 
  \end{minipage}}\par
  \vspace{-6pt}
  \caption{Two-robot contact structure in relative coordinates.
    (a--c) Numerical contact-family map for $R=T=1$: transition surfaces
    and slices at $d_0=1.15$ and $d_0=1.343$; the inset magnifies the
    narrow PPP region. The cylinder excludes $\norm{r_T}<1$.
    (d--g) Representative trajectories with $R=1$.
    Blue denotes free motion; red dots and segments mark isolated contacts
    and boundary arcs. Filled and open dots mark initial and final positions}
  \label{fig:contact-catalogue}
  \vspace{-6pt}
\end{figure}
In the following we present the key mathematical results that
make this characterization precise.

\subsection{Problem formulation}
\label{sec:problem_setup}

Fix a time horizon $T>0$. For integers $d\geq1$ and $k\geq0$,
$L^2((0,T);\mathbb{R}^d)$ is the space of square-integrable functions,
and $H^k((0,T);\mathbb{R}^d)$ consists of functions whose weak
derivatives through order $k$ are square-integrable
\citep[Chapters~2--3]{adams2003}.
We write $C^k([0,T];\mathbb{R}^d)$ for functions with $k$ continuous
derivatives.
For $z,w\in\mathbb{R}^d$ and $f\in L^2((0,T);\mathbb{R}^d)$, set
$\inner{ z,w}\coloneqq\sum_{\ell=1}^d z_\ell w_\ell$,
$\norm{z}\coloneqq\inner{ z,z}^{1/2}$, and
$\norm{f}_{L^2((0,T);\mathbb{R}^d)}
\coloneqq\left(\int_0^T\norm{f(t)}^2\dd t\right)^{1/2}$.
We omit the norm subscript when the argument makes its meaning
unambiguous. We write $\abs{A}$ for the cardinality of a set $A$
and $\abs{a}$ for the absolute value of a scalar $a$.
The vectors $e_1,\ldots,e_d$ denote the orthonormal coordinate vectors
of the chosen spatial frame.
Square brackets enclose function or trajectory arguments; parentheses
indicate evaluation at a time or a finite-dimensional argument.
Write $D_t^k$ for the $k$th distributional time derivative, including
$D_t^1$ for the first. Ordinary time derivatives are denoted by
$\dot{r}$, $\ddot{r}$, $r^{(3)}$, and $r^{(4)}$, with the same
convention for other trajectories. For Sobolev trajectories, these
symbols denote the corresponding weak derivatives. Dots on normalized
trajectories refer to their displayed time variable.
Time differentiation acts componentwise on vector-valued functions.
We write $\mathcal{M}_+([0,T])$ for finite nonnegative Radon measures
and $\delta_s$ for the Dirac measure with unit mass at $s$.
The support $\operatorname{supp}(\mu)$ consists of points whose every
neighborhood in $[0,T]$ has positive $\mu$-measure.
For a function with finite one-sided limits, write
$f(t^-)\coloneqq\lim_{s\uparrow t}f(s)$,
$f(t^+)\coloneqq\lim_{s\downarrow t}f(s)$, and
$\jmp{f}_t\coloneqq f(t^+)-f(t^-)$.
The minus and plus signs indicate limits from the left and right,
respectively; $f(0^+)$ and $f(T^-)$ are the endpoint limits from inside
the time interval. A right-continuous representative takes the value
$f(t^+)$ at each interior time.
In measure integrals, unmarked limits include endpoints, while a
lower limit $a^+$ excludes $a$ and an upper limit $b^-$ excludes $b$.
For example, $\int_{0^+}^{t}f(s)\dd\mu(s)$ integrates over
$0<s\leq t$, whereas $\int_{0^+}^{T^-}f(s)\dd\mu(s)$ integrates
over $0<s<T$.

\subsubsection{Two-robot model}
Let $r,b\in H^2((0,T);\mathbb R^d)$, $d\geq2$, denote the position of
the first robot relative to the second and their barycenter, respectively.
Both trajectories have prescribed endpoint positions $r_0,r_T$ and
$b_0,b_T$, and zero endpoint velocities.
The endpoint data satisfy $\norm{r_0}>R$ and $\norm{r_T}\geq R$,
where $R>0$ is the minimum separation.
The cost is
$2\norm{\ddot b}^2+\frac12\norm{\ddot r}^2$.
Since separation constrains only $r$, the barycentric and relative
trajectories minimize their costs independently.
The unique optimal barycenter is the cubic Hermite interpolant
\citep[Section~2.5]{egerstedt_martin_2010}
$b^{\mathrm H}(t)=b_0+(b_T-b_0)\hat t^2(3-2\hat t)$,
where $\hat t=t/T$.
The relative trajectory solves
\begin{equation}
  \min_r\frac12\norm{\ddot r}^2
  \;\text{s.t.}\; r(0)=r_0, r(T)=r_T,
  \dot r(0)=\dot r(T)=0, \norm{r(t)}\geq R, t\in[0,T].
  \label{eq:reduced}
\end{equation}
Let $\mathcal M^*$ denote its set of global minimizers.

 \begin{remark}[Nondimensionalization]\label{rem:nondimensionalization}
For $\hat t=t/T\in[0,1]$, define $\hat r(\hat t)\coloneqq r(T\hat t)/R$.
The inverse $r(t)=R\hat r(t/T)$ identifies feasible trajectories
with those of~\eqref{eq:reduced} at $T=R=1$, with endpoint positions
$r_0/R$, $r_T/R$ and zero endpoint velocities. Moreover,
$\norm{\ddot r}_{L^2((0,T);\mathbb R^d)}^2
=R^2T^{-3}\norm{\ddot{\hat r}}_{L^2((0,1);\mathbb R^d)}^2$.
Thus minimizers correspond bijectively. Hats denote normalization;
we omit them when $T=R=1$. \end{remark}  

\begin{remark}[Existence and nonuniqueness]
The direct method \citep[Theorem~2.3]{rindler2018} guarantees a global
minimizer of~\eqref{eq:reduced}.
This solution may not be unique.
In fact, uniqueness fails for opposite endpoint vectors, since symmetry gives
equally costly optimal trajectories passing on opposite sides of the
separation ball.
\end{remark} 
\subsection{Optimal trajectory characterization}
\label{sec:two-robots}

To characterize the structure of the optimal trajectories,
we first derive optimality conditions
(\cref{prop:relative-multipliers}) and construct a global certificate
(\cref{sec:certificate}).
Using the optimality conditions, we characterize the active set, namely
the times at which the robots attain the minimum separation, as finitely
many isolated contacts and boundary arcs (\cref{prop:contact-structure}).
We then establish planarity and side selection to complete the catalogue
of eight contact families and
determine their regions in endpoint space (\cref{sec:zero-endpoint-velocities}).
We conclude with a numerical procedure to recover the selected optimal
trajectory and its cost (\cref{sec:numerical-recovery}).

\subsubsection{Contact laws}
\label{sec:necessary}

Write
$\mathcal A=\{t\in[0,T]:\norm{r(t)}=R\}$ for the active set and
$g(t)=R^2-\norm{r(t)}^2$ for the clearance.
The connected components of $\mathcal A$ are the contacts.
An isolated point is an \emph{isolated contact}; a contact of positive
length is a \emph{boundary arc}, necessarily a closed interval $[a,b]$
with $a<b$.
A measure represents both isolated contacts and contact maintained over
an interval.

 \begin{proposition}[Karush--Kuhn--Tucker conditions for the relative problem]\label{prop:relative-multipliers}
If $\norm{r_T}>R$, every local minimizer $r^*$ of~\eqref{eq:reduced}
in $H^2((0,T);\mathbb R^d)$ admits $\mu\in\mathcal M_+([0,T])$ such that
$\int_0^T\inner{\ddot r^*(t),\ddot\eta(t)}\dd t
=\int_0^T\inner{r^*(t),\eta(t)}\dd\mu(t)$
for every $\eta\in H^2((0,T);\mathbb R^d)$ with zero endpoint values
and first derivatives. Moreover,
$\int_0^T\bigl(R^2-\norm{r^*(t)}^2\bigr)\dd\mu(t)=0$ and
$\operatorname{supp}(\mu)\subseteq\mathcal A$. \end{proposition}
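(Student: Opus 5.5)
We will derive the statement from a Lagrange multiplier rule for state constraints in Banach spaces, in the form of Zowe--Kurcyusz / Robinson constraint qualification (equivalently, a generalized Karush--Kuhn--Tucker theorem with a Slater-type condition). Set $X=\{\eta\in H^2((0,T);\mathbb R^d):\eta(0)=\eta(T)=\dot\eta(0)=\dot\eta(T)=0\}$ and write any feasible $r=r^*+\eta$ with $\eta\in X$, so the affine endpoint conditions are absorbed. The objective $J(\eta)=\tfrac12\norm{\ddot r^*+\ddot\eta}^2$ is Fréchet differentiable on $X$ with derivative $\eta\mapsto\int_0^T\inner{\ddot r^*,\ddot\eta}\dd t$. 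The constraint map is $G:X\to C([0,T])$, $G(\eta)(t)=R^2-\norm{r^*(t)+\eta(t)}^2$, required to lie in the closed convex cone $K=\{g\in C([0,T]):g\leq0\}$. Since $H^2\hookrightarrow C^1$ continuously, $G$ is $C^1$ with $G'(0)\eta=-2\inner{r^*,\eta}$. The dual of $C([0,T])$ is the space of finite Radon measures, and the polar of $K$ is $\mathcal M_+([0,T])$. This is why the multiplier lives there.

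The key step is verifying the Slater condition: some $\eta\in X$ with $G(0)+G'(0)\eta<0$ uniformly on $[0,T]$, i.e.\ $R^2-\norm{r^*}^2-2\inner{r^*,\eta}<0$ everywhere. Here the hypothesis $\norm{r_T}>R$ enters, together with $\norm{r_0}>R$. By continuity there is $\delta>0$ with $\norm{r^*(t)}>R$ on $[0,\delta]\cup[T-\delta,T]$, so the constraint is strictly inactive near both endpoints and $\eta$ may vanish there. On $[\delta,T-\delta]$ take $\eta=\varphi\,r^*$, with $\varphi$ a smooth cutoff equal to $1$ on $[\delta,T-\delta]$, supported in $(\delta/2,T-\delta/2)$, and scaled by a large constant $c$. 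Then $\eta\in X$, since $r^*\in H^2$ and $\varphi$ is smooth and vanishes near the endpoints with its derivative. The linearized constraint becomes $R^2-(1+2c\varphi)\norm{r^*}^2$. On the middle interval this is at most $R^2-(1+2c)R^2<0$ because $\norm{r^*}\geq R$. On the transition region it is at most $R^2-\norm{r^*}^2<0$, since there $\norm{r^*}>R$ and $\varphi\geq0$. By compactness the inequality is uniform.

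This step is the main obstacle. It is exactly where terminal contact ($\norm{r_T}=R$) would break the argument, because the zero-endpoint condition $\eta(T)=0$ forbids pushing outward at $t=T$.

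With the constraint qualification in hand, the abstract theorem (e.g.\ Zowe--Kurcyusz, or Luenberger's generalized Kuhn--Tucker theorem) gives $\lambda\in\mathcal M_+([0,T])$ such that two things hold:
\begin{itemize}
\item stationarity: $J'(0)\eta+\int G'(0)\eta\,\dd\lambda=0$ for all $\eta\in X$;
\item complementary slackness: $\int G(0)\,\dd\lambda=0$.
\end{itemize}
Setting $\mu=2\lambda$, stationarity reads $\int_0^T\inner{\ddot r^*,\ddot\eta}\dd t=\int_0^T\inner{r^*,\eta}\dd\mu$, and complementary slackness reads $\int_0^T(R^2-\norm{r^*}^2)\dd\mu=0$. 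Locality is harmless, because the multiplier rule only needs $r^*$ to be a local minimizer in $X$.

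Finally, the support claim follows because $R^2-\norm{r^*}^2\leq0$ is continuous. The integral of a nonpositive continuous function against a nonnegative measure vanishes only if the function is $\mu$-a.e.\ zero. Hence $\mu(\{t:\norm{r^*(t)}>R\})=0$. That set is relatively open in $[0,T]$, so every point of it has a null neighborhood, giving $\operatorname{supp}(\mu)\subseteq\mathcal A$.
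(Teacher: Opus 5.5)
Your proof is correct, but it takes a different route from the paper. You invoke an abstract multiplier rule (Zowe--Kurcyusz, or Luenberger's generalized Kuhn--Tucker theorem) for the constraint map $\mathcal H_T\to C([0,T])$, so the real work is exhibiting the Slater point $\eta=c\varphi r^*$. Your verification of that point is sound, and it uses the hypothesis $\norm{r_T}>R$ exactly where it is needed: at $t=T$, where $\eta(T)=0$, the linearized constraint must still be strictly negative.

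The paper instead argues directly from the geometry of the constraint (\cref{lem:interior-stationarity}):
\begin{itemize}
\item Radial variations $(1+\varepsilon\phi)r$ with $\phi\geq0$ are feasible for $\varepsilon\geq0$. Hence $\phi\mapsto R^{-2}\int_0^T\inner{\ddot r,\ddot{(\phi r)}}\dd t$ is a positive distribution, and by Riesz it is a locally finite nonnegative Radon measure on $(0,T)$.
\item Tangential variations $\eta_\perp\perp r$ are feasible with both signs, because $\norm{r+\varepsilon\eta_\perp}^2=\norm{r}^2+\varepsilon^2\norm{\eta_\perp}^2$. They therefore contribute nothing.
\item The decomposition $\eta=\phi r+\eta_\perp$ assembles the stationarity identity.
\end{itemize}
In the paper, the endpoint hypothesis enters only afterwards, to make $\mathcal A$ compact in $(0,T)$ and hence the measure finite.

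Your approach is shorter and gives finiteness and complementary slackness at once from a standard theorem. Its limitation is that it gives nothing at a terminal contact, since the Slater condition fails at $t=T$ when $\norm{r_T}=R$. The paper's construction needs no constraint qualification and yields a locally finite interior multiplier even when $\norm{r_T}=R$. This is precisely what \cref{prop:canonical-reaction} reuses to handle the resting terminal contact, where finiteness of $\mu$ is then proved by separate estimates.
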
  

The multiplier $\mu$ describes how the separation constraint changes the
free cubic motion: at an isolated contact, positive mass produces a jump
in jerk, whereas zero mass lets the same cubic continue through the contact.

If $\norm{r_T}>R$, \cref{prop:relative-multipliers} gives every local minimizer a
multiplier measure $\mu$ satisfying
\begin{equation}
  D_t^4r=r\,\mu,
  \quad \mu\geq0,
  \quad \operatorname{supp}(\mu)\subseteq\mathcal A,
  \label{eq:euler-lagrange}
\end{equation}
in distributions on $(0,T)$. On boundary arcs, the density of $\mu$
determines the rate of change of jerk.
We call a feasible trajectory $r$ \emph{stationary} if some finite
nonnegative Radon measure $\mu$ satisfies~\eqref{eq:euler-lagrange}; the
conclusions below apply to every such pair $(r,\mu)$.

 \begin{proposition}[Contact structure]\label{prop:contact-structure}
The active set has finitely many connected components, each an isolated
contact or a boundary arc:
$\mathcal{A}=\{t_i\}_{i=1}^{m}\cup\bigcup_{\ell=1}^{k}[a_\ell,b_\ell]$.
The multiplier has no singular-continuous part. Its atoms lie at
isolated contacts and arc endpoints; its remaining density is
nonnegative, Lipschitz continuous on each arc interior, and zero
almost everywhere outside the arcs. \end{proposition}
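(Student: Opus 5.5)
We work with a stationary pair $(r,\mu)$ satisfying~\eqref{eq:euler-lagrange}, and we first extract regularity from the distributional equation $D_t^4 r = r\mu$. Since $r$ is continuous and $\mu$ is a finite measure, $D_t^4 r$ is a finite vector measure. Integrating once shows that the jerk $r^{(3)}$ has a right-continuous BV representative, with
\[
\jmp{r^{(3)}}_t=\mu(\{t\})\,r(t)
\]
at each interior atom. Consequently $\ddot r$ is Lipschitz and $r\in C^2$. Off $\operatorname{supp}(\mu)$ we have $D_t^4 r=0$, so $r$ is a cubic on each component of the open set $(0,T)\setminus\operatorname{supp}(\mu)$. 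The whole proof rests on differentiating the clearance $g=R^2-\norm{r}^2$, which is $C^2$ with
\[
\ddot g=-2\norm{\dot r}^2-2\inner{r,\ddot r}.
\]

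\textbf{Step 1 (structure on arcs).} On a boundary arc $[a,b]$ we have $\norm{r}\equiv R$. Differentiating gives $\inner{r,\dot r}=0$ and $\inner{r,\ddot r}=-\norm{\dot r}^2$, both Lipschitz. Differentiating once more, in the BV sense, gives
\[
\inner{r,r^{(3)}}=-3\inner{\dot r,\ddot r}.
\]
Now test the equation in the arc interior: $\inner{r,D_t^4 r}=R^2\mu$. The left side can be expressed through derivatives of the identities just obtained:
\[
\frac{\dd}{\dd t}\inner{r,r^{(3)}}-\inner{\dot r,r^{(3)}}=-3\frac{\dd}{\dd t}\inner{\dot r,\ddot r}-\inner{\dot r,r^{(3)}}.
\]
Both terms are computed from $\dot r,\ddot r\in W^{1,\infty}$ together with the BV function $r^{(3)}$. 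The argument proceeds in two parts.
\begin{itemize}
\item First we show that $\mu$ restricted to $(a,b)$ has no atoms. An atom at $s\in(a,b)$ would make $r^{(3)}$ jump by $\mu(\{s\})r(s)$, which is radial. The identity $\inner{r,r^{(3)}}=-3\inner{\dot r,\ddot r}$ has a continuous right-hand side, so the jump must vanish.
\item Second, the same identity yields $\mu=R^{-2}\bigl(-3\tfrac{\dd}{\dd t}\inner{\dot r,\ddot r}-\inner{\dot r,r^{(3)}}\bigr)$ as a measure. The tangential part of $r^{(3)}$ is controlled because its derivative $D_t^4 r=r\mu$ is radial, which forces the tangential components to be Lipschitz. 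Bootstrapping this once should show that $\mu$ has an $L^\infty$ density on $(a,b)$, and then a Lipschitz density.
\end{itemize}
Absolute continuity of $\mu$ on arc interiors follows, and in particular $\mu$ has no singular-continuous part there. Nonnegativity of the density is inherited from $\mu\geq0$.

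\textbf{Step 2 (finiteness of components).} This is the main obstacle: we must exclude accumulation of isolated contacts or of arcs, in which case $\mathcal A$ could be a Cantor-like set carrying singular-continuous mass. The plan is to exploit that $r$ is a cubic between consecutive points of the support. Along any gap $(\alpha,\beta)$ of $\mathcal A$, the function $g$ is a polynomial of degree at most six, negative inside the gap and zero at its ends. Near an accumulation point $s$ of $\mathcal A$, we have $g(s)=\dot g(s)=0$ and, by a Rolle argument on nested gaps, also $\ddot g(s)=0$. That gives $\inner{r,\ddot r}=-\norm{\dot r}^2$ at $s$.

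Gaps shrinking to $s$ carry cubics whose first three Taylor coefficients converge to those of $r$ at $s$, while the jerk may differ because of the atoms. We expand $g$ to third or fourth order on a small gap of length $h$. Requiring $g\le0$ with two zeros at distance $h$ forces a sign condition on a combination of $\inner{r,r^{(3)}}$, $\norm{\dot r}^2$ and $\inner{\dot r,\ddot r}$. The jump relation changes $\inner{r,r^{(3)}}$ only by $\mu(\{t\})R^2\ge0$, that is, monotonically. Summing these jumps over infinitely many gaps should contradict the boundedness of $r^{(3)}$ (bounded variation) unless the masses vanish. If they vanish, the cubics coincide, so $r$ is a single cubic near $s$, and a nonconstant polynomial $g$ cannot have accumulating zeros unless $g\equiv0$, which would make the region an arc. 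If the plain Taylor argument fails, the fallback is to show that $\ddot g$ has a definite sign near $s$ using $\mu\ge0$. This would make $g$ locally concave, or convex, and its zero set an interval.

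\textbf{Step 3 (conclusion).} With finitely many components, the support of $\mu$ lies in $\mathcal A$, which is a finite union of points and intervals. Outside the arcs, $\mu$ is therefore a finite sum of atoms located at isolated contacts. Arc interiors carry a Lipschitz density and no atoms by Step 1, so any remaining atoms sit at arc endpoints. Altogether the density vanishes almost everywhere off the arcs and $\mu$ has no singular-continuous part, which is exactly the decomposition claimed in \cref{prop:contact-structure}.
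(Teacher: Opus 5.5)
Your Step~1 is correct, and it is shorter than you make it. Since $\inner{\dot r,\ddot r}$ is Lipschitz, your identity gives directly $\mu|_{(a,b)}=R^{-2}\bigl(-3\norm{\ddot r}^2-4\inner{\dot r,r^{(3)}}\bigr)\dd t$. The measure part of $D_t^1\inner{\dot r,r^{(3)}}$ is $\inner{\dot r,r}\mu=0$ by tangency, which gives Lipschitz continuity.

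The genuine gap is Step~2, and everything else rests on it. Finiteness of the components, the location of the atoms and the absence of a singular-continuous part in Step~3 all presuppose it, and neither of your mechanisms works.

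\emph{Summation.} The jumps of $\inner{r,r^{(3)}}$ at atoms add up to at most $R^2\mu((0,T))<\infty$. Infinitely many atoms with summable masses are therefore compatible with $r^{(3)}\in\mathrm{BV}$. You would need a lower bound on the individual masses, and the ``sign condition'' from your gap expansion is never identified and gives no reason for one. The argument also does not address accumulation on a Cantor-type set carrying singular-continuous mass, where there are no jumps at all.

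\emph{Fallback.} This is circular. On every gap $(\alpha,\beta)$ between two tangent contacts, $g=\dot g=0$ at both ends and $g<0$ inside. So $g$ is neither concave nor convex there, and $\ddot g$ changes sign in every gap. A definite sign of $\ddot g$ near $s$ is thus equivalent to the conclusion. Moreover, $\mu\geq0$ cannot supply it, since $\mu$ enters only at the level of $D_t^4g$.

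The missing idea is a fourth-order identity for the clearance with a continuous forcing: $D_t^4g=-f_g\,\dd t-2R^2\mu$, with $f_g=6\norm{\ddot r}^2+8\inner{\dot r,r^{(3)}}$. This $f_g$ is Lipschitz on $[0,T]$, because the measure part of $D_t^1f_g$ is $8\inner{\dot r,r}\mu$, again zero by tangency on $\operatorname{supp}(\mu)$.

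At a left accumulation point $t_{\mathrm{acc}}$, you have $g=\dot g=\ddot g=0$, and the Rolle zeros also force $g^{(3)}(t_{\mathrm{acc}}^-)=0$. Let $r_{\mathrm{cub}}$ be the free cubic carrying the jets of $r$ at $t_{\mathrm{acc}}$, with left jerk $r^{(3)}_*$, and set $\dot f_*=20\inner{\ddot r(t_{\mathrm{acc}}),r^{(3)}_*}$. Its clearance at time $t_{\mathrm{acc}}+s$ is $-\tfrac{f_g(t_{\mathrm{acc}})}{24}s^4-\tfrac{\dot f_*}{120}s^5-\tfrac{\norm{r^{(3)}_*}^2}{36}s^6$, which has one sign on some $(-\varepsilon,0)$.

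\begin{itemize}
\item If it is positive, then $f_g<0$ on a left neighbourhood. Multiplying the identity by $g$ and integrating by parts between two nearby contacts gives $\int(-g)f_g\,\dd t=\int(\ddot g)^2\,\dd t$. The left side is nonpositive and the right side nonnegative, so $g\equiv0$ there.
\item If it is nonpositive, $r_{\mathrm{cub}}$ is feasible near $t_{\mathrm{acc}}$. The representation $r(t)=r_{\mathrm{cub}}(t)+\int_{t^+}^{t_{\mathrm{acc}}^-}\tfrac{(s-t)^3}{6}r(s)\,\dd\mu(s)$ shows that any mass between a nearby contact and $t_{\mathrm{acc}}$ would push that contact strictly outside the ball. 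Hence $\mu=0$ on a left neighbourhood and $r=r_{\mathrm{cub}}$ there. The zeros of a polynomial clearance cannot accumulate unless it vanishes identically.
\end{itemize}

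Both cases exclude accumulation whatever the nature of the mass, and compactness then gives finitely many components. Your concavity instinct does work one derivative higher in the nondegenerate case. If $f_g(t_{\mathrm{acc}})>0$, then $D_t^2\ddot g\leq-f_g<0$ nearby, so $\ddot g$ is strictly concave and cannot have infinitely many Rolle zeros. The degenerate case $f_g(t_{\mathrm{acc}})=0$ is what forces the finer expansion above.
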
  

A stationary trajectory is thus a finite chain of free cubics, isolated
contacts, and boundary arcs. 
\subsubsection{Global optimality certificate}
\label{sec:certificate}

The conditions in \cref{sec:necessary} characterize stationary trajectories,
which may be only locally optimal. To certify global optimality, we compare
a stationary trajectory with every feasible competitor. We first express
trajectories as corrections of the unconstrained minimizer, then use the
multiplier to bound the cost difference.

\paragraph{The free cubic and its correction.}
Let $\bar r$ be the unique unconstrained minimizer of
$\frac12\norm{\ddot r}^2$ with the
prescribed endpoint states. It is the cubic Hermite interpolant of
the relative endpoint data.
Every trajectory with these endpoint states has the form $r=\bar r+h$,
where $h$ belongs to the space $\mathcal H_T$ of functions in
$H^2((0,T);\mathbb R^d)$ with
${h(0)=h(T)=\dot{h}(0)=\dot{h}(T)=0}$.
Equip this space with the inner product
$\inner{ h_1,h_2}_{\mathcal H_T}\coloneqq\int_0^T
\inner{\ddot{h}_1(t),\ddot{h}_2(t)}\dd t$.

The correction adds its own acceleration cost to that of the free cubic
(\cref{app:spline-identities}):
$\frac12\norm{\ddot{\bar r}+\ddot h}^2
=\frac12\norm{\ddot{\bar r}}^2+\frac12\norm{h}_{\mathcal H_T}^2$.

\paragraph{From multipliers to trajectories.}
Let $r$ be stationary, with its multiplier $\mu$ restricted to $(0,T)$.
The stationarity equation~\eqref{eq:euler-lagrange} gives
$D_t^4h=r\,\mu$ for the correction $h=r-\bar r$, with zero endpoint
position and velocity. The scalar Green kernel $G_T(t,s)$ describes
the correction at time $t$ produced by a unit jump in jerk at time $s$
under these endpoint conditions. Rescaling the unit-interval formula
\citep[Equation~(3.3)]{ma_yan_wei_2020} by
$G_T(t,s)=T^3G_1(t/T,s/T)$ gives
\refstepcounter{equation}\label{eq:relaxation-kernel}%
$G_T(t,s)=\frac{t^2(T-s)^2}{6T^3}(3Ts-Tt-2ts)$~\textup{(\theequation)}
for $0\leq t\leq s\leq T$, with $G_T(t,s)=G_T(s,t)$ for $t>s$.
Adding the corrections from all
contacts gives (\cref{app:proof-contact-certificate})
\refstepcounter{equation}\label{eq:reaction-green-reconstruction}%
$r(t)=\bar r(t)+\int_{0^+}^{T^-}G_T(t,s)\,r(s)\dd\mu(s)$~\textup{(\theequation)}
for $t\in[0,T]$.

\paragraph{Comparing costs.}
Define
$Q_\mu[h]\coloneqq\norm{h}_{\mathcal H_T}^2
-\int_{0^+}^{T^-}\norm{h(t)}^2\dd\mu(t)$ for $h\in\mathcal H_T$.
We can now compare $r$ with any trajectory $q=r+h$ with the same endpoint
states through the exact identity
\begin{equation}
  \frac12\bigl(\norm{\ddot q}^2-\norm{\ddot r}^2\bigr)=\frac12Q_\mu[h]
  +\frac12\int_{0^+}^{T^-}(\norm{q(t)}^2-R^2)\dd\mu(t).
  \label{eq:cost-identity}
\end{equation}
For every feasible $q$, the separation constraint makes the last integral
nonnegative. Thus, $Q_\mu\geq0$ on $\mathcal H_T$ certifies global
optimality of $r$. The following theorem gives a sufficient condition for
this nonnegativity in terms of projections of $r$ and $\bar r$ on
$\operatorname{supp}(\mu)$.

 \begin{theorem}[A global optimality certificate]\label{thm:contact-certificate}
Let $r$ be stationary with multiplier $\mu$, and let $e\in\mathbb R^d$ be a
unit vector and $\delta\geq0$ such that
$\bar z(t)\coloneqq\inner{ e,\bar r(t)}\geq\delta$ and
$z(t)\coloneqq\inner{ e,r(t)}>0$ for
$t\in\operatorname{supp}(\mu)\cap(0,T)$.
Then, with $\kappa_{\mathrm{cert}}\coloneqq\min\{1,\delta/R\}$,
$Q_\mu[h]\geq\kappa_{\mathrm{cert}}\norm{h}_{\mathcal H_T}^2$ and
$\frac12\bigl(\norm{\ddot q}^2-\norm{\ddot r}^2\bigr)
\geq\frac{\kappa_{\mathrm{cert}}}{2}\norm{q-r}_{\mathcal H_T}^2$
for every feasible $q$.
If $\delta>0$, $r$ is the unique minimizer. \end{theorem}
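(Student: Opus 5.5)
The plan is to reduce the quadratic-form bound to an operator-norm estimate for the Green kernel acting on $L^2(\mu)$, and then control that operator norm with a weighted Schur test whose weight is the projection $z=\inner{e,r}$. Throughout, $\mu$ denotes the multiplier restricted to $(0,T)$.

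\textbf{Step 1 (reproducing kernel).} I will check that $G_T$ from~\eqref{eq:relaxation-kernel} is the reproducing kernel of $\mathcal H_T$. Since $G_T(t,\cdot)\in\mathcal H_T$ solves $D_s^4G_T(t,\cdot)=\delta_t$ with clamped end conditions, two integrations by parts give $h(t)=\inner{h,G_T(t,\cdot)}_{\mathcal H_T}$ for scalar $h\in\mathcal H_T$. I will also verify from the formula that $G_T\geq0$: for $t\le s$,
\[
3Ts-Tt-2ts=T(s-t)+2s(T-t)\geq0 .
\]

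\textbf{Step 2 (weighted Schur test).} Take the $e$-component of~\eqref{eq:reaction-green-reconstruction}:
\[
z(t)=\bar z(t)+\int_{0^+}^{T^-}G_T(t,s)z(s)\dd\mu(s).
\]
On $\operatorname{supp}(\mu)\cap(0,T)\subseteq\mathcal A$ we have $0<z\le\norm{r}=R$ and $\bar z\ge\delta$. Hence, for every $t$ in the support,
\[
\int G_T(t,s)z(s)\dd\mu(s)=z(t)-\bar z(t)\le z(t)\Bigl(1-\tfrac{\delta}{z(t)}\Bigr)\le(1-\kappa_{\mathrm{cert}})\,z(t).
\]
If $\delta\geq R$, the left side is $\le0$, and the bound with $\kappa_{\mathrm{cert}}=1$ still holds. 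The kernel is symmetric and nonnegative, and the weight $z$ is positive $\mu$-a.e. The Schur test with weight $z$ therefore shows that the integral operator $K f(t)=\int G_T(t,s)f(s)\dd\mu(s)$ satisfies
\[
\iint f(t)G_T(t,s)f(s)\dd\mu\dd\mu\le(1-\kappa_{\mathrm{cert}})\int f^2\dd\mu
\]
for every $f\in L^2(\mu)$. This follows from Cauchy--Schwarz with the splitting $G\,f(t)f(s)=\bigl(G\,z(s)/z(t)\bigr)^{1/2}f(t)\cdot\bigl(G\,z(t)/z(s)\bigr)^{1/2}f(s)$. When $1-\kappa_{\mathrm{cert}}=0$, the same computation gives that the double integral is $\le0$. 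Since it is also nonnegative, being a squared $\mathcal H_T$-norm, it vanishes.

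\textbf{Step 3 (from kernel to $Q_\mu$).} Fix a scalar component $h_j$ of $h\in\mathcal H_T$ and set
\[
k\coloneqq\int G_T(\cdot,s)h_j(s)\dd\mu(s)\in\mathcal H_T .
\]
By the reproducing property,
\[
\int h_j^2\dd\mu=\inner{h_j,k}_{\mathcal H_T}\le\norm{h_j}_{\mathcal H_T}\norm{k}_{\mathcal H_T},
\]
and $\norm{k}_{\mathcal H_T}^2=\iint h_jG_Th_j\dd\mu\dd\mu\le(1-\kappa_{\mathrm{cert}})\int h_j^2\dd\mu$ by Step 2. Combining the two gives $\int h_j^2\dd\mu\le(1-\kappa_{\mathrm{cert}})\norm{h_j}_{\mathcal H_T}^2$. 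Summing over $j$ yields $Q_\mu[h]\ge\kappa_{\mathrm{cert}}\norm{h}^2_{\mathcal H_T}$.

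\textbf{Step 4 (cost bound and uniqueness).} For feasible $q$, apply the cost identity~\eqref{eq:cost-identity} with $h=q-r$. Its last integral is nonnegative because $\norm{q}\ge R$, so Step 3 gives the stated cost inequality. If $\delta>0$, then $\kappa_{\mathrm{cert}}>0$, so any other minimizer $q$ must satisfy $\norm{q-r}_{\mathcal H_T}=0$. Hence $q=r$ by the clamped boundary conditions.

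The main technical obstacle is rigor in Steps 1--2 for a general finite measure $\mu$. This requires justifying that $K$ maps $L^2(\mu)$ into $\mathcal H_T$, that $\norm{k}_{\mathcal H_T}^2$ equals the double integral, and that the Schur test is valid when $z$ may approach $0$ near accumulation points at $t=0$ or $t=T$. I would handle this by first proving the estimates for $\mu$ restricted to $[\varepsilon,T-\varepsilon]$, where $z$ is bounded below by continuity and compactness. The reconstruction inequality still holds there, since dropping mass only decreases the nonnegative integral. I would then pass to the limit $\varepsilon\to0$ by monotone convergence.
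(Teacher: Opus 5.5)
Your proposal is correct and follows essentially the same route as the paper. You project the Green reconstruction onto $e$ to get $Sz\le(1-\kappa_{\mathrm{cert}})z$ on the support, run a Schur test with weight $z$ for the kernel operator $S=BB^*$ on $L^2(\mu)$, and then combine $\norm{B}^2=\norm{BB^*}$ (which you carry out by hand in Step 3) with the cost identity~\eqref{eq:cost-identity}. The $\varepsilon$-truncation you plan is harmless but unnecessary: all integrands in the weighted estimate are nonnegative, so Tonelli's theorem justifies it with only $z>0$ $\mu$-almost everywhere, as the paper remarks.
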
  

Geometrically, the contact positions at times in $\operatorname{supp}(\mu)$ lie in one open hemisphere
and the free motion lies in the corresponding closed half-space at those
times; this alignment certifies optimality even against trajectories
passing around the opposite side of the ball. 
\subsubsection{The contact catalogue}
\label{sec:zero-endpoint-velocities}
\label{sec:canonical-constructions}

With resting endpoints the free cubic runs along the segment joining the
endpoints, so its projection on the normal to that segment is constant:
the endpoint data alone fix a direction for
\cref{thm:contact-certificate}.
Scale time and separation to $T=R=1$ as in \cref{sec:problem_setup}.
The endpoint data satisfy $\norm{r_0}>1$ and $\norm{r_T}\geq1$.

\paragraph{Normalized endpoint data.}
Choose orthonormal vectors $e_1,e_2\in\mathbb R^d$ with $r_0=d_0e_1$ and
$r_T=r_{T,1}e_1+r_{T,2}e_2$, $r_{T,2}\geq0$; if $r_T$ is parallel to $r_0$,
$e_2$ is any unit vector orthogonal to $e_1$. Then the \emph{datum}
$\omega\coloneqq(d_0,r_{T,1},r_{T,2})$ lies in
$\mathcal D\coloneqq\{(d_0,r_{T,1},r_{T,2})\in\mathbb R^3:
d_0>1,\ r_{T,1}^2+r_{T,2}^2\geq1,\ r_{T,2}\geq0\}$.
The \emph{symmetry axis} $r_{T,2}=0$ and the \emph{terminal wall}
$\norm{r_T}=1$ are the faces of $\mathcal D$.
Identify the \emph{chosen plane} $\operatorname{span}\{e_1,e_2\}$ with
$\mathbb R^2$, and write $a\times b\coloneqq a_1b_2-a_2b_1$ for planar
vectors.

The free cubic $\bar r(t)=r_0+(r_T-r_0)t^2(3-2t)$, $0\leq t\leq1$,
traverses the endpoint segment. Call the data \emph{blocked} when that segment
meets the open unit disk, that is, when the free cubic is infeasible.
Unblocked data are
settled by \cref{thm:contact-certificate} at $\mu=0$: the free cubic is their
unique minimizer. Call data on the symmetry axis \emph{symmetric} and all
other data \emph{off-axis}. For off-axis data the oriented unit normal to the
line through the endpoints is
$e\coloneqq(r_{T,2},d_0-r_{T,1})/\norm{r_T-r_0}$.
With $\delta\coloneqq\inner{ e,r_0}=\inner{ e,r_T}
=d_0r_{T,2}/\norm{r_T-r_0}$, we have
$\inner{ e,\bar r(t)}=\delta$ throughout. For symmetric data set
$e=(0,1)$ and $\delta=0$. With this $e$, the hypotheses of
\cref{thm:contact-certificate} reduce to $\inner{ e,r(t)}>0$ on
$\operatorname{supp}(\mu)$. The \emph{near side} is the half-plane
$\inner{ e,\cdot}\geq\delta$ in the chosen plane; for off-axis data, it
is the side of the line through the endpoints away from the center of the
disk. A trajectory in the chosen plane is \emph{upper} if
$\inner{ e,r(t)}>0$ for $0<t<1$.

 \begin{proposition}[Planar reduction]\label{prop:planar-reduction}
For every $r\in\mathcal M^*$, the image $r([0,1])$ lies in a
two-dimensional linear subspace of $\mathbb R^d$. \end{proposition}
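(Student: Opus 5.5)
A symmetrization (reflection) argument should work. Every competitor can be replaced by one lying in the plane $P\coloneqq\operatorname{span}\{r_0,r_T\}$ (any plane containing $r_0$ when $r_T\parallel r_0$) without increasing cost. Strict convexity of the acceleration energy then forces every minimizer to lie in such a plane.

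Write $r=r_\parallel+r_\perp$, with $r_\parallel(t)\in P$ and $r_\perp(t)\in P^\perp$. Both components lie in $H^2$. The endpoint data give $r_\perp(0)=r_\perp(1)=0$ and $\dot r_\perp(0)=\dot r_\perp(1)=0$, and the cost splits as $\frac12\norm{\ddot r_\parallel}^2+\frac12\norm{\ddot r_\perp}^2$.

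\emph{Step 1: rotate the perpendicular part into one extra direction.} I would first compress $r_\perp$ into a scalar. Setting $\rho(t)\coloneqq\norm{r_\perp(t)}$ is tempting but not obviously acceleration-reducing. So I would avoid it and argue directly, as follows.

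\emph{Step 2: use the reflection.} Let $\sigma$ be the orthogonal reflection fixing $P$ pointwise and negating $P^\perp$. If $r$ is feasible, so is $\sigma r$: it has the same norms, endpoints and cost. Now let $r\in\mathcal M^*$ and suppose $r_\perp\not\equiv0$. Consider the midpoint $m=\frac12(r+\sigma r)=r_\parallel$. By strict convexity of $\norm{\ddot\cdot}^2$, since $\ddot r\neq\sigma\ddot r$ (otherwise $\ddot r_\perp=0$, and with zero boundary data $r_\perp=0$),
\[
\tfrac12\norm{\ddot r_\parallel}^2<\tfrac12\norm{\ddot r}^2 .
\]
However, $r_\parallel$ may violate the constraint, because $\norm{r_\parallel}\le\norm{r}$.

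\emph{Step 3: repair feasibility.} This is the main obstacle. The plan is to push the perpendicular mass into one fixed in-plane direction instead of discarding it. Pick $e'\in P$ a unit vector. Define the planar competitor
\[
q(t)\coloneqq r_\parallel(t)+\norm{r_\perp(t)}\,\nu(t),
\]
where $\nu\in P^\perp$ is replaced by a direction in a three-dimensional space $P\oplus\mathbb R f$, with $f\in P^\perp$ a fixed unit vector: $q=r_\parallel+\rho f$, $\rho=\norm{r_\perp}$. Then $\norm{q}=\norm{r}\ge1$ and the endpoint data agree, since $\rho$ and $\dot\rho$ vanish at the endpoints wherever they exist.

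The cost needs $\norm{\ddot\rho}\le\norm{\ddot r_\perp}$ in a weak sense. This fails in general: $\rho$ is only Lipschitz at zeros of $r_\perp$, and $\ddot\rho$ picks up a nonnegative curvature term. So I would instead use the Euler--Lagrange structure. By \cref{prop:relative-multipliers} (with $\norm{r_T}>1$, the terminal wall handled by a limit or direct variant), $D_t^4r=r\mu$. The map $r\mapsto r_\perp$ commutes with this linear equation, giving $D_t^4r_\perp=r_\perp\mu$ with zero boundary data.

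Take the scalar $\phi=\inner{r_\perp,f}$ for any $f\in P^\perp$. It satisfies $D_t^4\phi=\phi\mu$ with zero clamped data. Via \eqref{eq:reaction-green-reconstruction} with $\bar r_\perp=0$, this gives $\phi(t)=\int G_1(t,s)\phi(s)\,d\mu(s)$. Testing the weak form against $\eta=\phi f$ gives $\norm{\ddot\phi}^2=\int\phi^2\,d\mu$, so $Q_\mu[\phi f]=0$.

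\emph{Step 4: conclude via the certificate.} Finally apply the cost identity \eqref{eq:cost-identity} with $q=\sigma r$, i.e. $h=-2r_\perp$, combined with the second variation. Optimality of $r$ requires $Q_\mu\ge0$ in directions tangent to the constraint. The nontrivial kernel element $\phi f$ then lets one build an equal-cost feasible family rotating $r_\perp$ toward $P$. Its endpoint would be a planar minimizer, and one checks that the continuation of that family leaves the feasible set at positive first order, contradicting minimality.

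I expect this last step — turning $Q_\mu[\phi f]=0$ into a strict contradiction — to be the hardest. The first thing I would try is showing that $\int\phi^2\,d\mu>0$ forces $\phi$ to be supported where contacts occur. Then a small rotation in the $(r_\parallel,f)$ plane strictly increases clearance at the contacts, which makes a cost-decreasing perturbation available.
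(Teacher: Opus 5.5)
Your reflection observation and the projected Euler--Lagrange identity are correct. The free cubic lies in $P$, so $r_\perp$ solves $D_t^4r_\perp=r_\perp\mu$ with clamped data, and testing with $\phi f$ gives $Q_\mu[\phi f]=0$. But there is a genuine gap. The planar replacement promised in your first paragraph is never constructed; the Step~3 candidate $r_\parallel+\rho f$ is not even planar. Step~4, which carries the whole proof, does not work.

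The identity $Q_\mu[\phi f]=0$ holds for every stationary pair with a transverse component, so it cannot exclude one. Along $q_s=r+s\phi f$, the cost identity \eqref{eq:cost-identity} gives the exact change $\frac12(2s+s^2)\int\phi^2\dd\mu$. This is negative for $-2<s<0$, yet precisely these $q_s$ have $\norm{q_s}^2=1+(2s+s^2)\phi^2<1$ at every contact with $\phi\neq0$. Moreover, $\phi f$ is not tangent to the constraint on $\operatorname{supp}(\mu)$, where $\inner{r,\phi f}=\phi^2$. No second-order condition of the form ``$Q_\mu\geq0$ on tangent directions'' is available for this state constraint anyway.

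You also do not exhibit the ``equal-cost feasible family rotating $r_\perp$ toward $P$'', and the natural candidates fail for off-axis data. A rotation in $\operatorname{span}\{e,f\}$ about the origin moves the endpoint component $\delta e$, and a rotation of $Y-\delta e$ about $\delta e$ changes $\norm{r}$. For symmetric data the contradiction scheme fails outright. With $P$ fixed in advance, rotating a planar minimizer about the endpoint axis produces minimizers with $r_\perp\not\equiv0$ (\cref{prop:side-selection}). So the plane must depend on $r$, and what you actually need is that the transverse direction of $r$ is constant.

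The missing idea is a global comparison with an explicit planar competitor. The paper decomposes $r=Xa+Y$ along the endpoint direction $a$ rather than along $P$. With $X$ frozen, feasibility then only constrains $\norm{Y}\geq\sqrt{1-X^2}$ on $\{\abs{X}<1\}$. It replaces $Y$ by $(\delta+u_+)e$, where $u_+$ is the limit of the minimizers $u_n$ of the scalar obstacle problem $\min\norm{w}_{\mathcal V}^2$ subject to $w\geq\sqrt{1-X^2}-\delta$ on compact exhaustions of that set. The competitor $Xa+(\delta+u_+)e$ is planar and feasible. Take the obstacle multipliers $\nu_n$, $\hat Y=Y/\norm{Y}$, and $W_n=\int G_1(\cdot,s)\hat Y(s)\dd\nu_n(s)$. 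The identity $\norm{u_n}_{\mathcal V}^2-\norm{W_n}_{\mathcal H_1}^2=\frac12\iint G_1(s,t)\norm{\hat Y(s)-\hat Y(t)}^2\dd\nu_n(s)\dd\nu_n(t)\geq0$ shows that the competitor costs no more. For $r\in\mathcal M^*$, equality makes every nonnegative remainder vanish. This forces $Y=(\delta+u_+)e$ off-axis, and $Y=u_+b_0$ for a fixed unit $b_0\perp a$ in the symmetric case.

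Your identity can be salvaged for off-axis data. With the coercive bound of \cref{thm:contact-certificate} ($\kappa_{\mathrm{cert}}=\min\{1,\delta\}>0$), $Q_\mu[\phi f]=0$ forces $\phi\equiv0$. However, this works only once you know $\inner{e,r(t)}>0$ on $\operatorname{supp}(\mu)$. That near-side property is exactly what the paper extracts from the same transverse comparison, so it is not available beforehand. The salvage gives nothing for symmetric data, where $\kappa_{\mathrm{cert}}=0$.
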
  

Intuitively, \cref{prop:planar-reduction} shows that additional spatial directions cannot reduce the minimum effort.

 \begin{proposition}[Side selection]\label{prop:side-selection}
For off-axis data, $\mathcal M^*=\{r_*\}$ and $r_*([0,1])$ is contained
in the near side. For blocked symmetric data, $\mathcal M^*$ is the orbit of its unique
upper element under the orthogonal maps of $\mathbb R^d$ that fix $r_0$
and $r_T$. \end{proposition}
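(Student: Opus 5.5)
The plan is to build one minimizer that satisfies the hypotheses of \cref{thm:contact-certificate} with the direction $e$ fixed by the endpoint data, and then let the certificate settle uniqueness and the side.

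\textbf{Step 1: a candidate in the closed upper half-plane.} Work in the normalized setting and in the chosen plane. Consider problem~\eqref{eq:reduced} with the extra closed convex constraint $\inner{e,r(t)}\geq 0$ for all $t\in[0,1]$. The direct method, as in the existence remark, gives a minimizer $r_+$ of this restricted problem. I then want to show that the added constraint is inactive, i.e. $z_+(t)\coloneqq\inner{e,r_+(t)}>0$ for $0<t<1$.

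\textbf{Step 2: the half-plane constraint is inactive (main obstacle).} This is where I expect the difficulty. Suppose $z_+(t_0)=0$ at some interior time $t_0$. Then $r_+(t_0)$ lies on the line $e^\perp$ outside the open disk, and $t_0$ is an interior minimum of $z_+$, so $\dot z_+(t_0)=0$.

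My first attempt is a local variation $r_+ + \varepsilon\phi e$, with $\phi\geq 0$ supported near the touching times. Near $t_0$ the separation constraint is slack unless $\norm{r_+(t_0)}=1$, which cannot happen on $e^\perp$ except at the points $\pm e^\perp$. Away from those points, the cubic structure of \cref{prop:contact-structure} (applied to the restricted problem, whose multiplier for the linear constraint acts like a contact against a wall) shows that free cubic pieces cannot touch the wall with positive multiplier while the terminal values satisfy $z=\delta>0$. The tangential points $\pm e^\perp$ need a separate argument. There I would compare with the pushed-up competitor and use the Green reconstruction~\eqref{eq:reaction-green-reconstruction}.

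Once the constraint is inactive, $r_+$ is a local minimizer of~\eqref{eq:reduced}. If $\norm{r_T}>1$, \cref{prop:relative-multipliers} makes it stationary with some $\mu$; the terminal-wall case $\norm{r_T}=1$ is handled by a limiting argument, or by a direct variant of the multiplier rule. In either case $z_+>0$ on $\operatorname{supp}(\mu)\cap(0,1)$.

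\textbf{Step 3: off-axis data.} For off-axis data $\bar z\equiv\delta>0$, so \cref{thm:contact-certificate} applies with $\kappa_{\mathrm{cert}}>0$. Hence $r_+$ is the unique global minimizer, and $\mathcal M^*=\{r_+\}$.

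To get the near-side statement, I project~\eqref{eq:reaction-green-reconstruction} onto $e$:
\[
z_+(t)=\delta+\int_{0^+}^{1^-}G_1(t,s)\,z_+(s)\dd\mu(s).
\]
From~\eqref{eq:relaxation-kernel}, for $t\leq s$ the factor $3s-t-2ts$ satisfies $3s-t-2ts\geq 3s-t-2s=s-t\geq0$, so $G_1\geq0$, and symmetry gives the case $t>s$. Since $z_+>0$ on the support of $\mu$, the integral is nonnegative and $z_+(t)\geq\delta$. So $r_+([0,1])$ lies in the near side.

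\textbf{Step 4: blocked symmetric data.} Here $\delta=0$, so the certificate only gives $Q_\mu\geq0$, and uniqueness must be argued directly.

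First, take any $r\in\mathcal M^*$. By \cref{prop:planar-reduction} it lies in a plane containing $e_1$, since $r_0$ and $r_T$ both lie on that axis. An orthogonal map fixing $e_1$ carries that plane onto the chosen plane. Composing with the reflection $e_2\mapsto-e_2$ if needed, I may assume $z\geq0$ near some contact.

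I then need $z$ not to change sign on $(0,1)$. Otherwise $r$ would pass through the axis outside the disk, and the restricted argument of Step 2 shows such a trajectory is not optimal.

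Second, for two upper minimizers $r$ and $q$, the identity~\eqref{eq:cost-identity} forces $Q_\mu[q-r]=0$ and $\int(\norm{q}^2-1)\dd\mu=0$. Hence $q$ touches the disk on $\operatorname{supp}(\mu)$. I would combine the strict-inequality version of the Rayleigh argument behind \cref{thm:contact-certificate}, using $z>0$ on the support, to get $q=r$. This shows the upper element is unique, and $\mathcal M^*$ is its orbit under the orthogonal maps fixing $r_0$ and $r_T$.
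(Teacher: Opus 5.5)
Your certificate-first strategy is viable for off-axis data, and Step~3 is correct once Step~2 holds. The gap is that Step~2 carries the whole proposition and is not proved.

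\textbf{Off-axis data.} To make the restricted minimizer $r_+$ stationary you need a multiplier rule with two state constraints. The splitting $\eta=\phi r+\eta_\perp$ of \cref{lem:interior-stationarity} breaks down wherever the wall $\inner{e,\cdot}=0$ is active, since tangential variations may cross it. At $\pm e^\perp$ both constraints are active together. Once such a rule is available, the argument closes quickly: for $\norm{r_T}>1$ both active sets are compact in $(0,1)$, and $\phi(r_++e)$ is a Slater direction. The projection you already use in Step~3, applied to $D_t^4(z_+-\delta)=z_+\mu+\nu\geq0$ with $\nu$ the wall multiplier, then gives $z_+\geq\delta>0$ at once. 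The appeal to the cubic structure of \cref{prop:contact-structure} is neither needed nor an actual argument. The terminal-wall case is only deferred.

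\textbf{Blocked symmetric data.} Here there are two genuine gaps.

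First, the claim that $z$ cannot change sign \emph{by the restricted argument of Step~2} does not follow. Step~2 concerns the auxiliary $r_+$, not an arbitrary element of $\mathcal M^*$. Worse, here the wall passes through both endpoints, so your Step~2 sketch, which uses $z=\delta>0$ at the ends, does not cover $r_+$ either. You also cannot reflect the lower part, because $z\mapsto\abs{z}$ leaves $H^2$ at transverse crossings.

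Second, uniqueness of the upper element cannot come from a strict Rayleigh inequality. With $\delta=0$ one has $\norm{z}_{\mathcal V}^2=\int_{0^+}^{1^-}z^2\dd\mu$, so $Q_\mu[z\zeta]=0$ for every constant $\zeta\in\mathbb R^d$. This kernel contains the differences $z(b-e)$ between rotated minimizers. Two further steps are needed:
\begin{itemize}
\item the ground-state identity $\inner{f,(I-S)f}=\frac12\iint G_1(t,s)z(t)z(s)\bigl(f(t)/z(t)-f(s)/z(s)\bigr)^2\dd\mu(s)\dd\mu(t)$ for $S=BB^*$, which together with $G_1>0$ shows the kernel is exactly $\{z\zeta\}$;
\item an analysis of $\norm{r+z\zeta}=1$ on $\operatorname{supp}(\mu)$: two distinct contact normals force $\inner{e_1,\zeta}=0$ and $\norm{e+\zeta}=1$, while a support with only one contact normal needs a separate tangency argument.
\end{itemize}
Applied to an arbitrary $q\in\mathcal M^*$, this gives $q=\inner{e_1,r}e_1+zb$ with $b\perp e_1$ a unit vector. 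That yields the orbit statement with no sign-change claim at all.

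\textbf{How the paper avoids both issues.} It uses a transverse comparison. Write any minimizer as $r=Xa+Y$, with $a$ the unit endpoint direction and $Y\perp a$. Replace $Y-\delta e$ by the nonnegative solution $u_+$ of the scalar obstacle problem $w\geq\sqrt{1-X^2}-\delta$. The multiplier representation of the truncated problems shows $\norm{u_+}_{\mathcal V}\leq\norm{Y-\delta e}_{\mathcal H_1}$. The equality case then forces $Y=(\delta+u_+)e$, or $Y=u_+b_0$ when $\delta=0$. This gives the near side and planarity for every minimizer before the certificate is invoked.
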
  

For off-axis data with $\norm{r_T}>1$, the two ways around the disk each
carry a local minimizer among trajectories in the chosen plane
(\cref{lem:homotopy-class-minimizers}), and \cref{prop:side-selection} puts the global
one on the near side. The \emph{selected minimizer} is thus
well defined: the unique minimizer for
off-axis or unblocked data, and the upper element of $\mathcal M^*$ for
blocked symmetric data.

\paragraph{The contact catalogue.}
With the side fixed, the laws of \cref{sec:necessary} limit how the
selected minimizer can touch the disk.
Every global minimizer has a unique finite interior multiplier $\mu$
satisfying~\eqref{eq:euler-lagrange} (\cref{prop:canonical-reaction}).
The labels in \cref{fig:contact-catalogue}(a--c) record its contact pattern.
Write P for an isolated contact at $t$ with positive mass $\mu(\{t\})>0$,
A for a maximal boundary arc together with the atoms at its endpoints,
and F for $\mu=0$; concatenation records temporal order, coincident atoms
combine, and a zero-mass touch, like a prescribed endpoint contact,
contributes no letter.

 \begin{theorem}[Canonical contact catalogue]\label{thm:canonical-catalogue}
For every datum in $\mathcal D$, the contact pattern of the
selected minimizer is exactly one of
$\mathrm F$, $\mathrm P$, $\mathrm{PP}$, $\mathrm{PPP}$,
$\mathrm A$, $\mathrm{PA}$, $\mathrm{AP}$, and $\mathrm{PAP}$. \end{theorem}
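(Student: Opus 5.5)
Taking \cref{prop:contact-structure} and \cref{prop:canonical-reaction} as given, the contact pattern of the selected minimizer is a well-defined finite word in P and A, or F when $\mu=0$. The proof therefore reduces to a combinatorial exclusion argument: we must rule out every word outside the eight listed ones. By \cref{thm:contact-certificate} at $\mu=0$, unblocked data give F. For blocked data $\mu\neq0$, since the free cubic is infeasible. So it suffices to show three things: at most one boundary arc occurs; at most one atom precedes an arc and at most one follows it; and without an arc there are at most three atoms. We work in the chosen plane (\cref{prop:planar-reduction}) with the selected upper/near-side minimizer (\cref{prop:side-selection}). Write $r(t)=\rho(t)(\cos\theta(t),\sin\theta(t))$.

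The first step is monotonicity of the polar angle. On each free piece $r$ is a cubic. At an atom, the jump in jerk is $r(t_i)\mu(\{t_i\})$, which points radially outward. On an arc, \eqref{eq:euler-lagrange} forces $\rho\equiv1$ together with a nonnegative density. Using the angular momentum $L=r\times\dot r$, we get $\dot L=r\times\ddot r$. Moreover $\frac{d^2}{dt^2}(r\times\ddot r)=2\,\dot r\times r^{(3)}+r\times D_t^4r$, and the last term vanishes because $D_t^4r\parallel r$. I would use this to show that contacts of the optimizer occur in order of increasing $\theta$ and that the trajectory winds monotonically around the disk on the near side. In particular, any two contacts are separated by an angular gap, and a free cubic between two contacts cannot re-enter the disk.

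The second step bounds the number of atoms before an arc. Suppose two atoms $t_1<t_2<a$ precede an arc $[a,b]$. On $(t_2,a)$ the motion is a cubic that is tangent to the unit circle at $a$; tangency holds because $g\le0$ with an interior maximum at the arc start, so $\dot\rho(a)=0$. At $t_2$ the cubic touches the circle. A cubic segment touching the circle at $t_2$ and landing tangentially with matched acceleration at $a$ has too few degrees of freedom. I would count jerk-jump conditions through the Green representation \eqref{eq:reaction-green-reconstruction} and combine them with the sign of $\mu$: the radial acceleration at arc entry must satisfy $\ddot\rho(a^+)\ge0$, which forces centripetal balance $\inner{r,\ddot r}=-\norm{\dot r}^2$ on the arc. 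This should yield a sign contradiction with $\mu(\{t_2\})>0$, meaning that the atom at $t_2$ would have zero mass. The case of atoms after the arc follows by time reversal.

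For multiple arcs, a free piece connecting two arcs would have to leave the circle tangentially and return tangentially with the interior cubic outside. I would show this forces zero multiplier jumps at both ends, so the two arcs merge. The cases of four or more atoms and of an atom strictly between arcs reduce to the same lemma.

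The main obstacle is bounding pure point patterns by three atoms; the figure shows PPP occupying only a sliver. My first attempt would be a Sturm/Descartes-type sign-change count on $\ddot L$. Between atoms, $r\times r^{(3)}$ is constant on each cubic piece, and the jumps at atoms affect $\dot r\times r^{(3)}$ only through the radial direction. This makes the piecewise-linear function $t\mapsto r\times\ddot r$ have a controlled number of sign changes, and each positive-mass atom consumes one, bounded via the endpoint conditions $\dot r(0)=\dot r(1)=0$. If this count gives four rather than three, I would fall back on the explicit parametrization of the stationarity equations in terms of the contact angles and times, and exclude a fourth atom using the certificate direction $e$ together with monotonicity of $\theta$.
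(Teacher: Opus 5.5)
\textbf{There is a genuine gap.} Your reduction to three exclusion claims is exactly \cref{prop:canonical-contact-bounds}: at most one arc, at most one isolated contact on each side of an arc, and at most three isolated contacts without an arc. However, none of the three claims is established, and the mechanisms you propose either fail or rest on incorrect identities.

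\textbf{The missing idea is the three-contact curvature inequality} (\cref{lem:three-contact-curvature}). Set $b=\norm{r}^2-1$ and $\gamma=w^2+\inner{r,\ddot r}=\ddot b/2$. Then $\gamma\geq0$ at geometric contacts and $\gamma=0$ at arc endpoints. For three consecutive contacts with ordered normals, $w>0$, a radial nonnegative middle jump and negative outer tangential jerks, the inequality gives $\gamma(t_2)<\min\{\gamma(t_1),\gamma(t_3)\}$.
\begin{itemize}
\item Applied to $(t_1,t_2,t_3)$ and $(t_2,t_3,t_4)$, it excludes four contacts.
\item Applied to $(t_1,t_2,a)$, it gives $\gamma(t_2)<\gamma(a)=0$, which excludes two contacts before an arc.
\end{itemize}
Its hypotheses come from \cref{lem:canonical-contact-order}. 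The ordering there uses the Green reconstruction $u=\inner{e,r}-\delta=\int_{0^+}^{1^-}G_1(\cdot,s)\inner{e,r(s)}\dd\mu(s)>0$ and convexity of $\ddot u$. The sign $\tau<0$ comes from integrating $D_t^1r^{(3)}=r\mu$ to both resting ends. Your angular-momentum identity yields neither the ordering nor the sign of $\tau$.

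\textbf{Your Step 2 cannot work as sketched.}
\begin{itemize}
\item A degree-of-freedom count is hopeless. Each extra isolated contact adds three unknowns $(t_i,\Delta_i)$ and three equations, so the systems stay square.
\item An argument using only the bridge $(t_2,a)$, the centripetal condition at $a$, and $\mu(\{t_2\})$ applies verbatim to the single atom of a PA pattern, which you must allow.
\end{itemize}
The obstruction is an inequality, and it must involve the earlier contact $t_1$ and the tangential jerk there.

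\textbf{Step 4 rests on false structure.} On a cubic piece, $\frac{\dd}{\dd t}(r\times r^{(3)})=\dot r\times r^{(3)}$, which is generally nonzero. Likewise $r\times\ddot r$ is a polynomial of degree up to three there, not piecewise linear. The conserved quantity is $\Gamma=\dot r\times\ddot r-r\times r^{(3)}$. So the proposed Sturm/Descartes count has no basis, and the fallback is a plan rather than an argument.

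\textbf{The two-arc step reaches the wrong conclusion.} ``Zero jumps at both ends, so the arcs merge'' does not follow: zero jumps still leave a free bridge of positive length. What works is monotonicity of $b^{(3)}$:
\begin{itemize}
\item it vanishes on arcs;
\item it jumps by $2\mu(\{t_i\})\geq0$ at atoms;
\item it strictly increases across every free bridge.
\end{itemize}
The strict increase follows from the factorization \eqref{eq:canonical-bridge-gap}, which gives
$b^{(3)}(b_0^-)-b^{(3)}(a_0^+)=12(\gamma(a_0)+\gamma(b_0))/\ell+\ell^3\norm{r^{(3)}(a_0^+)}^2/3>0$.
Hence $b^{(3)}>0$ just before a second arc entry, where it must equal $-2m\leq0$. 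This also rules out any chain of bridges and atoms between two arcs.
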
  

The catalogue reduces the search for the optimum to eight
finite-dimensional constructions.
\Cref{fig:contact-catalogue}(d--g) shows representatives of four of the eight families.

\paragraph{Atomic families.}
For blocked data, choose $m\in\{1,2,3\}$, ordered times
$0<t_1<\cdots<t_m<1$, and jerk jumps $\Delta_i\in\mathbb R^2$.
With atoms at these times, the kernel
response~\eqref{eq:reaction-green-reconstruction} takes the form
$r(t)=\bar r(t)+\sum_{i=1}^{m}G_1(t,t_i)\Delta_i$.
With $n_i=r(t_i)$, $c_i=(t_i,\Delta_i)$, and $c=(c_1,\ldots,c_m)$,
the contact conditions form the system
\begin{equation}
  F(c)\coloneqq
  \bigl(\norm{n_i}^2-1,\;n_i\times\Delta_i,\;
  \inner{ n_i,\dot r(t_i)}\bigr)_{i=1}^{m}=0.
  \label{eq:atomic-contact-system}
\end{equation}
These are $3m$ equations in $3m$ unknowns for P, PP, and PPP;
the atom masses must also satisfy $m_i\coloneqq\inner{ n_i,\Delta_i}>0$.
\Cref{app:canonical-atomic} gives polynomial
clearance tests for the real solutions.

\paragraph{Arc families.}
On an arc the separation constraint holds with equality, so a single angle
determines the motion; its conserved momentum and energy reduce the arc to
a quadrature (\cref{prop:boundary-dynamics}).
At either end a free cubic reaches the resting endpoint
directly, giving A, or passes through one isolated contact first, giving PA,
AP, or PAP according to the side.
\Cref{app:proof-endpoint-regime-map} evaluates the
arc system and derives both completions.
A construction is \emph{admissible} if its endpoint triple lies in
$\mathcal D$, its atom masses are positive,
every free piece clears the disk, $\inner{ e,\cdot}>0$ on the support
of its multiplier, and any arc has positive speed and nonnegative density. 
\paragraph{Regime regions.}
Let $\mathfrak L=\{\mathrm F,\mathrm P,\mathrm{PP},\mathrm{PPP},
\mathrm A,\mathrm{PA},\mathrm{AP},\mathrm{PAP}\}$. For $\sigma\in\mathfrak L$,
let $\mathcal P_\sigma$ be the parameters of the admissible constructions of
\cref{sec:canonical-constructions} whose contact pattern has label $\sigma$,
retaining all admitted roots, and let $\mathcal E_\sigma(p)$ be the normalized
endpoint triple of the assembled trajectory, using the map
$\mathcal E_{\mathrm{can}}$ in \cref{app:canonical-completion} when rotation is needed. Set
$\mathcal R_\sigma\coloneqq\mathcal E_\sigma(\mathcal P_\sigma)\subseteq\mathcal D$.
Thus $\mathcal R_{\mathrm F}$ is the set of unblocked data.

 \begin{theorem}[Regime map partition]\label{thm:endpoint-regime-map}
The regions $\mathcal R_\sigma$, $\sigma\in\mathfrak L$, are pairwise
disjoint and cover $\mathcal D$.
For $\omega\in\mathcal R_\sigma$, every admissible construction mapping to $\omega$
gives the selected minimizer, with label $\sigma$, and every
selected minimizer arises this way. \end{theorem}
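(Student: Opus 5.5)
The plan has two halves: coverage with uniqueness of labels, and the converse that admissible constructions are exactly selected minimizers. Both rest on \cref{thm:contact-certificate}.

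\emph{Admissible constructions are selected minimizers.} Take $\omega\in\mathcal D$ and an admissible construction $p\in\mathcal P_\sigma$ with $\mathcal E_\sigma(p)=\omega$. The assembled trajectory $r_p$ is built from free cubics joined at isolated contacts, possibly with one arc. Its multiplier $\mu_p$ is assembled from the atoms $m_i\delta_{t_i}$ and the arc density. I will first check that $r_p$ is feasible and stationary:
\begin{itemize}
\item the clearance tests give feasibility of the free pieces;
\item positivity of the masses and of the arc density gives $\mu_p\geq0$;
\item the contact system \eqref{eq:atomic-contact-system} encodes $D_t^4 r_p=r_p\mu_p$ at each atom (the jump is parallel to $n_i$, and the tangency makes $r_p$ $C^1$ with $g$ maximal);
\item the arc quadrature of \cref{prop:boundary-dynamics} encodes the same equation on the arc.
\end{itemize}
Admissibility also requires $\inner{e,\cdot}>0$ on $\operatorname{supp}(\mu_p)$. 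With $\bar z\equiv\delta$, \cref{thm:contact-certificate} then applies.

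For off-axis data, $\delta>0$, so $r_p$ is the unique minimizer, hence the selected one. For symmetric data, $\delta=0$ still gives global optimality. \Cref{prop:side-selection} identifies the upper element of $\mathcal M^*$ as the selected minimizer. Here I still need to verify that $r_p$ is upper, i.e.\ $\inner{e,r_p(t)}>0$ on $(0,1)$ and not only on the support. This should follow because each cubic piece has $\inner{e,\cdot}$ vanishing at the resting endpoints, which lie on the axis, and positive at the contacts. Combined with the end conditions, this forces positivity on the whole interior.

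\emph{Disjointness.} The contact pattern is read off the multiplier. By \cref{prop:canonical-reaction} the selected minimizer has a unique interior multiplier, so the label is well defined. Since the selected minimizer is unique, every construction mapping to $\omega$ produces the same trajectory and the same $\mu$, hence the same label $\sigma$. Therefore $\mathcal R_\sigma\cap\mathcal R_{\sigma'}=\emptyset$ for $\sigma\neq\sigma'$.

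\emph{Coverage and the converse.} Take $\omega\in\mathcal D$. By existence, \cref{prop:planar-reduction} and \cref{prop:side-selection}, there is a selected minimizer $r_*$ lying in the chosen plane on the near side. By \cref{thm:canonical-catalogue}, its label $\sigma$ lies in $\mathfrak L$. I then show that $r_*$ is the trajectory produced by some $p\in\mathcal P_\sigma$:
\begin{itemize}
\item Using \cref{prop:contact-structure} and the Green representation \eqref{eq:reaction-green-reconstruction}, $r_*=\bar r+\sum_i G_1(\cdot,t_i)\Delta_i$ on atomic patterns, with $\Delta_i=\mu(\{t_i\})n_i$. These parameters satisfy \eqref{eq:atomic-contact-system}.
\item On arc patterns, the restriction to the arc satisfies the conserved quantities of \cref{prop:boundary-dynamics}, and the outer pieces give the A, PA, AP or PAP completions. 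When the construction is parameterized in arc coordinates, the map $\mathcal E_{\mathrm{can}}$ rotates back to normal form.
\end{itemize}
Admissibility is then inherited. Feasibility gives clearance, $\mu\ge0$ gives the signs, and $\inner{e,r_*}>0$ on the support holds by near-side selection. This needs $\delta>0$ for off-axis data (strict inequality at contacts, since $\norm{n}=1$ while the line is at distance $\delta$), and the upper property for symmetric data. Positive arc speed is the remaining requirement. I would obtain it from \cref{prop:boundary-dynamics}, since a zero-speed point on an arc would contradict the conserved energy or optimality. Thus $\omega\in\mathcal R_\sigma$, which gives both coverage and the statement that every selected minimizer arises from an admissible construction. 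Unblocked data fall in $\mathcal R_{\mathrm F}$ directly.

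\emph{Main obstacle.} The hardest step is the converse, specifically showing that the parameterized families capture every selected minimizer with all admissibility conditions strict. The delicate parts are:
\begin{itemize}
\item ruling out zero arc speed;
\item handling coincident atoms and arc-endpoint atoms correctly in the labeling;
\item the terminal-wall case $\norm{r_T}=1$, where \cref{prop:relative-multipliers} was stated only for $\norm{r_T}>R$.
\end{itemize}
For the terminal wall, I would first try to rely on \cref{prop:canonical-reaction}, which provides the interior multiplier for every global minimizer, and discard the endpoint mass as contributing no letter.
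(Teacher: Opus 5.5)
Your skeleton is the paper's. \Cref{thm:contact-certificate} with \cref{prop:side-selection} identifies admissible constructions with the selected minimizer. Uniqueness of that minimizer and of its interior multiplier (\cref{prop:canonical-reaction}) gives disjointness. Existence with \cref{thm:canonical-catalogue} and the representation step gives coverage. Your treatment of the terminal wall via \cref{prop:canonical-reaction} is also what the paper does.

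\textbf{Main gap: the sign filters in the coverage half.} You say admissibility is inherited from feasibility, $\mu\geq0$ and near-side selection. The constructions, however, filter on strict signs that these do not supply:
\begin{itemize}
\item Every atom mass must be positive, including the atoms at arc entry and exit. For example, the direct completion has $m=2w^2/T_{\mathrm{rest}}+3w\alpha$ and is kept only if $m>0$.
\item The arc must have oriented speed $w>0$, because the normalization $s=w_a(t-a)$ and every completion formula presuppose $w_a>0$.
\item The stopping relation \eqref{eq:canonical-rest-completion} and the optional-contact step use $w_c>0$ and $\tau_c<0$ at the last contact. This is what makes $\tau_cT_{\mathrm{rest}}^2+2\alpha_cT_{\mathrm{rest}}+2w_c=0$ have a unique positive root.
\end{itemize}
A selected minimizer with a zero exit atom, or with a clockwise arc, would be rejected by every construction, and coverage would fail.

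These signs are separate results in the paper, not consequences of $\mu\geq0$:
\begin{itemize}
\item $w>0$ and $\tau<0$ at every interior contact come from \cref{lem:canonical-contact-order}. Its proof uses convexity of $\ddot u$ for $u=\inner{e,r}-\delta$, then integrates the jerk to both resting endpoints.
\item Positive arc-endpoint atoms come from \cref{prop:canonical-arc-signs}. When a free bridge follows the arc, the bridge-gap factorization \eqref{eq:catalogue-arc-bridge-gap} is played against $f_g=-2\Lambda\leq0$. When a rest tail meets the arc, the Bernstein form of the resting-tail gap is played against $\Lambda\geq0$.
\end{itemize}
Your energy idea does rule out $w=0$: by \eqref{eq:canonical-arc-density} the density there equals $-6E$, and $E=\tfrac32\int_0^1\norm{\ddot r}^2\dd t>0$. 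It cannot, however, fix the orientation.

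\textbf{Smaller gap: the upper property for symmetric data.} Your argument that an admissible symmetric construction is upper does not work as stated. Only the two outer cubics meet a resting endpoint. Even there, $z(t)=t^2(a+bt)$ with $z(t_1)>0$ is negative near $0$ when $a<0$, so positivity at the knots says nothing about the interior of a piece.

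Project the Green reconstruction \eqref{eq:reaction-green-reconstruction} on $e$ instead. Since $\bar z\equiv0$, you get $z(t)=\int_{0^+}^{1^-}G_1(t,s)z(s)\dd\mu(s)$. Strict positivity of $G_1$ on $(0,1)^2$, together with $z>0$ on $\operatorname{supp}(\mu)$ and $\mu\neq0$ for blocked data, gives $z>0$ on $(0,1)$; this is how \cref{app:canonical-recovery} argues.

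Equivalently, the proof of \cref{prop:side-selection} writes every global minimizer as $Xa+zb$ with $z>0$ on $(0,1)$. In the chosen plane $b=\pm e$, and $\inner{e,\cdot}>0$ on the support forces $b=e$.
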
  

These are the regions shown numerically in \cref{fig:contact-catalogue}(a--c).

\subsubsection{Numerical recovery}
\label{sec:numerical-recovery}

\Cref{alg:canonical-recovery} first checks the free cubic, then the
families P, PP, and PPP. If none applies, it searches A, PA, AP, and PAP
by narrowing intervals for their contact parameters.
The normalized endpoint data $\omega\in\mathcal D$
(\cref{rem:nondimensionalization}) are algebraic, and the requested
accuracy $\varepsilon>0$ is rational.
Write $J[q]=\frac12\norm{\ddot q}^2$ for the cost.
The output is a trial trajectory $\widetilde r$, a bound $\rho$ on its
position, velocity, and acceleration errors relative to the selected
minimizer at every time, and an
interval $I$ containing the optimal cost. Both $\rho$ and the width of
$I$ must be at most $\varepsilon$.

\paragraph{Recovering isolated contacts.}
For $m=1,2,3$, \cref{alg:atomic-recovery} solves
\eqref{eq:atomic-contact-system} for the contact times $t_i$ and jerk
jumps $\Delta_i$, retaining admissible solutions as defined in
\cref{sec:canonical-constructions}.
To check clearance, minimize the polynomial $\norm{r(t)}^2-1$ on each
cubic piece using endpoint values and derivative roots, and reject any
negative minimum. The Green representation~\eqref{eq:reaction-green-reconstruction}
reconstructs the trajectory, and \cref{thm:contact-certificate}
certifies its optimality.

\paragraph{Constructing trials with an arc.}
For each of A, PA, AP, and PAP, the parameter vector $p$ contains the
ordered contact times, including arc entry and exit, the initial acceleration
and jerk, and the nonnegative contact masses $m_i$.
The bounds in \cref{prop:arc-recovery-bounds} give finite ranges for each
parameter; $\mathcal P_{\mathrm{rec}}$ collects the choices for all four
families, including coincident times and zero masses.
Starting from the prescribed initial state, we construct a trial $q_p$
by following cubic pieces, adding $m_iq_p(t_i)$ to the jerk at each
contact time, and integrating
$q_p^{(4)}(t)=\Lambda_p(t)q_p(t)$ along the arc, where
$\Lambda_p(t)=-3\norm{\ddot q_p(t)}^2
-4\inner{\dot q_p(t),q_p^{(3)}(t)}$.
When evaluating $\Lambda_pq_p$ for a trial, we limit each component of
position, velocity, acceleration, and jerk to a fixed range containing
every valid arc; exceeding this range counts as a violation.

\paragraph{Searching parameter intervals.}
A \emph{parameter box} assigns one closed interval with rational
endpoints to each coordinate of $p$ and represents all allowed parameter
choices within those intervals. \Cref{alg:arc-bounds} covers the search
ranges by such boxes.
For each trial, $\mathcal R(p)$ adds the squared endpoint and contact
errors to the largest violation of each remaining requirement:
clearance, the near-side condition, staying on the circle along the arc,
nonnegative multiplier, and the integration bounds.
Each term is nonnegative, so $\mathcal R(p)=0$ exactly when all
requirements hold.
We bound $\mathcal R$ over each box, including numerical integration
errors, and discard a box only if its lower bound is positive.
Thus a discarded box contains no valid solution.

The parameter $\eta$ bounds box diameters and numerical evaluation
errors. Among the retained boxes, we choose one trial $\widetilde r$
and bound its difference from every retained trial in position,
velocity, and acceleration by $\rho$; $I$ encloses all their costs.
These bounds contain the selected minimizer and its cost, even though
$\widetilde r$ itself need not be feasible.
\Cref{alg:canonical-recovery} halves $\eta$ until both error bounds
meet the requested accuracy.

\par\addvspace{6pt}
\begingroup
\ifdefined\nolinenumbers\nolinenumbers\fi
\noindent
\begin{minipage}[t]{0.50\linewidth}
  \vspace{0pt}
  \setlength{\intextsep}{0pt}
\begin{algorithm}[H]
  \small
  \raggedright
  \caption{}
  \label{alg:canonical-recovery}
  \renewcommand{\algorithmicrequire}{\textbf{Input:}}
  \renewcommand{\algorithmicensure}{\textbf{Output:}}
  \begin{algorithmic}[1]
    \Require $\omega$, $\varepsilon$
    \Ensure $\widetilde r,\rho,I$ with $\rho,\operatorname{width}(I)\leq\varepsilon$
    \If{$\omega$ is unblocked}
      \State \Return $\bar r,0,[J[\bar r],J[\bar r]]$
    \EndIf
    \For{$m=1,2,3$}
      \State $r\leftarrow{}$\cref{alg:atomic-recovery}$(\omega,m)$
      \If{$r\neq\varnothing$}
        \State \Return $r,0,[J[r],J[r]]$
      \EndIf
    \EndFor
    \For{$n=1,2,\ldots$}
      \State $(\widetilde r,\rho,I)\leftarrow{}$\cref{alg:arc-bounds}$(\omega,2^{-n})$
      \If{$\max\{\rho,\operatorname{width}(I)\}\leq\varepsilon$}
        \State \Return $\widetilde r,\rho,I$
      \EndIf
    \EndFor
  \end{algorithmic}
\end{algorithm}
\end{minipage}\hfill
\begin{minipage}[t]{0.45\linewidth}
  \vspace{0pt}
  \setlength{\intextsep}{0pt}
\begin{algorithm}[H]
  \small
  \raggedright
  \caption{}
  \label{alg:atomic-recovery}
  \renewcommand{\algorithmicrequire}{\textbf{Input:}}
  \renewcommand{\algorithmicensure}{\textbf{Output:}}
  \begin{algorithmic}[1]
    \Require $\omega$, $m$
    \Ensure Minimizer $r$, or $\varnothing$
    \State Solve~\eqref{eq:atomic-contact-system} for admissible $c$
    \State \Return $\varnothing$ if none exists
    \State Reconstruct $r$ by~\eqref{eq:reaction-green-reconstruction}
    \State \Return $r$
  \end{algorithmic}
\end{algorithm}
\par\vspace{10pt}
\begin{algorithm}[H]
  \small
  \raggedright
  \caption{}
  \label{alg:arc-bounds}
  \renewcommand{\algorithmicrequire}{\textbf{Input:}}
  \renewcommand{\algorithmicensure}{\textbf{Output:}}
  \begin{algorithmic}[1]
    \Require $\omega$, rational $\eta>0$
    \Ensure $\widetilde r,\rho,I$
    \State Cover $\mathcal P_{\mathrm{rec}}$ by boxes of diameter $\leq\eta$
    \ForAll{parameter boxes $C$}
      \State Bound $\mathcal R$ on $C$ to error $\eta$
      \State Discard $C$ if its lower bound is positive
    \EndFor
    \State Choose a retained trial $\widetilde r$
    \State Bound distances to $\widetilde r$ by $\rho$
    \State Enclose retained costs in $I$
    \State \Return $\widetilde r,\rho,I$
  \end{algorithmic}
\end{algorithm}
\end{minipage}
\par\endgroup\addvspace{6pt}   
\section{Discussion}
\label{sec:outlook}

We derive a complete analytical characterization of minimum-effort
collision-free motion for two robots with double-integrator dynamics and
resting endpoints. The optimal relative trajectory combines cubic
pieces with isolated contacts or a boundary arc. We use planarity and the
contact laws to reduce the trajectory optimization problem to eight contact
families, whose admissible constructions partition the space of endpoint
configurations. We certify the global optimality of these constructions
through an exact comparison of trajectory costs.

Additional spatial dimensions cannot reduce the
minimum effort: every optimal relative trajectory lies in a plane.
For off-axis configurations, the geometry selects the near side of the
exclusion disk and gives a unique optimum. For symmetric blocked configurations,
rotations and reflections generate equally costly optimal motions. The
contact catalogue describes how the optimal motion changes between
isolated encounters and sustained contact as the endpoint positions vary.

These results provide structured candidate trajectories for use within
fleet-level planners. The companion work~\cite{kamm_contact_in_preparation}
incorporates acceleration constraints and demonstrates collision-free planning
for one million robots in minutes. Extending the two-robot contact classification and global optimality certificate
to pointwise acceleration bounds is a natural extension to our work. 
\appendix
\crefalias{section}{appendix}
\section{Proofs}
\label{sec:proofs}
\leavevmode\par\nobreak

\noindent\textbf{Proof roadmap.}
The characterization follows from four steps:
\textbf{(i) Exhaustion.}
After planarity and side selection, contact bounds allow at most three
isolated contacts, or one boundary arc with at most one isolated contact
on each side, yielding the eight families of \cref{thm:canonical-catalogue}.
\textbf{(ii) Identification.}
The construction rules capture every selected minimizer. Conversely,
admissibility and the global certificate (\cref{thm:contact-certificate})
identify every admitted construction with the selected minimizer.
\textbf{(iii) Partition.}
Existence and exhaustion ensure coverage of $\mathcal D$.
Uniqueness of the selected trajectory and its multiplier prevents different
family labels at the same endpoint datum, establishing
\cref{thm:endpoint-regime-map}.
\textbf{(iv) Recovery.}
Exact algebraic tests handle isolated contacts. For the remaining families,
compactness and uniqueness force the certified trajectory and cost
enclosures to shrink, guaranteeing termination of
\cref{alg:canonical-recovery} (\cref{app:canonical-recovery}).

\subsection{Additional notation}
\label{app:additional-preliminaries}

We write $C^{2,1}([0,T];\mathbb{R}^d)$ for twice continuously
differentiable functions with a Lipschitz second derivative, and
$\mathrm{BV}((0,T);\mathbb{R}^d)$ for functions of bounded variation.
We use distributional derivatives and bounded-variation calculus as in
\citep[Sections~3.5 and~9.1]{folland1999}.
A finite nonnegative measure on $[0,T]$ has the
decomposition \citep[Section~3.5]{folland1999}
$\mu=\Lambda(t)\dd t
+\sum_k m_k\delta_{t_k}
+\mu_{\mathrm{sc}}$,
where $\Lambda\geq0$ is the density, $m_k=\mu(\{t_k\})$ are the
atomic masses, and $\mu_{\mathrm{sc}}$ is the singular-continuous part. 
\par
\ifdim\dimexpr\pagegoal-\pagetotal\relax<4\baselineskip
  \newpage
\fi
\subsection{Proofs for \cref{sec:two-robots}}
\label{app:proofs-section-3}

\subsubsection{Proof of \cref{prop:relative-multipliers}}

 \begin{lemma}[Interior stationarity ]\label{lem:interior-stationarity}
Every local minimizer admits a locally finite nonnegative Radon measure
$\mu$ on $(0,T)$ satisfying~\eqref{eq:euler-lagrange}. \end{lemma}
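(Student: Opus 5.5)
The plan is to localize away from the endpoints, where the constraint can be handled by a standard Lagrange multiplier rule with a constraint qualification, and then glue the local multipliers together. Fix a local minimizer $r^*$ and a compact subinterval $K=[\alpha,\beta]\subset(0,T)$.

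\textbf{Step 1 (setting).} Consider variations $\eta\in H^2_0$ with $\operatorname{supp}\eta\subseteq K$. The map $\eta\mapsto g(r^*+\eta)|_K$, with $g=R^2-\norm{r}^2$, takes values in $C(K)$, since $H^2$ embeds continuously in $C^1$. This map is continuously Fr\'echet differentiable, with derivative $\eta\mapsto-2\inner{r^*,\eta}$. The constraint is $g\leq0$, that is, membership in the closed convex cone of nonpositive continuous functions, and this cone has nonempty interior in $C(K)$.

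\textbf{Step 2 (Robinson/Slater qualification).} We need some $\eta$ with $\inner{r^*(t),\eta(t)}>0$ for all $t\in K\cap\mathcal A$. Take $\eta=\phi\, r^*$, where $\phi\geq0$ is a smooth bump on $(0,T)$ with $\phi=1$ on $K$. Then $\inner{r^*,\eta}=\phi\norm{r^*}^2\geq R^2>0$ on $K\cap\mathcal A$. Moreover $\eta\in H^2$ with compact support in $(0,T)$, so the endpoint conditions hold.

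\textbf{Step 3 (multiplier rule).} Apply the Kuhn--Tucker theorem for cone constraints in Banach spaces (Zowe--Kurcyusz, or Luenberger's generalized KKT). This gives a nonnegative Radon measure $\mu_K$ on $K$ such that the first variation satisfies
\[
\int\inner{\ddot r^*,\ddot\eta}\dd t=\int_K\inner{r^*,\eta}\dd\mu_K
\]
for $\eta$ supported in the interior of $K$. It also gives complementary slackness $\int_K g\dd\mu_K=0$, hence $\operatorname{supp}\mu_K\subseteq\mathcal A$. The factor $2$ and the $\tfrac12$ in the cost are absorbed into the normalization of $\mu_K$.

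\textbf{Step 4 (gluing).} The main obstacle is consistency: local multipliers need not be unique a priori. I would address it by showing that $\mu_K$ is determined on the interior of $K$ by the identity. Testing with $\eta=\psi\,r^*/\norm{r^*}^2$, for $\psi\in C_c^\infty(\operatorname{int}K)$ scalar, recovers $\int\psi\dd\mu_K$ from the left-hand side. This test function is legitimate because $\norm{r^*}\geq R>0$ and $r^*\in H^2$, so $\eta\in H^2$. Since the left-hand side is independent of $K$, the measures for nested intervals agree on the smaller interior.

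Exhausting $(0,T)$ by $K_n=[1/n,T-1/n]$ then yields a single locally finite nonnegative Radon measure $\mu$ on $(0,T)$. This $\mu$ satisfies $\int\inner{\ddot r^*,\ddot\eta}=\int\inner{r^*,\eta}\dd\mu$ for all compactly supported $\eta\in H^2$, which is exactly $D_t^4r^*=r^*\mu$ in distributions. It also satisfies $\operatorname{supp}\mu\subseteq\mathcal A$, which is \eqref{eq:euler-lagrange}.

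Finiteness of $\mu$ near $0$ and $T$ is not claimed here; that is left to the remainder of the proof of \cref{prop:relative-multipliers}.
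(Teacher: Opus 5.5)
Your route (localize to compact $K$, apply an abstract cone-constrained KKT theorem, glue) has a genuine gap in Step 2: it does not give a constraint qualification for the problem you set up in Step 1. There the admissible directions are $\eta\in H^2$ with $\operatorname{supp}\eta\subseteq K$. Your witness $\phi r^*$ with $\phi=1$ on $K$ has support strictly larger than $K$, so it is not admissible. Every admissible direction vanishes at $\alpha$ and $\beta$. Hence the Slater--Robinson condition $\max_K\bigl(g(r^*)-2\inner{r^*,\eta}\bigr)<0$ fails whenever an endpoint of $K$ is a contact time. The abstract theorem then yields only a Fritz John pair, and that pair can be abnormal. For $\alpha\in\mathcal A$, the pair $(\lambda_0,\mu_K)=(0,\delta_\alpha)$ satisfies stationarity and complementarity, because $\eta(\alpha)=0$ and $g(r^*(\alpha))=0$. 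Allowing all compactly supported directions while constraining only on $K$ does not help either: that problem is a relaxation, and $r^*$ need not locally minimize it.

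Your argument can be repaired only by choosing $K_n$ with $\partial K_n\cap\mathcal A=\varnothing$. Then any $\phi\in C_c^\infty(\operatorname{int}K_n)$ with $\phi=1$ near $K_n\cap\mathcal A$ is a valid Slater direction, and your gluing step goes through. Such $K_n$ exist near $0$, and also near $T$ when $\norm{r_T}>R$, although in that case the unlocalized problem already satisfies Slater. When $\norm{r_T}=R$, you need non-contact times accumulating at $T$. That requires excluding a terminal boundary interval $[T-\varepsilon,T]\subseteq\mathcal A$ before any multiplier exists. Nothing in your argument does this, and the paper excludes it only later, in \cref{prop:canonical-reaction}, using this lemma's multiplier. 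So your route fails exactly in the terminal-contact case the lemma is needed for.

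The paper avoids any constraint qualification. For $\phi\geq0$ with compact support, $(1+\varepsilon\phi)r$ is feasible for all $\varepsilon\geq0$. So $\phi\mapsto R^{-2}\int_0^T\inner{\ddot r,\ddot{(\phi r)}}\dd t$ is a positive distribution, hence a locally finite measure by Riesz. For $\inner{\eta_\perp,r}=0$, one has $\norm{r+\varepsilon\eta_\perp}^2=\norm{r}^2+\varepsilon^2\norm{\eta_\perp}^2$, so tangential variations are feasible with both signs and contribute nothing. Splitting $\eta=\phi r+\eta_\perp$ then gives the identity; your Step 4 test function is exactly this radial part.
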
  

\begin{proof}[Proof of \cref{lem:interior-stationarity}]\sloppy
 Let $r$ be a local minimizer. For a nonnegative smooth test $\phi$
compactly supported in $(0,T)$, the radial variation
$(1+\varepsilon\phi)r$ is feasible for $\varepsilon\geq0$.
The distribution
$L[\phi]\coloneqq R^{-2}\int_0^T
\inner{\ddot r(t),\ddot{(\phi r)}(t)}\dd t$
is positive, hence is a locally finite nonnegative Radon measure
$\mu$. Indeed, a smooth cutoff $\chi\geq1$ on a fixed compact support
gives $\abs{L[\phi]}\leq L[\chi]\max\abs{\phi}$, so the Riesz
representation theorem applies locally. Both signs of the radial
variation are allowed away from contact, giving
$\operatorname{supp}(\mu)\subseteq\mathcal A\cap(0,T)$.
Every smooth compactly supported vector test $\eta$ decomposes as
$\eta=\phi r+\eta_\perp$, with
$\phi(t)=\inner{\eta(t),r(t)}/\norm{r(t)}^2$ and
$\inner{\eta_\perp(t),r(t)}=0$.
These components belong to $H^2((0,T);\mathbb R)$ and
$H^2((0,T);\mathbb R^d)$ and have compact support, since
$r\in H^2((0,T);\mathbb R^d)$ and $\norm{r(t)}\geq R>0$.
Since
$\norm{r(t)+\varepsilon\eta_\perp(t)}^2
=\norm{r(t)}^2+\varepsilon^2\norm{\eta_\perp(t)}^2$,
both signs are feasible, and local minimality gives
$\int_0^T\inner{\ddot r(t),\ddot\eta_\perp(t)}\dd t=0$.
Smooth approximation on a fixed compact subinterval extends the radial
identity to $\phi$: multiplication is continuous in
$H^2((0,T);\mathbb R)$, and convergence is uniform for integration
against the measure. Since $\norm{r(t)}=R$ on its support,
$\int_0^T\inner{\ddot r(t),\ddot\eta(t)}\dd t
=R^2\int_{0^+}^{T^-}\phi(t)\dd\mu(t)
=\int_{0^+}^{T^-}\inner{r(t),\eta(t)}\dd\mu(t)$. 
\end{proof}

\begin{proof}[Proof of \cref{prop:relative-multipliers}]\sloppy
 Strict endpoint separation makes $\mathcal A$ a compact subset of
$(0,T)$, so the measure of \cref{lem:interior-stationarity} is finite.
Extend it by zero to $[0,T]$. Density of smooth compactly supported
tests in $\mathcal H_T$ gives the stationarity identity, and
the support condition gives complementarity. 
\end{proof}

\subsubsection{Proof of \cref{prop:contact-structure}}
\label{app:proof-contact-structure}
\label{app:contact-identities}
\label{app:proof-contact-regularity}

We first establish the regularity and clearance identity needed to
exclude accumulating contacts, using the stationary pair $(r,\mu)$ of
\cref{sec:necessary}.

 \begin{proposition}[Trajectory regularity]\label{prop:contact-regularity}
Let $r$ be a feasible trajectory and $\mu$ a finite nonnegative Radon
measure satisfying~\eqref{eq:euler-lagrange}. The distributional
derivative $D_t^3r$ is represented by a right-continuous function
$r^{(3)}\in\mathrm{BV}((0,T);\mathbb{R}^d)$, which we call the jerk.
The weak acceleration $D_t^2r$ has a Lipschitz continuous
representative $\ddot{r}$ on $[0,T]$, with
$\norm{\ddot{r}(t)-\ddot{r}(s)}
\leq \norm{r^{(3)}}_{L^\infty((0,T);\mathbb{R}^d)}\,\abs{t-s}$
for $s,t\in[0,T]$.
Consequently, $r\in C^{2,1}([0,T];\mathbb{R}^d)$ and
$g\in C^{2,1}([0,T];\mathbb{R})$. \end{proposition}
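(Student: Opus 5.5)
The plan is to integrate the distributional equation~\eqref{eq:euler-lagrange} once and identify the result as a function of bounded variation. Since $r\in H^2((0,T);\mathbb R^d)$, Sobolev embedding makes $r$ continuous on $[0,T]$, hence bounded. The vector measure $\nu\coloneqq r\,\mu$ is therefore a finite $\mathbb R^d$-valued Radon measure on $(0,T)$, with total variation at most $\max_{[0,T]}\norm{r}\,\mu([0,T])$. Define the right-continuous function $j(t)\coloneqq\nu((0,t])$ for $t\in(0,T)$. By the standard correspondence between finite measures and normalized BV functions \citep[Section~3.5]{folland1999}, $j\in\mathrm{BV}((0,T);\mathbb R^d)$ is bounded and satisfies $D_tj=\nu$ in distributions.

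Next, $D_t(D_t^3r-j)=0$ in distributions on the interval $(0,T)$, so $D_t^3r-j$ is a constant vector $c$. Setting $r^{(3)}\coloneqq j+c$ gives a right-continuous BV representative of $D_t^3r$. It is bounded, with $\norm{r^{(3)}}_{L^\infty}\leq\norm{c}+\abs{\nu}((0,T))<\infty$.

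Now $D_t(D_t^2r)=r^{(3)}\in L^\infty\subset L^1$. Let $a(t)\coloneqq\int_{0}^{t}r^{(3)}(s)\dd s$; this is Lipschitz with constant $\norm{r^{(3)}}_{L^\infty}$ and satisfies $D_ta=r^{(3)}$. Then $D_t^2r-a$ is again a constant, so $\ddot r\coloneqq a+c'$ is a Lipschitz representative on $[0,T]$ with the stated estimate. It extends continuously to the closed interval because it is uniformly Lipschitz.

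Finally, $r$ and $\dot r$ are absolutely continuous, since $r\in H^2$. Their derivatives $\dot r$ and $\ddot r$ have continuous representatives, so the fundamental theorem of calculus upgrades $r$ to $C^2([0,T];\mathbb R^d)$ with a Lipschitz second derivative, i.e.\ $r\in C^{2,1}$. For $g=R^2-\norm r^2$ we have $\ddot g=-2\norm{\dot r}^2-2\inner{r,\ddot r}$. This is a sum of products of Lipschitz functions on a compact interval, hence Lipschitz, so $g\in C^{2,1}$.

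The only delicate step is the first: making the measure $r\mu$ and its primitive rigorous. One must check that $\mu$ is finite, which is hypothesized, and that the distribution $r\mu$ acts on test functions as $\int\inner{r,\varphi}\dd\mu$. One must also verify that the constant-of-integration argument (a distribution with zero derivative on an interval is constant) applies componentwise. Everything after that is routine one-dimensional Sobolev calculus.
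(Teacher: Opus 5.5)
Your proposal is correct and follows the paper's proof essentially step for step. Like the paper, you take the right-continuous primitive $t\mapsto\int_{0^+}^{t}r(s)\dd\mu(s)$ plus a constant as the bounded BV jerk, integrate once more to get the Lipschitz acceleration with the stated bound, and use $\ddot g=-2\norm{\dot r}^2-2\inner{r,\ddot r}$ to conclude $g\in C^{2,1}([0,T];\mathbb R)$.
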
  

\begin{proof}[Proof of \cref{prop:contact-regularity}]\sloppy
 By the Sobolev embedding
$H^2((0,T);\mathbb{R}^d)\hookrightarrow C^1([0,T];\mathbb{R}^d)$
\citep[Theorem~4.12, Part~II]{adams2003}, $r$ is bounded and continuous.
The primitive
$F_\mu(t)\coloneqq\int_{0^+}^{t}r(s)\dd\mu(s)$
is right-continuous with bounded variation and satisfies
$D_t^1F_\mu=r\,\mu$ \citep[Sections~3.5 and~9.1]{folland1999}.
Thus~\eqref{eq:euler-lagrange} gives
$r^{(3)}(t)=r^{(3)}_0+F_\mu(t)$ for a constant vector
$r^{(3)}_0\in\mathbb R^d$.
In particular,
$r^{(3)}\in\mathrm{BV}((0,T);\mathbb{R}^d)\cap L^\infty((0,T);\mathbb{R}^d)$.
Integrating once more gives
$\ddot r(t)=\ddot r_0+\int_0^t r^{(3)}(s)\dd s$,
which proves the stated Lipschitz bound and
$r\in C^{2,1}([0,T];\mathbb{R}^d)$.
Finally,
$\ddot g(t)=-2\norm{\dot r(t)}^2-2\inner{r(t),\ddot r(t)}$
is Lipschitz, so $g\in C^{2,1}([0,T];\mathbb{R})$. 
\end{proof}

At every interior contact, $g$ attains a local maximum of zero, which gives
tangency: ${\dot{g}(t)=-2\inner{ r(t),\dot{r}(t)}=0}$ for
$t\in\mathcal{A}\cap(0,T)$.

Let $t^*\in(0,T)$ be an isolated contact or an endpoint of a boundary
arc. The acceleration is continuous by \cref{prop:contact-regularity},
so $\jmp{\ddot r}_{t^*}=0$. The atom of
$D_t^1 r^{(3)}=r\,\mu$ gives the jerk jump
$\jmp{r^{(3)}}_{t^*}=\mu(\{t^*\})r(t^*)$
\citep[Section~3.5]{folland1999}.

On every open interval disjoint from $\operatorname{supp}(\mu)$,
$D_t^4r=0$, so the trajectory is cubic.
A free segment between two interior contacts is a \emph{free bridge}.
For the clearance $g$ of \cref{sec:necessary}, define the
\emph{clearance forcing}
$f_g(t)\coloneqq6\norm{\ddot r(t)}^2+8\inner{\dot r(t),r^{(3)}(t)}$.

 \begin{lemma}[Clearance identity]\label{lem:clearance-identity}
The forcing $f_g$ has a Lipschitz continuous representative on $[0,T]$, and
\refstepcounter{equation}\label{eq:clearance-identity}%
$D_t^4g=-f_g(t)\dd t-2R^2\dd\mu$~\textup{(\theequation)}
in distributions on $(0,T)$.
On every boundary arc $[a,b]$,
$\mu|_{(a,b)}=-f_g(t)/(2R^2)\dd t$, with $f_g\leq0$ on $[a,b]$. \end{lemma}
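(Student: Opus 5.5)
The plan is to compute $D_t^4 g$ directly by distributional product rules, using the regularity of \cref{prop:contact-regularity}. Since $g=R^2-\norm{r}^2$, the formula from that proof reads $\ddot g=-2\norm{\dot r}^2-2\inner{r,\ddot r}$. Here $r\in C^{2,1}$ and $\ddot r$ is Lipschitz with weak derivative $r^{(3)}\in\mathrm{BV}\cap L^\infty$. Hence $\ddot g$ is Lipschitz with a.e. derivative
\[
g^{(3)}=-4\inner{\dot r,\ddot r}-2\inner{\dot r,\ddot r}-2\inner{r,r^{(3)}}=-6\inner{\dot r,\ddot r}-2\inner{r,r^{(3)}},
\]
which is a bounded BV function. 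Regularity of $f_g$ is then immediate. The term $\norm{\ddot r}^2$ is Lipschitz because $\ddot r$ is. The term $\inner{\dot r,r^{(3)}}$ is a priori only BV, since $r^{(3)}$ jumps by $\mu(\{t\})r(t)$ at atoms. The jump of $\inner{\dot r,r^{(3)}}$ at an atom $t$ is therefore $\mu(\{t\})\inner{\dot r(t),r(t)}$, and this vanishes by the tangency relation $\inner{r,\dot r}=0$ on $\mathcal A\cap(0,T)$.

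The substantive step is to show that $\inner{\dot r,r^{(3)}}$ is Lipschitz, not merely continuous. Write its distributional derivative by the BV product rule (with $\dot r$ being $C^1$):
\[
D_t\inner{\dot r,r^{(3)}}=\inner{\ddot r,r^{(3)}}\dd t+\inner{\dot r,r}\dd\mu .
\]
The measure term vanishes because $\operatorname{supp}\mu\subseteq\mathcal A$ and $\inner{\dot r,r}=0$ there. The derivative is therefore the bounded function $\inner{\ddot r,r^{(3)}}$, which yields a Lipschitz representative. This handles atoms, any singular-continuous part, and the absolutely continuous part uniformly. It is the point I expect to need the most care: one must justify the product rule of a $C^1$ function with a BV function against a measure, and note that tangency holds at every point of the support, since the support lies in $\mathcal A$, which is closed.

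Next, differentiate $g^{(3)}$ once more, again by the BV product rule:
\[
D_t g^{(3)}=-6\norm{\ddot r}^2\dd t-6\inner{\dot r,r^{(3)}}\dd t-2\inner{\dot r,r^{(3)}}\dd t-2\inner{r,r}\dd\mu .
\]
On the support of $\mu$ we have $\inner{r,r}=R^2$, so the last term is $-2R^2\dd\mu$. Collecting terms gives exactly $-f_g\dd t-2R^2\dd\mu$, which proves \eqref{eq:clearance-identity}.

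For a boundary arc $[a,b]$, $g\equiv0$ on $(a,b)$, so $D_t^4g=0$ there. Restricting the identity to $(a,b)$ gives $\mu|_{(a,b)}=-f_g/(2R^2)\dd t$, and $\mu\ge0$ forces $f_g\le0$ almost everywhere on $(a,b)$. Since $f_g$ is continuous, the inequality holds on all of $(a,b)$ and, by taking limits, on the closed arc $[a,b]$.
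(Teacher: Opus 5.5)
Your proposal is correct and follows the paper's proof essentially step by step. You use the same Lebesgue--Stieltjes product-rule computation of $g^{(3)}=-6\inner{\dot r,\ddot r}-2\inner{r,r^{(3)}}$ and of $D_t^1g^{(3)}$, the same use of tangency on $\operatorname{supp}(\mu)$ to kill the measure term $\inner{\dot r,r}\dd\mu$ so that $f_g$ has the bounded derivative $20\inner{\ddot r,r^{(3)}}$ (you only treat the two summands of $f_g$ separately), and the same restriction to the arc interior followed by continuity to get $f_g\leq0$ on $[a,b]$.
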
  

\begin{proof}[Proof of \cref{lem:clearance-identity}]\sloppy
 By \cref{prop:contact-regularity} and the Lebesgue--Stieltjes product
rule \citep[Theorem~3.36]{folland1999}, $D_t^3g$ has the
right-continuous representative
$g^{(3)}(t)=-6\inner{\dot r(t),\ddot r(t)}
-2\inner{r(t),r^{(3)}(t)}$
in $\mathrm{BV}((0,T);\mathbb{R})$.
Differentiating once more and using $D_t^1r^{(3)}=r\,\mu$ gives
$-D_t^1g^{(3)}
=(6\norm{\ddot r(t)}^2+8\inner{\dot r(t),r^{(3)}(t)})\dd t
+2\norm{r(t)}^2\dd\mu(t)$.
Since $\norm{r(t)}=R$ on $\operatorname{supp}(\mu)$, this is
\eqref{eq:clearance-identity}.
The same product rule gives
$D_t^1f_g=20\inner{\ddot r(t),r^{(3)}(t)}\dd t
+8\inner{\dot r(t),r(t)}\dd\mu(t)$.
Tangency on $\operatorname{supp}(\mu)$ removes the measure term.
The remaining density is bounded, so $f_g$ has a Lipschitz continuous
representative.
On a boundary arc $(a,b)$, $g=0$, so
\eqref{eq:clearance-identity} gives
$\mu=-f_g(t)/(2R^2)\dd t$ there.
Nonnegativity of $\mu$ and continuity of $f_g$ yield $f_g\leq0$ on $[a,b]$. 
\end{proof}

On a boundary arc $[a,b]$, the continuous density gives
$r^{(4)}(t)=\Lambda(t)r(t)$ classically for $t\in(a,b)$.
Its arc-side endpoint traces are $\Lambda(a^+)$ and $\Lambda(b^-)$.

\begin{proof}[Proof of \cref{prop:contact-structure}]\sloppy
 Use \cref{prop:contact-regularity,lem:clearance-identity}.

\emph{Exclusion of contact accumulation.}
Suppose a sequence of contacts $(t_n)_{n\in\mathbb{N}}$, with
$t_n\uparrow t_{\mathrm{acc}}$, accumulates without
forming a boundary interval to the left of $t_{\mathrm{acc}}$.
Tangency gives $g(t_n)=\dot g(t_n)=0$.
Rolle's theorem \citep[Theorem~2.3.8]{trench_real_analysis_2003}
gives a sequence $(s_n)_{n\in\mathbb{N}}$ with
$s_n\in(t_n,t_{n+1})$ and
$\ddot{g}(s_n)=0$. As $s_n\uparrow t_{\mathrm{acc}}$, continuity gives
$g(t_{\mathrm{acc}})=\dot{g}(t_{\mathrm{acc}})=\ddot{g}(t_{\mathrm{acc}})=0$.
Finally, $g^{(3)}(t_{\mathrm{acc}}^-)=0$; otherwise $g^{(3)}$ would have a fixed
sign near $t_{\mathrm{acc}}$, making $\ddot{g}$ strictly monotone there and
precluding the zeros $s_n\to t_{\mathrm{acc}}$.

Let $r^{(3)}_*\coloneqq r^{(3)}(t_{\mathrm{acc}}^-)$ and let $r_{\mathrm{cub}}$ be the free
cubic with position $r(t_{\mathrm{acc}})$, velocity $\dot{r}(t_{\mathrm{acc}})$,
acceleration $\ddot{r}(t_{\mathrm{acc}})$, and jerk $r^{(3)}_*$ at $t_{\mathrm{acc}}$.
Write $\dot f_*\coloneqq20\inner{\ddot{r}(t_{\mathrm{acc}}),r^{(3)}_*}$ for the left
limit of the density of $D_t^1 f_g$.
The clearance
$g_{\mathrm{cub}}(s)\coloneqq R^2-\norm{r_{\mathrm{cub}}(t_{\mathrm{acc}}+s)}^2$
has vanishing derivatives through order three at zero, so
\eqref{eq:clearance-identity} gives
$g_{\mathrm{cub}}(s)
=-\frac{f_g(t_{\mathrm{acc}})}{24}s^4-\frac{\dot f_*}{120}s^5
-\frac{\norm{r^{(3)}_*}^2}{36}s^6$.
Thus $g_{\mathrm{cub}}$ is positive or nonpositive throughout some
interval $(-\varepsilon,0)$.

Suppose first that $g_{\mathrm{cub}}>0$ on $(-\varepsilon,0)$. The expansion
then gives $f_g(t_{\mathrm{acc}})<0$, or $f_g(t_{\mathrm{acc}})=0$ and
$\dot f_*>0$. In the second case the identity for $D_t^1 f_g$ gives
$f_g(t)=-\int_t^{t_{\mathrm{acc}}}20\inner{\ddot{r}(u),r^{(3)}(u)}\dd u<0$
for $t<t_{\mathrm{acc}}$ close to $t_{\mathrm{acc}}$. Either way, $f_g<0$ on a
left neighborhood of $t_{\mathrm{acc}}$. Fix a contact $t_{n_0}$ there.
Since $g=0$ on $\operatorname{supp}(\mu)$ and $g=\dot g=0$ at
the contact endpoints, two integrations by parts in
\eqref{eq:clearance-identity} give, for every later contact $t_n$,
$\int_{t_{n_0}}^{t_n}(-g(t))f_g(t)\dd t
=\int_{t_{n_0}}^{t_n}(\ddot g(t))^2\dd t$.
The left side is nonpositive and the right side nonnegative, so both
vanish. Hence $g=0$ on $[t_{n_0},t_n]$; letting
$t_n\uparrow t_{\mathrm{acc}}$ gives the excluded boundary interval.

Otherwise $g_{\mathrm{cub}}\leq0$ on $(-\varepsilon,0)$: the limiting free
cubic is feasible near $t_{\mathrm{acc}}$.
Integrating~\eqref{eq:euler-lagrange} backwards from $t_{\mathrm{acc}}$ gives
$r(t)
=
r_{\mathrm{cub}}(t)
+
\int_{t^+}^{t_{\mathrm{acc}}^-}
\frac{(s-t)^3}{6}\,r(s)\dd\mu(s)$,
$t<t_{\mathrm{acc}}$.
For $t$ sufficiently close to $t_{\mathrm{acc}}$, the kernel is positive and
$\inner{ r_{\mathrm{cub}}(t),r(s)}>0$. Hence any mass of $\mu$ in
$(t,t_{\mathrm{acc}})$ pushes $r(t)$ strictly farther from the origin than
$r_{\mathrm{cub}}(t)$: if $\mu((t,t_{\mathrm{acc}}))>0$, then
$\norm{r(t)}>\norm{r_{\mathrm{cub}}(t)}\geq R$.
A contact sufficiently close to $t_{\mathrm{acc}}$ therefore cannot have
mass of $\mu$ between itself and $t_{\mathrm{acc}}$. Along the accumulating
contacts this yields $t_0<t_{\mathrm{acc}}$ such that
$\mu((t_0,t_{\mathrm{acc}}))=0$. Hence $r=r_{\mathrm{cub}}$ and
$g(t)=g_{\mathrm{cub}}(t-t_{\mathrm{acc}})$ on $(t_0,t_{\mathrm{acc}})$.
The accumulating contacts are zeros of this polynomial, so $g=0$ on
$(t_0,t_{\mathrm{acc}})$, again a boundary interval contrary to the assumption.
The same argument applies from the right and at the endpoints of the time horizon.
Thus contacts can accumulate only into boundary arcs.

\emph{Finitely many contact components.}
If $\mathcal A$ had infinitely many components, compactness would give
a monotone sequence of points from distinct components converging to
$t_{\mathrm{acc}}$. The preceding argument places its tail in one boundary
interval, a contradiction. Thus $\mathcal A$ has finitely many components,
each an isolated point or a closed interval by closedness.

\emph{Multiplier decomposition.}
By \cref{lem:clearance-identity}, $\mu$ has nonnegative Lipschitz
density $-f_g/(2R^2)$ on each arc interior. Complementarity confines
the remaining support to finitely many isolated contacts and arc
endpoints, so the remaining part is atomic and $\mu_{\mathrm{sc}}=0$. 
\end{proof}

\emph{Remark on related arguments.}
For a scalar minimum-acceleration problem, Dontchev and Kolmanovsky
show that two contacts with the same linear bound force contact
throughout the intervening interval
\citep[proof of Theorem~2.1]{dontchev_kolmanovsky_1995}.
A linear bound has $f_g=0$, whereas disk clearance allows either sign;
their comparison therefore leaves contact accumulation open here.
Bonnans and Hermant also decompose multipliers into arc densities
and atoms at contact times
\citep[Proposition~3.6 and Equation~(7.17)]{bonnans_hermant_2009}.
They assume finitely many entries, exits, and isolated contacts
in~(A4)(i); here we prove finiteness.  
\subsubsection{Proof of \cref{thm:contact-certificate}}
\label{app:proof-contact-certificate}
\label{app:spline-identities}
\label{app:proof-acceleration-orthogonality}

Recall the free cubic $\bar r$ and correction space $\mathcal H_T$
of \cref{sec:certificate}.

 \begin{lemma}[Green kernel identities]\label{lem:green-kernel}
The Green kernel~\eqref{eq:relaxation-kernel} is strictly positive on
$(0,T)^2$. For $s\in(0,T)$, it satisfies
$D_t^4G_T(\cdot,s)=\delta_s$ and has zero value and first derivative at
both endpoints. Its evaluation identity, acting componentwise, is
$h(s)=\int_0^T\frac{\partial^2G_T}{\partial t^2}(t,s)\,\ddot{h}(t)\dd t$
for $h\in\mathcal H_T$. \end{lemma}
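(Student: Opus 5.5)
The plan is to reduce everything to the unit interval via the scaling $G_T(t,s)=T^3G_1(t/T,s/T)$ and then work with the explicit polynomial. For positivity, take $0<t\leq s<1$ and factor $G_1(t,s)=\tfrac16t^2(1-s)^2(3s-t-2ts)$. The prefactor is strictly positive, so it suffices to show $3s-t-2ts>0$. Since $t\leq s$, we have $3s-t-2ts\geq 3s-s-2s^2=2s(1-s)>0$. Symmetry handles $t>s$, and the scaling transfers positivity to $(0,T)^2$.

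Next I verify the Green properties. On $[0,s)$, $G_T(\cdot,s)$ is a cubic in $t$ with a factor $t^2$, so $G_T(0,s)=\partial_tG_T(0,s)=0$. On $(s,T]$, symmetry gives the expression $\tfrac{s^2(T-t)^2}{6T^3}(3Tt-Ts-2ts)$, which has a factor $(T-t)^2$, so the value and first derivative vanish at $T$. It remains to check the interface at $t=s$. Direct differentiation of the two branches shows that value, first derivative and second derivative agree there, while the third derivative jumps by exactly $+1$. I expect this computation to be the main (purely algebraic) obstacle; I will organize it by writing both branches as polynomials in $u=t-s$ and comparing coefficients. Given these matching conditions, a standard distributional computation shows that the fourth derivative of a piecewise cubic is the sum of its third-derivative jumps times Dirac masses. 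Hence $D_t^4G_T(\cdot,s)=\delta_s$ on $(0,T)$.

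For the evaluation identity, fix $s$, let $h\in\mathcal H_T$, and first take $h$ smooth. Integrate by parts twice on each of $(0,s)$ and $(s,T)$:
\[
\int_0^T\partial_t^2G_T\,\ddot h\,\dd t=\Bigl[\partial_t^2G_T\,\dot h-\partial_t^3G_T\,h\Bigr]+\int\partial_t^4G_T\,h\,\dd t .
\]
The interior integrals vanish because each branch is cubic. The boundary terms at $0$ and $T$ vanish because $h$ and $\dot h$ do. At $t=s$, the $\partial_t^2G_T\,\dot h$ contributions cancel by continuity of $\partial_t^2G_T$. The jump of $\partial_t^3G_T$ then leaves $\jmp{\partial_t^3G_T}_s\,h(s)=h(s)$; I will fix the signs once in the write-up. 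To pass to general $h$, approximate by smooth $h$ in $H^2$, which is dense in $\mathcal H_T$ under its norm. Both sides are continuous in that norm: the left side because $\partial_t^2G_T(\cdot,s)\in L^2$, and the right side by the Sobolev embedding into $C^1$ used in \cref{prop:contact-regularity}. Applying the scalar identity componentwise gives the vector statement.
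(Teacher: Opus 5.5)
Your proposal is correct and follows the paper's proof essentially step for step. The steps are the bound $3Ts-Tt-2ts\geq 2s(T-s)$ (which you derive on the unit interval and transfer by scaling), direct differentiation giving $C^2$ matching at $t=s$ with a unit jump in the third derivative, and two integrations by parts on $(0,s)$ and $(s,T)$ whose only surviving term is $\jmp{\partial_t^3G_T}_s\,h(s)=h(s)$. Your density step is harmless but unnecessary, since $\dot h$ is absolutely continuous for $h\in H^2$ and the integrations by parts hold directly.
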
  

\begin{proof}[Proof of \cref{lem:green-kernel}]\sloppy
 Positivity follows from $3Ts-Tt-2ts\geq2s(T-s)>0$ for $0<t\leq s<T$.
Direct differentiation of~\eqref{eq:relaxation-kernel} on either side
of $t=s$ gives cubic pieces, matching values through the second
derivative, and a unit jump in the third derivative. Integrating by
parts separately on $(0,s)$ and $(s,T)$ eliminates the clamped endpoint
terms, and the unit jump gives the evaluation identity. 
\end{proof}

\begin{proof}[Proof of \cref{thm:contact-certificate}]\sloppy
 Twice integrating by parts, using the clamped endpoint conditions and
$\bar r^{(4)}=0$, gives
$\int_0^T\inner{\ddot{\bar r}(t),\ddot h(t)}\dd t=0$ for
$h\in\mathcal H_T$, hence the cost decomposition in \cref{sec:certificate}.
The evaluation identity in \cref{lem:green-kernel} and
finiteness of $\mu$ give the Green
reconstruction~\eqref{eq:reaction-green-reconstruction}.
The integral belongs to $\mathcal H_T$. Indeed, each scalar kernel
section has norm $\sqrt{G_T(s,s)}$, uniformly bounded for
$s\in[0,T]$, and $r$ is bounded. Four distributional derivatives give
$r\,\mu$; subtracting it from $r-\bar r$ leaves a clamped cubic,
which is zero. Equivalently, by the reproducing identity and
orthogonality of the free cubic,
$\int_0^T\inner{\ddot{r}(t),\ddot{h}(t)}\dd t
=\int_{0^+}^{T^-}\inner{ r(t),h(t)}\dd\mu(t)$,
$h\in\mathcal H_T$.
Expanding $\frac12\norm{\ddot r+\ddot h}^2$, using
$2\inner{ r(t),h(t)}=\norm{r(t)+h(t)}^2-\norm{r(t)}^2-\norm{h(t)}^2$,
and using $\norm{r(t)}=R$ on the support proves the cost identity~\eqref{eq:cost-identity}. Its last integral is nonnegative
for a feasible $q$.

For the projection test, the bounded trace map
$B:\mathcal H_T\to L^2((0,T),\mu;\mathbb R^d)$,
$Bh=h\big|_{\operatorname{supp}(\mu)}$,
has adjoint
$B^*f=\int_{0^+}^{T^-}G_T(\cdot,s)f(s)\dd\mu(s)$.
Thus $S=BB^*$ acts componentwise by the scalar kernel $G_T$.
On $\operatorname{supp}(\mu)$, projection of
\eqref{eq:reaction-green-reconstruction} gives
$Sz=z-\bar z$ and $0<z\leq R$.
If $\mu\neq0$, strict positivity of $G_T$ gives $Sz>0$ there,
so $\delta<R$ and $Sz\leq(1-\delta/R)z$.
For a scalar $f\in L^2((0,T),\mu;\mathbb R)$, the inequality
$2\abs{f(t)f(s)}\leq
\abs{f(t)}^2\frac{z(s)}{z(t)}
+\abs{f(s)}^2\frac{z(t)}{z(s)}$
and symmetry of the positive kernel imply
$\abs{\inner{ f,Sf}_{L^2((0,T),\mu;\mathbb R)}}
\leq\int_{0^+}^{T^-}\abs{f(t)}^2\frac{(Sz)(t)}{z(t)}\dd\mu(t)
\leq(1-\kappa_{\mathrm{cert}})\norm f_{L^2((0,T),\mu;\mathbb R)}^2$.
Tonelli's theorem justifies this weighted estimate even if $z$
has no positive uniform lower bound on the support. Applying it
componentwise gives $\norm S\leq1-\kappa_{\mathrm{cert}}$; the Hilbert-space
identity $\norm B^2=\norm{BB^*}$ therefore yields
$\int_{0^+}^{T^-}\norm{h(t)}^2\dd\mu(t)
\leq(1-\kappa_{\mathrm{cert}})\norm h_{\mathcal H_T}^2$.
This proves the bounds in \cref{thm:contact-certificate}. When $\mu=0$,
\eqref{eq:reaction-green-reconstruction} gives $r=\bar r$ and
$Q_\mu[h]=\norm h_{\mathcal H_T}^2$ directly. Finally, zero
acceleration norm for a clamped difference forces that difference
to vanish, proving uniqueness whenever $\kappa_{\mathrm{cert}}>0$ and also in
the free case. 
\end{proof} 

\subsubsection{Interior multiplier and terminal contact}
\label{app:proof-canonical-reaction}
\label{app:contact-invariants}
\label{app:proof-contact-invariants}

From here on, use the normalized endpoint data of
\cref{sec:zero-endpoint-velocities}.
Before proving planarity, we show that the interior multiplier remains
finite at a resting terminal contact. The argument uses the following
conserved quantities.

 \begin{proposition}[Noether invariants]\label{prop:contact-invariants}
Let $r$ be feasible and let $\mu$ be a finite nonnegative Radon
measure satisfying~\eqref{eq:euler-lagrange}. With $r^{(3)}$ denoting the
right-continuous jerk, the quantities
$\boldsymbol\Gamma\coloneqq \dot{r}(t)\ddot{r}(t)^{\mathsf T}
    -\ddot{r}(t)\dot{r}(t)^{\mathsf T}
    -r(t)r^{(3)}(t)^{\mathsf T}+r^{(3)}(t)r(t)^{\mathsf T}$ and
$E\coloneqq\tfrac12\norm{\ddot{r}(t)}^2
    -\inner{\dot{r}(t),r^{(3)}(t)}$
are constant on $(0,T)$. In particular, their values agree across
isolated contacts and boundary-arc endpoints. \end{proposition}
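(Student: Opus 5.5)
The plan is to show that $\boldsymbol\Gamma$ and $E$ have vanishing distributional derivative on $(0,T)$. By \cref{prop:contact-regularity}, $r,\dot r,\ddot r$ are continuous, in fact $r\in C^{2,1}$, and the jerk $r^{(3)}$ is a right-continuous BV function with $D_t^1r^{(3)}=r\,\mu$. Consequently each quantity is a finite sum of products of one continuous (absolutely continuous) factor with at most one BV factor $r^{(3)}$. The Lebesgue--Stieltjes product rule \citep[Theorem~3.36]{folland1999} then applies without any ambiguity at atoms: when one factor is continuous, no extra jump terms arise.

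\textbf{Step 1 (energy).} Differentiate $E$ term by term:
\begin{itemize}
\item $D_t^1\tfrac12\norm{\ddot r}^2=\inner{\ddot r,r^{(3)}}\dd t$;
\item $D_t^1\inner{\dot r,r^{(3)}}=\inner{\ddot r,r^{(3)}}\dd t+\inner{\dot r,r}\dd\mu$, since $\dot r$ is continuous.
\end{itemize}
Hence $D_t^1E=-\inner{\dot r(t),r(t)}\dd\mu(t)$. Tangency gives $\inner{r,\dot r}=0$ on $\mathcal A\cap(0,T)\supseteq\operatorname{supp}(\mu)\cap(0,T)$; this is the identity $\dot g=0$ at interior contacts noted after \cref{prop:contact-regularity}. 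So the measure term vanishes and $D_t^1E=0$.

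\textbf{Step 2 (angular invariant).} Treat the matrix $\boldsymbol\Gamma$ the same way.
\begin{itemize}
\item $D_t^1(\dot r\ddot r^{\mathsf T}-\ddot r\dot r^{\mathsf T})=(\dot r r^{(3)\mathsf T}-r^{(3)}\dot r^{\mathsf T})\dd t$, because the $\ddot r\ddot r^{\mathsf T}$ terms cancel.
\item $D_t^1(-rr^{(3)\mathsf T}+r^{(3)}r^{\mathsf T})=(-\dot r r^{(3)\mathsf T}+r^{(3)}\dot r^{\mathsf T})\dd t+(-rr^{\mathsf T}+rr^{\mathsf T})\dd\mu$.
\end{itemize}
The absolutely continuous parts cancel, and the measure part vanishes identically because $rr^{\mathsf T}$ is symmetric. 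This holds even without using the support condition.

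\textbf{Step 3 (constancy).} A distribution on an interval with zero derivative is constant, so the BV representatives of $E$ and $\boldsymbol\Gamma$ equal constants almost everywhere. Using the right-continuous jerk, both are right-continuous functions, and hence constant everywhere on $(0,T)$. In particular they equal their left limits, which gives the agreement across isolated contacts and arc endpoints. This can also be checked directly at an atom $t^*$ with jump $\jmp{r^{(3)}}_{t^*}=\mu(\{t^*\})r(t^*)$:
\begin{itemize}
\item $\jmp{E}=-\mu(\{t^*\})\inner{\dot r(t^*),r(t^*)}=0$ by tangency;
\item $\jmp{\boldsymbol\Gamma}=\mu(\{t^*\})(-rr^{\mathsf T}+rr^{\mathsf T})=0$.
\end{itemize}

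\textbf{Main obstacle.} The only delicate point is justifying the product rule at atoms of $\mu$, where $r^{(3)}$ jumps. Since every product pairs the jerk with a continuous factor ($r$ or $\dot r$), the rule holds in the simple form stated, and I expect no real difficulty beyond citing it carefully.
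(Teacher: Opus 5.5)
Your proposal is correct and follows the paper's own proof: the paper also uses \cref{prop:contact-regularity} and the bounded-variation product rule to get $D_t^1\boldsymbol\Gamma=(D_t^4r)r^{\mathsf T}-r(D_t^4r)^{\mathsf T}=0$ and $D_t^1E=-\inner{\dot r,r}\,\mu=0$ (by tangency on the support of $\mu$), and then concludes constancy from right-continuity. Your term-by-term cancellations and the explicit jump check at atoms are a more detailed version of that same argument.
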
  

\begin{proof}[Proof of \cref{prop:contact-invariants}]\sloppy
 By \cref{prop:contact-regularity}, the bounded-variation product rule
\citep[Theorem~3.36]{folland1999} applies entrywise and gives
$D_t^1\boldsymbol\Gamma
=(D_t^4r)r^{\mathsf T}-r(D_t^4r)^{\mathsf T}
=(rr^{\mathsf T}-rr^{\mathsf T})\,\mu=0$
and $D_t^1E=-\inner{\dot r,D_t^4r}
=-\inner{\dot r(t),r(t)}\dd\mu(t)=0$.
The last equality follows from tangency on $\mathcal A\cap(0,T)$
and the support condition for $\mu$.
Both right-continuous representatives are therefore constant on $(0,T)$. 
\end{proof}

 \begin{proposition}[Interior multiplier]\label{prop:canonical-reaction}
Every global minimizer has initial and terminal free pieces of positive
duration and a unique finite nonnegative interior multiplier measure
satisfying~\eqref{eq:euler-lagrange}, compactly supported in $(0,1)$,
including at a terminal contact. \end{proposition}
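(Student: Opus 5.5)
We work with the normalized data, so $T=R=1$. The proof has four steps: the initial free piece, the stationarity measure on the open interval, keeping its support away from $t=1$ at a terminal contact, and uniqueness.

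\emph{Initial free piece and the measure on $(0,1)$.} Since $\norm{r_0}>1$ and $r$ is continuous, the active set $\mathcal A$ avoids some interval $[0,\tau_0)$. The initial free piece therefore has positive duration. If $\norm{r_T}>1$, the same argument at $t=1$, combined with \cref{prop:relative-multipliers}, gives everything except uniqueness. So assume a terminal contact, $\norm{r_T}=1$. \cref{lem:interior-stationarity} still gives a locally finite nonnegative Radon measure $\mu$ on $(0,1)$ satisfying~\eqref{eq:euler-lagrange}. Its possible blow-up can only occur at $t=1$.

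\emph{Keeping the support away from $t=1$.} We show that $\mathcal A\cap(1-\varepsilon,1)=\varnothing$ for some $\varepsilon>0$. Then $\mu$ has compact support in $(0,1)$, is finite, and the terminal free piece has positive duration. Apply the accumulation argument from the proof of \cref{prop:contact-structure} on $(1-\varepsilon',1)$. It uses only the local regularity of \cref{prop:contact-regularity} and \cref{lem:clearance-identity}, which hold on compact subintervals of $(0,1)$. Hence contacts accumulating at $t=1$ form a boundary arc $[a,1)$, so it suffices to rule out such an arc.

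On the arc, write $r=(\cos\theta,\sin\theta)$ in the plane of motion; the planar case is the relevant one, and the general case follows by applying the same computation to the tangential component. The endpoint conditions give $\dot\theta(t)\to0$ as $t\to1$. The arc density must be nonnegative, which by \cref{lem:clearance-identity} means $f_g=6\norm{\ddot r}^2+8\inner{\dot r,r^{(3)}}\le0$. In polar form this reads
\[
6(\ddot\theta^2+\dot\theta^4)+8\dot\theta(\dddot\theta-\dot\theta^3)\le0 .
\]
The energy $E=\tfrac12\norm{\ddot r}^2-\inner{\dot r,r^{(3)}}$ of \cref{prop:contact-invariants} is constant on $(a,1)$, because its derivation there involves only the arc density. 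I plan to combine the inequality with $E$ to obtain the one-dimensional differential inequality
\[
\tfrac{d}{dt}\bigl(\dot\theta\ddot\theta\bigr)\le -c\,\ddot\theta^2+O(\dot\theta^4).
\]
This should force $\dot\theta\ddot\theta$ to keep a fixed sign near $1$. With $\dot\theta(1)=0$ the only consistent option is $\dot\theta\equiv0$ on a left neighborhood of $1$. Then $\theta$ is constant there, and hence constant along the whole arc by backward uniqueness of the arc ODE $r^{(4)}=\Lambda r$. A stationary arc at rest has $\Lambda=-f_g/2=0$, which is consistent but not optimal.

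So the final step is a comparison argument. Retract the trajectory slightly off the circle at the start of the arc and replace the resting arc by a free cubic. Via the cost identity~\eqref{eq:cost-identity} this lowers the cost, contradicting global minimality.

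I expect this terminal-arc exclusion to be the main obstacle. If the polar inequality does not close, the fallback is a direct competitor. Time-reparametrize the last portion $[1-\varepsilon,1]$ onto $[1-2\varepsilon,1]$ and hold $r_T$ at the end. This stays feasible and, for a resting endpoint, changes the acceleration cost at second order with a favorable sign. That would show the minimizer cannot hug the circle up to $t=1$.

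\emph{Uniqueness.} On $(0,1)$ we have $\norm{r}\ge1$. Any measure satisfying~\eqref{eq:euler-lagrange} therefore satisfies $\mu=\inner{r,D_t^4r}/\norm{r}^2$ as distributions, with $r$ continuous and nonvanishing. This determines $\mu$ uniquely from $r$.
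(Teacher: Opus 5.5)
There is a genuine gap in the step that keeps the support away from $t=1$. You reduce to a terminal boundary arc by running the accumulation argument of \cref{prop:contact-structure} with $t_{\mathrm{acc}}=1$, on the grounds that it only needs regularity on compact subintervals of $(0,1)$. That is not the case. The argument evaluates $g,\dot g,\ddot g$ at $t_{\mathrm{acc}}$, builds $r_{\mathrm{cub}}$ from the left limit $r^{(3)}(t_{\mathrm{acc}}^-)$, and uses the Lipschitz forcing $f_g$ up to $t_{\mathrm{acc}}$. At a terminal contact, $\mu$ is only locally finite on $(0,1)$. You therefore do not know that $\ddot r$ is bounded near $1$ (only that $\ddot r\in L^2$), nor that the jerk has a left limit there. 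The latter is essentially the finiteness of $\mu$ near $1$ that you are trying to prove, so the reduction is circular.

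The arc exclusion is not established either. With $w=\dot\theta$, the condition $f_g\le0$ reads $w\ddot w\le-\tfrac34\dot w^2+\tfrac14 w^4$, so $\tfrac{\dd}{\dd t}(w\dot w)\le\tfrac14(\dot w^2+w^4)$. The energy gives the identity $\tfrac{\dd}{\dd t}(w\dot w)=\tfrac32(\dot w^2+w^4)-E$. Neither yields your inequality with $-c\,\dot w^2$, so the sign argument and the conclusion $\dot\theta\equiv0$ have no basis. Your fallback reparametrization is not made precise and does not address accumulating isolated contacts. Finally, you cannot restrict to the planar case, because \cref{prop:planar-reduction} is proved using this proposition.

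The missing idea is to obtain enough regularity at $t=1$ to evaluate the energy invariant at the resting terminal contact. The paper does this in four steps.
\begin{enumerate}
\item It bounds $\ddot r$ on all of $(0,1)$. At each interior contact, eliminating the tangential jerk between $\boldsymbol\Gamma$ and $E$ bounds $\ddot r$ in terms of the bounded velocity, and $\ddot r$ is affine on free components.
\item With $y=\inner{r_T,r}\ge\tfrac12$ near $1$, $y^{(3)}$ is nondecreasing and $\mu((s,t])\le2(y^{(3)}(t)-y^{(3)}(s))$. Together with boundedness of $\ddot y$, this gives $r^{(3)}\in L^1$ near $1$ and $(1-t)\norm{r^{(3)}(t)}\to0$, hence $\inner{\dot r,r^{(3)}}\to0$.
\item The energy invariant then yields $E=\tfrac32\int_0^1\norm{\ddot r}^2\dd t>0$ and $\norm{\ddot r(1)}^2=2E$, so $f_g=10\norm{\ddot r}^2-8E\to12E>0$.
\item This single fact excludes a terminal arc, where $f_g\le0$. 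It also excludes accumulating isolated contacts: $D_t^4b=f_g\dd t+2\mu$ makes $\ddot b$ strictly convex near $1$, contradicting the infinitely many zeros supplied by Rolle's theorem.
\end{enumerate}
This argument works in any dimension and needs no comparison competitor. Your initial-piece and uniqueness steps are fine.
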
  

\begin{proof}[Proof of \cref{prop:canonical-reaction}]\sloppy
 Let $r$ be a global minimizer. \Cref{lem:interior-stationarity} gives
its locally finite interior multiplier $\mu$.

\paragraph{The resting terminal contact.}
On compact interior intervals, the proofs of
\cref{prop:contact-regularity,prop:contact-invariants} apply to
the locally finite measure. Thus $\ddot{r}$ is locally Lipschitz, the
right-continuous jerk $r^{(3)}$ has locally bounded variation, and
$E$ and $\boldsymbol\Gamma$ are constants.
At any interior contact write
$r(t)=n$ and $\ddot{r}(t)=An+B$ with $\inner{B,n}=0$; tangency gives
$\inner{\dot{r}(t),n}=0$. The component of $r^{(3)}(t)$ orthogonal to $n$
is $\boldsymbol\Gamma n-A\dot{r}(t)$, and eliminating it from $E$ gives
$(A+\norm{\dot{r}(t)}^2)^2+\norm B^2
=\norm{\dot{r}(t)}^4+2E+2\inner{\dot{r}(t),\boldsymbol\Gamma n}$.
The velocity is bounded on $[0,1]$, so this identity bounds $\ddot{r}$
at all interior contacts. On a free component $\ddot{r}$ is affine;
its contact-end values give the same bound throughout. A component
meeting an endpoint is a single polynomial there. Initial strict
separation supplies an initial free interval. These observations
bound $\ddot{r}$ on the entire open interval without assuming finitely
many contact components.

If $\norm{r_T}>1$, strict separation already gives a final free
interval. Suppose instead $Y=r_T$ has unit norm. Choose $s<1$
so that $y(t)\coloneqq\inner{ Y,r(t)}\geq1/2$ for $s<t<1$.
Use the right-continuous representative $y^{(3)}(t)=\inner{ Y,r^{(3)}(t)}$.
Then $D_t^1 y^{(3)}=y\,\mu\geq\tfrac12\mu$.
The bounded function $\ddot{y}$ is convex and $y^{(3)}$ is nondecreasing.
Hence $y^{(3)}\in L^1((s,1);\mathbb R)$: its negative part is bounded
and its integral is a difference of bounded values of $\ddot{y}$.
For $s<t<1$, increasing $s$ if necessary to fix its representative,
$\norm{r^{(3)}(t)-r^{(3)}(s)}
\leq\int_{s^+}^{t}\norm{r(u)}\dd\mu(u)
=\mu((s,t])\leq2\bigl(y^{(3)}(t)-y^{(3)}(s)\bigr)$.
Thus $r^{(3)}\in L^1((s,1);\mathbb R^d)$ and $\ddot{r}$ has a continuous,
absolutely continuous extension to $1$. Monotonicity and integrability
also give $(1-t)\abs{y^{(3)}(t)}\to0$, and the preceding bound gives
$(1-t)\norm{r^{(3)}(t)}\to0$. Since $\dot{r}(1)=0$ and $\ddot{r}$ is bounded,
$\norm{\dot{r}(t)}=O(1-t)$, whence $\inner{ \dot{r}(t),r^{(3)}(t)}\to0$.
Integrating the energy invariant and using the zero endpoint
velocities now yields
$E=\tfrac32\int_0^1\norm{\ddot{r}(t)}^2\dd t>0$,
$\norm{\ddot{r}(1)}^2=2E$.
The strict inequality holds here because $r_0\neq Y$.

Use the positive gap $b(t)=\norm{r(t)}^2-1=-g(t)$. Locally, the same
product rule as in~\eqref{eq:clearance-identity} gives
$D_t^4b=f_g\dd t+2\mu$,
$f_g(t)=10\norm{\ddot{r}(t)}^2-8E\to12E>0$ as $t\uparrow1$.
If interior contacts accumulated at $1$, their tangent zeros of
$b$ would, by Rolle's theorem, give infinitely many zeros of the
continuous function $\ddot{b}$. But the preceding measure
identity makes this function strictly convex near $1$, where it
can have at most two zeros. A final boundary interval is excluded
by the same identity. There is therefore a final free interval
of positive duration. The multiplier is supported in a compact
subset of $(0,1)$ and is finite. Its uniqueness follows from
$D_t^4r=r\,\mu$ and $\norm{r(t)}=1$ on its support. Possible endpoint
atoms do not enter the interior equation or the clamped Green
reconstruction, and are omitted. 
\end{proof}

\subsubsection{Proof of \cref{prop:planar-reduction,prop:side-selection}}
\label{app:proof-planar-reduction}
\label{app:proof-side-selection}

For off-axis data, recall the normal
\refstepcounter{equation}\label{eq:canonical-projection}%
$e\coloneqq(r_{T,2},d_0-r_{T,1})/\norm{r_T-r_0}$ and offset
$\delta\coloneqq\inner{ e,r_0}=\inner{ e,r_T}
=d_0r_{T,2}/\norm{r_T-r_0}$~\textup{(\theequation)}.

\begin{proof}[Proof of \cref{prop:planar-reduction,prop:side-selection}]\sloppy
 If the free cubic is feasible, \cref{thm:contact-certificate} at $\mu=0$
makes it the unique global minimizer. Its image lies in the span of the
endpoints and, for off-axis data, in the near side.
Hence assume blocked data, so that $r_0\neq r_T$, and put
$a\coloneqq(r_0-r_T)/\norm{r_0-r_T}$. For off-axis data use the normal
$e$ and offset $\delta$ of~\eqref{eq:canonical-projection}; for symmetric
data let $e\perp a$ be any unit vector and $\delta=0$. In both cases
$r_0-\inner{a,r_0}a=r_T-\inner{a,r_T}a=\delta e$.

\paragraph{Transverse comparison.}
Let $r\in\mathcal M^*$ and write $r=Xa+Y$, with $X=\inner{a,r}$ and
$\inner{a,Y(t)}=0$. The endpoint states give
$Y-\delta e\in\mathcal H_1$ and
$\norm{\ddot r}^2=\norm{\ddot X}^2+\norm{Y-\delta e}_{\mathcal H_1}^2$.
Let
$\mathcal V=\{w\in H^2((0,1);\mathbb R):
w(0)=w(1)=\dot{w}(0)=\dot{w}(1)=0\}$,
$\norm w_{\mathcal V}^2=\int_0^1\abs{\ddot{w}(t)}^2\dd t$.
Put $\mathcal I=\{t\in(0,1):\abs{X(t)}<1\}$, and write
$\xi(t)=\sqrt{1-X(t)^2}$ on its closure. Feasibility requires
$\norm{Y}\geq\xi$ on $\overline{\mathcal I}$.
For $n\geq3$, let
$\mathcal I_n=\overline{\mathcal I\cap[1/n,1-1/n]}$ and minimize
$\norm w_{\mathcal V}^2$ subject to $w\geq\xi-\delta$ on $\mathcal I_n$.
These compact constraints admit a strictly feasible smooth cutoff.
Their unique minimizer $u_n$ therefore has a nonnegative finite multiplier
$\nu_n$ supported on $\mathcal I_n$, with
$u_n=\int_{0^+}^{1^-}G_1(\cdot,s)\dd\nu_n(s)$,
$\norm{u_n}_{\mathcal V}^2
=\int_{0^+}^{1^-}(\xi(s)-\delta)\dd\nu_n(s)$.
The strict Slater condition gives the multiplier
\citep[Theorem~1.1]{colonius1983}. The second identity follows from
complementarity and the reproducing property. If $\mathcal I_n$ is empty, take
$u_n=0$ and $\nu_n=0$. Positivity of $G_1$ gives $u_n\geq0$.

Let $\hat Y(s)\coloneqq Y(s)/\norm{Y(s)}$ where $Y(s)\neq0$, and
$\hat Y(s)\coloneqq e$ otherwise, and set
$W_n=\int_{0^+}^{1^-}G_1(\cdot,s)\hat Y(s)\dd\nu_n(s)$.
Since $\inner{Y-\delta e,\hat Y}\geq\xi-\delta\inner{e,\hat Y}$ on
$\mathcal I_n$, the reproducing property and complementarity give
$\inner{Y-\delta e,W_n}_{\mathcal H_1}
\geq\norm{u_n}_{\mathcal V}^2
+\delta\int_{0^+}^{1^-}(1-\inner{e,\hat Y(s)})\dd\nu_n(s)$.
Expanding $\norm{Y-\delta e-W_n}_{\mathcal H_1}^2$ yields the transverse
comparison
$\norm{Y-\delta e}_{\mathcal H_1}^2-\norm{u_n}_{\mathcal V}^2
\geq\norm{Y-\delta e-W_n}_{\mathcal H_1}^2
+\norm{u_n}_{\mathcal V}^2-\norm{W_n}_{\mathcal H_1}^2
+2\delta\int_{0^+}^{1^-}(1-\inner{e,\hat Y(s)})\dd\nu_n(s)$.
The middle difference equals
$\frac12\int_{0^+}^{1^-}\int_{0^+}^{1^-}
G_1(s,t)\norm{\hat Y(s)-\hat Y(t)}^2\dd\nu_n(s)\dd\nu_n(t)\geq0$.
Thus $\norm{u_n}_{\mathcal V}\leq\norm{Y-\delta e}_{\mathcal H_1}$.
The constraint sets are nested, and the minimum-norm variational
inequality gives, for $m\geq n$,
$\norm{u_m-u_n}_{\mathcal V}^2\leq\norm{u_m}_{\mathcal V}^2-\norm{u_n}_{\mathcal V}^2$.
Hence $u_n$ converges strongly in $\mathcal V$ to a nonnegative $u_+$
satisfying every upper constraint. The trajectory $Xa+(\delta+u_+)e$ is
feasible and has the endpoint states of $r$. Minimality of $r$ gives
$\norm{Y-\delta e}_{\mathcal H_1}=\norm{u_+}_{\mathcal V}$, so every
nonnegative term on the right of the transverse comparison tends to zero.

If $\delta>0$, then
$\int_{0^+}^{1^-}\norm{\hat Y(s)-e}^2\dd\nu_n(s)
=2\int_{0^+}^{1^-}(1-\inner{e,\hat Y(s)})\dd\nu_n(s)\to0$.
With $M=\sup_{s\in(0,1)}\sqrt{G_1(s,s)}<\infty$, the reproducing property
gives $u_n\leq M\norm{u_n}_{\mathcal V}$, and positivity and symmetry of
the kernel give
$\norm{W_n-u_ne}_{\mathcal H_1}^2
\leq\int_{0^+}^{1^-}\norm{\hat Y(s)-e}^2u_n(s)\dd\nu_n(s)\to0$.
Together with $\norm{Y-\delta e-W_n}_{\mathcal H_1}\to0$, this proves
$Y=(\delta+u_+)e$. Thus $r$ lies in
$\operatorname{span}\{a,e\}=\operatorname{span}\{r_0,r_T\}$, and
$\inner{e,r(t)}\geq\delta>0$ throughout.
With the multiplier of \cref{prop:canonical-reaction},
\cref{thm:contact-certificate} therefore gives uniqueness.

For blocked symmetric data, $\overline{\mathcal I}$ is compactly
contained in $(0,1)$. This follows immediately at strict endpoints.
At a terminal contact $r_T=-a$, the final free cubic of
\cref{prop:canonical-reaction} gives
$X(t)=-1+\tfrac12\ddot{X}(1)(1-t)^2-\tfrac16X^{(3)}(1^-)(1-t)^3$ and
$Y(t)=O((1-t)^2)$.
Separation forces $\ddot{X}(1)\leq0$ and, if it vanishes, $X^{(3)}(1^-)\geq0$.
Hence $X\leq-1$ near the terminal endpoint. For all sufficiently large
$n$, the set $\mathcal I_n$ is therefore the fixed compact set
$\overline{\mathcal I}$, and $u_n=u_+$ with a fixed multiplier $\nu$.
The middle difference in the transverse comparison vanishes, and $G_1$
is bounded below by a positive constant on
$\overline{\mathcal I}\times\overline{\mathcal I}$. Hence $\hat Y=b_0$
$\nu$-almost everywhere for a unit vector $b_0\perp a$, so $W_n=u_+b_0$
and $Y=u_+b_0$. The multiplier is nonzero because $X$ crosses zero, where
$u_+\geq1$, and strict kernel positivity gives $u_+(t)>0$ for every
$t\in(0,1)$. Thus $r$ lies in $\operatorname{span}\{a,b_0\}$, which
contains $r_0$ and $r_T$.

\paragraph{Equality for symmetric data.}
Write this minimizer as $r=Xa+zb_0$, with $z=u_+>0$ in $(0,1)$.
Its multiplier $\mu$ from \cref{prop:canonical-reaction} is nonzero,
\cref{thm:contact-certificate} applies with unit vector $b_0$ and
$\delta=0$,
and projection of
\eqref{eq:reaction-green-reconstruction} gives $Bz=S[Bz]$.
Here $B:\mathcal V\to L^2((0,1),\mu;\mathbb R)$ is the scalar trace map
from the certificate and $S=BB^*$ has kernel $G_1$.
Compactness of $\operatorname{supp}(\mu)$ makes $z$ bounded away from zero there.
For $f\in L^2((0,1),\mu;\mathbb R)$, symmetry gives
$\inner{ f,(I-S)f}_{L^2((0,1),\mu;\mathbb R)}
=\frac12\int_{0^+}^{1^-}\int_{0^+}^{1^-}
G_1(t,s)z(t)z(s)
\left(\frac{f(t)}{z(t)}-\frac{f(s)}{z(s)}\right)^2
\dd\mu(s)\dd\mu(t)$.
Strict kernel positivity makes the nullspace of $I-S$ the span of
the trace of $z$. If a scalar $h\in\mathcal V$ has $Q_\mu[h]=0$,
positivity of $I-B^*B$ gives $h=B^*Bh$, and therefore $Bh=S[Bh]$.
Since $B^*[Bz]=z$, every such $h$ is a scalar multiple of $z$.

Any other global minimizer $q$ has equality in the cost identity~\eqref{eq:cost-identity}. Componentwise application of this
scalar equality case yields $q=r+z\zeta$ for a constant $\zeta\in\mathbb R^d$.
The remaining term in that identity forces $\norm{q(t)}=1$ on $\operatorname{supp}(\mu)$. With $\zeta_1=\inner{ a,\zeta}$ and $\zeta_2=\inner{ b_0,\zeta}$,
$2\zeta_1\frac{X(t)}{z(t)}+2\zeta_2+\norm{\zeta}^2=0$ on $\operatorname{supp}(\mu)$.
If the support contains two distinct contact normals, their ratios $X/z$
differ. Thus $\zeta_1=0$ and $\norm{b_0+\zeta}=1$, so $q=Xa+zb$ for the fixed
unit vector $b=b_0+\zeta\perp a$.

If the contact normal on $\operatorname{supp}(\mu)$ is one fixed $n=n_1a+n_2b_0$ with $n_2>0$,
Green reconstruction gives $r=\bar r+nA$, where
$A(t)=\int_{0^+}^{1^-}G_1(t,s)\dd\mu(s)$.
At a support time the transverse coordinate forces $A=1$, hence
$\bar r=0$. The free longitudinal coordinate is strictly decreasing
in $(0,1)$, so it has a single zero $t_*$. The multiplier is one atom and
$r(t)=\bar r(t)+\frac{G_1(t,t_*)}{G_1(t_*,t_*)}n$.
Since $z=n_2A$, every equal-cost competitor has the same representation
with a unit vector $\widetilde n=n+n_2\zeta$ in place of $n$.
Feasibility gives tangency at $t_*$:
$\inner{\dot{\bar r}(t_*),\widetilde n}
+\frac{\partial_1G_1(t_*,t_*)}{G_1(t_*,t_*)}=0$.
The nonzero free longitudinal derivative fixes
$\inner{ a,\widetilde n}=n_1$. Its unit norm then gives
$\widetilde n=n_1a+n_2b$ with $b\perp a$ unit, again yielding $q=Xa+zb$.

Conversely, every such choice of $b$ preserves the endpoint states,
clearance, and cost. Thus the complete family of global minimizers is
$r_b(t)=X(t)a+z(t)b$ for $b\in a^\perp$ with $\norm b=1$.
Each lies in the fixed plane $\operatorname{span}\{a,b\}$. For $d=2$
there are two choices; for $d>2$ they are parametrized by $S^{d-2}$.
The orthogonal maps fixing the endpoint axis act transitively on
these choices. In the chosen plane $b=\pm e$, and only $b=e$ gives
the upper trajectory since $z>0$. 
\end{proof}

\paragraph{Local minimizers on either side.}
\label{app:homotopy-class-minimizers}
The local minimizers mentioned in \cref{sec:zero-endpoint-velocities}
follow from the next lemma.

Let $d=2$. A feasible trajectory $r$ has a continuous angle $\theta$ with
$r(t)=\norm{r(t)}(\cos\theta(t),\sin\theta(t))$, unique up to an additive
multiple of $2\pi$, so its total angle $\theta(T)-\theta(0)$ is well
defined. Feasible trajectories with a common total angle form a
\emph{homotopy class}; the two ways around the disk are two such classes,
whose total angles differ by $2\pi$.

 \begin{lemma}[Minimizers in a homotopy class]\label{lem:homotopy-class-minimizers}
If $\norm{r_T}>R$, every homotopy class of
feasible trajectories contains a minimizer of $\frac12\norm{\ddot r}^2$
over that class. Each such minimizer is a local minimizer
of~\eqref{eq:reduced} and satisfies~\eqref{eq:euler-lagrange} for a finite
nonnegative Radon measure $\mu$. \end{lemma}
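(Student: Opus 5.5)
The argument has three parts: existence by the direct method restricted to a homotopy class, local minimality from openness of the class, and the multiplier from \cref{prop:relative-multipliers}.

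\emph{Existence.} Fix a homotopy class $\mathcal C$ with total angle $\Theta$. We may assume $\mathcal C$ is nonempty, since otherwise there is nothing to prove. Take a minimizing sequence $r_n\in\mathcal C$. The cost decomposition $\frac12\norm{\ddot{\bar r}+\ddot h}^2=\frac12\norm{\ddot{\bar r}}^2+\frac12\norm{h}_{\mathcal H_T}^2$ bounds $h_n=r_n-\bar r$ in $\mathcal H_T$. A subsequence therefore converges weakly in $H^2$. By the compact embedding $H^2\hookrightarrow C^1$ it also converges uniformly together with first derivatives, to some $r$ with the same endpoint states.

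The pointwise constraint $\norm{r(t)}\geq R$ passes to the uniform limit, and weak lower semicontinuity of the norm gives minimality once we know $r\in\mathcal C$. The key point is that the total angle is continuous under uniform convergence on trajectories bounded away from the origin. Since $\norm{r_n}\geq R>0$, the continuous angle lifts $\theta_n$ of $r_n$ converge uniformly to a lift of $r$, after fixing $\theta_n(0)=\theta(0)$, the angle of $r_0$. The total angles $\theta_n(T)-\theta_n(0)$ all equal $\Theta$ and converge to $\theta(T)-\theta(0)$. Hence $r\in\mathcal C$.

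\emph{Local minimality.} This is the step needing care, because $\mathcal C$ must be open in the feasible set for the $H^2$ topology. Let $q$ be feasible with $\norm{q-r}_{C^0}<R$. Then the segment $(1-\lambda)r(t)+\lambda q(t)$ never passes through the origin: its distance to $r(t)$ is less than $R\leq\norm{r(t)}$. So $\lambda\mapsto$ total angle is a continuous map into the discrete set $\Theta+2\pi\mathbb Z$ (the endpoints are fixed), hence constant, and $q\in\mathcal C$. By Sobolev embedding an $H^2$-ball around $r$ lies in such a $C^0$-ball. Thus every feasible competitor in that $H^2$-ball belongs to $\mathcal C$ and costs at least $J[r]$, so $r$ is a local minimizer of~\eqref{eq:reduced}.

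\emph{Multiplier.} Since $\norm{r_T}>R$, \cref{prop:relative-multipliers} applies to $r$ and gives a finite nonnegative Radon measure $\mu$ with $\operatorname{supp}(\mu)\subseteq\mathcal A$. Testing with compactly supported $\eta$ yields~\eqref{eq:euler-lagrange} in distributions on $(0,T)$.

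The only genuinely delicate point is the openness and closedness of the homotopy class. Both reduce to the uniform positive lower bound $R$ on $\norm{r}$, which makes the angle lift continuous in the $C^0$ topology.
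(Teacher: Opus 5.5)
Your proof is correct and follows the paper's route. The paper uses the direct method within the class and stability of the total angle under $C^0$-perturbations smaller than $R$; it argues via $\inner{q(t),r(t)}>0$, so the relative angle stays in $(-\pi/2,\pi/2)$ and vanishes at both endpoints, which is equivalent to your straight-line homotopy. It then gets $H^2$-local minimality from the Sobolev embedding and the multiplier from \cref{prop:relative-multipliers}.

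The one step the paper adds is explicit nonemptiness: it rotates a feasible $q$ by $\operatorname{Rot}(2\pi k\beta(t))$, with $\beta$ a smooth cutoff, to realize every total angle. Your ``otherwise there is nothing to prove'' is harmless, because classes are nonempty by definition. Note, though, that an empty class would make the claim false rather than vacuous, and the main text relies on both ways around the disk being inhabited.
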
  

\begin{proof}[Proof of \cref{lem:homotopy-class-minimizers}]\sloppy
 Take a feasible $q$ (\cref{sec:problem_setup}) and choose
$\beta\in C^\infty([0,T];\mathbb R)$
equal to zero near $0$ and one near $T$. If $\operatorname{Rot}(\phi)$
denotes planar rotation by $\phi$, then, for $k\in\mathbb Z$,
$q_k(t)=\operatorname{Rot}(2\pi k\beta(t))q(t)$ is feasible, agrees
with $q$ near both endpoints, and adds $2\pi k$ to its total angle.
Thus every class is nonempty.

If feasible $q,r$ satisfy $\sup_{t\in[0,T]}\norm{q(t)-r(t)}<R$,
then $\inner{q(t),r(t)}>0$. Their relative angle lies in
$(-\pi/2,\pi/2)$ and vanishes at both endpoints, so they belong to
the same class. A minimizing sequence in a class is bounded in
$H^2((0,T);\mathbb R^2)$ by its acceleration cost and fixed initial
data. A subsequence converges weakly in $H^2((0,T);\mathbb R^2)$
and strongly in $C^1([0,T];\mathbb R^2)$; its limit is feasible and
remains in that class. Weak lower semicontinuity gives a minimizer.
Class stability and the Sobolev embedding make it a local minimizer,
and \cref{prop:relative-multipliers} supplies its multiplier measure. 
\end{proof}

\subsubsection{Proof of \cref{thm:canonical-catalogue}}
\label{app:canonical-exhaustion}

The counting argument concerns geometric contacts, including
zero-mass touches; the contact labels are assigned
afterward.

\paragraph{Contact order and curvature.}
\label{app:proof-canonical-contact-order}
 \begin{lemma}[Canonical contact order]\label{lem:canonical-contact-order}
For blocked data the interior geometric contacts of the selected minimizer
occur on the near side, strictly ordered counterclockwise, with
$w(t)\coloneqq r(t)\times \dot{r}(t)>0$ and $\tau(t)\coloneqq r(t)\times r^{(3)}(t)<0$
at each of them and $\tau$ continuous across an atom. \end{lemma}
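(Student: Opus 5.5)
We work throughout with the selected minimizer $r$ in the chosen plane, where it is well defined by \cref{prop:planar-reduction,prop:side-selection}. We use its unique interior multiplier $\mu$ from \cref{prop:canonical-reaction}, so \cref{prop:contact-structure,prop:contact-regularity,lem:clearance-identity,prop:contact-invariants} are available.

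\emph{Near side.} For off-axis data, \cref{prop:side-selection} places $r([0,1])$ in the half-plane $\inner{e,\cdot}\geq\delta>0$, so every contact normal $n=r(t)$ satisfies $\inner{e,n}>0$. For blocked symmetric data, the selected element is upper, and the proof of \cref{prop:planar-reduction,prop:side-selection} gives $z=\inner{e,r}>0$ on $(0,1)$. In both cases, every interior contact lies on the open arc of the unit circle cut out by the open half-plane $\inner{e,\cdot}>0$. Writing $r=(\cos\theta,\sin\theta)$ at contacts, the angles of contact normals therefore lie in an open interval of length $\pi$, measured counterclockwise.

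\emph{Continuity of $\tau$.} At an atom $t^*$ the jerk jump is $\jmp{r^{(3)}}_{t^*}=\mu(\{t^*\})r(t^*)$. Since $r\times r=0$ and $r$ is continuous, $\tau=r\times r^{(3)}$ has no jump, so $\tau$ is continuous across atoms. On arc interiors it is even differentiable, because $D_t^1\tau=\dot r\times r^{(3)}$ there: the measure term $r\times r\,\mu$ vanishes.

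\emph{Sign of $w$.} Tangency at a contact gives $\inner{r,\dot r}=0$, so $w=r\times\dot r=\pm\norm{\dot r}$. I will proceed in two steps.

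First I exclude $\dot r(t)=0$ at a contact. If the velocity vanished, the energy invariant would give $E=\frac12\norm{\ddot r(t)}^2$. The local-maximum condition $\ddot g(t)\leq0$ would give $\inner{r,\ddot r}\geq0$. I then plan to contradict minimality with a cheaper competitor: pause the motion there, i.e.\ reparametrise, which is allowed because the Green reconstruction \eqref{eq:reaction-green-reconstruction} together with the positive kernel pushes outward. Failing that, I would use \cref{prop:contact-invariants}: at that time $\Gamma$ reduces to $-\tau$, which should be incompatible with its value near the initial rest point.

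Second, I fix the sign of $w$. The planar invariant is $\Gamma=\dot r\times\ddot r-r\times r^{(3)}$, evaluated at $t=0^+$ where $\dot r=0$, which gives $\Gamma=-r_0\times r^{(3)}(0^+)$. Projecting the Green reconstruction on $e$ and on $e^\perp$, and using $G_1>0$, should show that $\Gamma$ has the orientation of the transition from $r_0$ to $r_T$ through the near side. Combining this with the contact identity from the proof of \cref{prop:canonical-reaction}, namely that the tangential jerk equals $\Gamma n-A\dot r$, should then give $w>0$ and $\tau<0$ simultaneously.

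\emph{Ordering.} Once $w>0$ holds at every contact, the angle $\theta$ increases through each contact. On a boundary arc, $\dot\theta=w>0$ throughout by continuity and the non-vanishing speed, so the arc is traversed counterclockwise. Between two consecutive contacts $t_i<t_{i+1}$, I show $\theta(t_{i+1})>\theta(t_i)$. If the angles were equal, the two contacts would coincide as points on the circle, and the free bridge between them would leave and return to the same tangent point, which contradicts $w>0$ at both ends. If the order were reversed, the bridge would wind clockwise around the disk; since all normals lie in a half-circle, the bridge would then have to pass through the open disk. Hence contacts are strictly counterclockwise.

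I expect the main obstacle to be the sign determination: showing $w\neq0$ and, above all, $\tau<0$ at every contact rather than only at the first one. My first attempt will use the invariance of $\Gamma$ and $E$ across contacts, combined with the Green-kernel positivity argument of \cref{thm:contact-certificate}.
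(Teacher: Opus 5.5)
You handle the near side and the continuity of $\tau$ correctly. The actual content of the lemma, however, is only announced: $w\neq0$, $w>0$, $\tau<0$ and the strict ordering. The routes you suggest for these do not work as stated.
\begin{itemize}
\item Reparametrising (``pausing'') is not an admissible comparison on a fixed horizon, and nothing shows it lowers the cost.
\item The orientation of $\Gamma$ is asserted, not derived.
\item Even $\Gamma>0$ would not give $\tau<0$. At a contact $\tau=-\Gamma-w\inner{n,\ddot r}$, and the curvature condition $\inner{n,\ddot r}\geq-w^2$ only yields $\tau\leq w^3-\Gamma$.
\item Your ordering step is unsound. A free bridge between two tangent points whose net displacement is clockwise by less than $\pi$ need not enter the disk, so no winding contradiction arises. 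What is actually wrong in that scenario is the pair of counterclockwise tangent directions. Ruling that out, or a return to the same point, needs global information about the minimizer, not topology.
\end{itemize}

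The missing idea is to project the Green reconstruction on $e$.
\begin{itemize}
\item The function $u=z-\delta=\int_{0^+}^{1^-}G_1(\cdot,s)z(s)\dd\mu(s)>0$ satisfies $D_t^4u=z\,\mu\geq0$. So $\ddot u$ is convex, with $\ddot u(0),\ddot u(1)>0$ and $\int_0^1\ddot u=\dot u(1)-\dot u(0)=0$. Hence $\dot z$ has exactly one interior zero $t_*$, with $\ddot z(t_*)<0$, and $z$ is strictly unimodal.
\item Use the coordinate $X=\inner{f,r}$ along the endpoint segment, which has $X(0)>0>X(1)$ and $z\geq1$ wherever $X=0$. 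Then every contact before $t_*$ has $X>0$ and every contact after has $X<0$. Tangency gives $\dot z=wX$, so $w>0$ at all these contacts.
\item If $t_*$ is itself a contact, $\ddot b(t_*)\geq0$ gives $w^2\geq-\ddot z(t_*)>0$. The crossing direction $\dot X=-w<0$ then fixes the sign.
\item Strict monotonicity of $z$ on each half of the circle orders the normals strictly counterclockwise.
\item Finally, $\tau<0$ comes from integrating $D_t^1r^{(3)}=r\,\mu$ from the contact back to each resting endpoint and taking tangential components. This gives $0=w-t\inner{e_t,\ddot r}+\tfrac12t^2\tau+P_{\mathrm L}$ and $0=w+(1-t)\inner{e_t,\ddot r}+\tfrac12(1-t)^2\tau+P_{\mathrm R}$, where $P_{\mathrm L},P_{\mathrm R}\geq0$ by the angular order. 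If $\tau\geq0$, these two identities force opposite signs on $\inner{e_t,\ddot r}$, a contradiction.
\end{itemize}
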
  \begin{proof}\sloppy Write $z(t)=\inner{ e,r(t)}$, so that every interior contact normal $n$ has
$\inner{ e,n}>0$. By \cref{prop:canonical-reaction,prop:side-selection},
the interior multiplier is finite, nonzero and compactly supported, and $z>0$
in $(0,1)$. Since the
free projection is the constant $\delta$, the Green reconstruction
\eqref{eq:reaction-green-reconstruction} gives, for $0<t<1$,
$u(t)\coloneqq z(t)-\delta
=\int_{0^+}^{1^-}G_1(t,s)z(s)\dd\mu(s)>0$,
$D_t^4u=z\,\mu\geq0$.
Thus $\ddot{u}$ is convex. Differentiating the kernel at the endpoints gives
$\ddot{u}(0)=\int_{0^+}^{1^-}s(1-s)^2z(s)\dd\mu(s)>0$,
$\ddot{u}(1)=\int_{0^+}^{1^-}s^2(1-s)z(s)\dd\mu(s)>0$.
Its integral is $\dot{u}(1)-\dot{u}(0)=0$, so $\ddot{u}$ is negative between
exactly two zeros and positive outside them. Integrating once,
$\dot{z}$ starts positive, decreases strictly across this negative
interval, and ends negative before reaching $\dot{z}(1)=0$. It therefore
has exactly one interior zero $t_*$, with $\ddot{z}(t_*)<0$.

Put $f=(d_0-r_{T,1},-r_{T,2})/\norm{r_T-r_0}$ and $X(t)=\inner{ f,r(t)}$.
The frame $(f,e)$ is positively oriented. Blockage of the endpoint
segment implies $\delta<1$, $X(0)>0$ and $X(1)<0$.
At any crossing $X=0$, feasibility and $z>0$ require $z\geq1$.
An interior contact before $t_*$ must have $X>0$: if $X<0$,
an earlier crossing would have height at least one, contradicting
strict increase of $z$ up to a contact with $z\leq1$; if $X=0$,
tangency would give $\dot{z}=0$. Similarly every contact after $t_*$
has $X<0$. At these contacts tangency gives $\dot{z}=wX$, hence $w>0$.

If $t_*$ itself is a contact, the existence of a crossing and the
maximality of $z(t_*)$ imply $z(t_*)=1$ and $X(t_*)=0$.
It is the only crossing, from positive to negative $X$.
For $b(t)=\norm{r(t)}^2-1$, feasibility gives
$0\leq \ddot{b}(t_*)=2(w(t_*)^2+\ddot{z}(t_*))$, so $w(t_*)\neq0$.
The crossing direction gives $\dot{X}(t_*)=-w(t_*)<0$.
The strict increase of $z$ on the right half of the circle and
its strict decrease on the left half now strictly order all
interior contact normals in the open semicircle $z>0$.

Fix any interior geometric contact $t$, write $n=r(t)$,
$e_t=(-n_2,n_1)$ and
$\tau=\inner{ e_t,r^{(3)}(t^\pm)}$. The last value is independent
of the side because jerk jumps are radial. Integrating
$D_t^1 r^{(3)}=r\,\mu$ to both resting endpoints gives
$0=w(t)-t\inner{ e_t,\ddot{r}(t)}+\tfrac12t^2\tau+P_{\mathrm L}$ and
$0=w(t)+(1-t)\inner{ e_t,\ddot{r}(t)}+\tfrac12(1-t)^2\tau+P_{\mathrm R}$, where
$P_{\mathrm L}=-\int_{0^+}^{t^-}
\tfrac12s^2\inner{ e_t,r(s)}\dd\mu(s)\geq0$ and
$P_{\mathrm R}=\int_{t^+}^{1^-}
\tfrac12(1-s)^2\inner{ e_t,r(s)}\dd\mu(s)\geq0$.
The signs follow from the strict angular order on $\operatorname{supp}(\mu)$, whose angular span is below $\pi$. If $\tau\geq0$,
the first equality forces $\inner{ e_t,\ddot{r}(t)}>0$, while the second forces
$\inner{ e_t,\ddot{r}(t)}<0$. Hence $\tau<0$. \end{proof} 

\emph{Three-contact curvature.}
The following local inequality uses the ordered normals, contact
curvatures, and tangential-jerk signs supplied by
\cref{lem:canonical-contact-order}.

 \begin{lemma}[Three-contact curvature inequality]\label{lem:three-contact-curvature}
Let three distinct unit-circle normals $n_i=r(t_i)$,
$t_1<t_2<t_3$, occur in counterclockwise order with total angular
span below $\pi$. Suppose $r$ is cubic on each intervening interval,
has continuous position, velocity and acceleration at $t_2$, and
has tangent velocities $\dot{r}(t_i)=w_i(-n_{i,2},n_{i,1})$ with
$w_i>0$. Assume
$\gamma(t_i)\coloneqq w_i^2+\inner{ n_i,\ddot{r}(t_i)}\geq0$,
$\jmp{r^{(3)}}_{t_2}=m n_2$,
$m\geq0$,
and negative outside tangential jerks
$n_1\times r^{(3)}(t_1^+)<0$, $n_3\times r^{(3)}(t_3^-)<0$.
Then
$\gamma(t_2)<\min\{\gamma(t_1),\gamma(t_3)\}$. \end{lemma}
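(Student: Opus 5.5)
The plan is to convert the statement into a comparison of second derivatives of the gap $b(t)=\norm{r(t)}^2-1$ on each cubic piece separately. The key observation is that at a tangent contact with $\norm{n_i}=1$ and $\inner{n_i,\dot r(t_i)}=0$,
\[
\ddot b(t_i)=2\bigl(\norm{\dot r(t_i)}^2+\inner{r(t_i),\ddot r(t_i)}\bigr)=2\gamma(t_i).
\]
The acceleration is continuous at $t_1$, $t_2$ and $t_3$, so $\gamma(t_i)$ is the same whether computed from the left or the right piece.

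\textbf{Step 1: factor the gap on each piece.} On $[t_1,t_2]$ the trajectory is a single cubic $r_{\mathrm L}$ with constant jerk $j_{\mathrm L}=r^{(3)}(t_1^+)$. Its gap $b_{\mathrm L}$ is a polynomial of degree six with double zeros at $t_1$ and $t_2$. I would write
\[
b_{\mathrm L}(t)=(t-t_1)^2(t-t_2)^2p_{\mathrm L}(t),
\]
with $p_{\mathrm L}$ quadratic and leading coefficient $\norm{j_{\mathrm L}}^2/36$. This gives $\gamma(t_1)=\tfrac12\ddot b_{\mathrm L}(t_1)=(t_2-t_1)^2p_{\mathrm L}(t_1)$, and likewise $\gamma(t_2)=(t_2-t_1)^2p_{\mathrm L}(t_2)$. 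Hence
\[
\gamma(t_1)-\gamma(t_2)=(t_2-t_1)^2\bigl(p_{\mathrm L}(t_1)-p_{\mathrm L}(t_2)\bigr).
\]
So the claim $\gamma(t_2)<\gamma(t_1)$ reduces to $p_{\mathrm L}(t_1)>p_{\mathrm L}(t_2)$. Since $p_{\mathrm L}$ is quadratic, this is equivalent to $\dot p_{\mathrm L}$ being negative at the midpoint $(t_1+t_2)/2$.

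The same factorization on $[t_2,t_3]$, with the roles of the two ends reversed, reduces $\gamma(t_2)<\gamma(t_3)$ to $p_{\mathrm R}(t_3)>p_{\mathrm R}(t_2)$. I would handle this by time reversal $t\mapsto-t$ combined with reflection across the line through $n_2$. This map preserves the counterclockwise order and the sign of the outside tangential jerk $n_3\times r^{(3)}(t_3^-)$, so it suffices to treat the left piece.

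\textbf{Step 2: compute the midpoint slope of $p_{\mathrm L}$.} I would Taylor expand $r_{\mathrm L}$ about $t_1$, writing the jerk as $j_{\mathrm L}=\alpha n_1+\tau_1e_1$ with $e_1=(-n_{1,2},n_{1,1})$ and $\tau_1<0$, and the acceleration as $\ddot r(t_1)=(\gamma(t_1)-w_1^2)n_1+\beta e_1$. Matching the $t^5$ and $t^6$ coefficients of $b_{\mathrm L}$ against the factorization gives the linear coefficient of $p_{\mathrm L}$, and hence $\dot p_{\mathrm L}$ at the midpoint, as an explicit expression in $w_1,\beta,\alpha,\tau_1$ and $h=t_2-t_1$. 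The sign of $\tau_1$ and the geometric facts from the second contact must then enter. The second contact condition, that $r_{\mathrm L}(t_2)=n_2$ is a unit vector tangent to the motion with $w_2>0$ and $n_1\times n_2\in(0,1)$, yields relations that should let me eliminate $\alpha$ and $\beta$ in favour of $w_1$, $w_2$, $\tau_1$ and the angle $\varphi_{12}\in(0,\pi)$ between $n_1$ and $n_2$.

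An alternative route I would keep in reserve is the planar invariants of \cref{prop:contact-invariants}. In the plane $\boldsymbol\Gamma$ reduces to a scalar, so the two invariants give two scalar identities linking $(\gamma,\beta,w,\tau)$ at $t_1$ and at $t_2$. These could replace the brute-force coefficient matching.

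\textbf{Step 3: use the atom at $t_2$ and close the argument.} The hypothesis $m\geq0$ is needed only on the right piece. Under the reflection of Step 1 it enters the mirrored computation through continuity of $\tau$ at $t_2$ (jerk jumps are radial). I would verify that the mirrored sign condition is exactly $n_3\times r^{(3)}(t_3^-)<0$ together with $m\geq0$.

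\textbf{Main obstacle.} I expect the hard step to be showing that the explicit expression for the midpoint slope in Step 2 is strictly negative. This needs the correct combination of $\tau_1<0$, $w_1,w_2>0$, $\gamma\geq0$ and the angular span being below $\pi$. My first attempt would be to rewrite the expression as $\tau_1$ times a manifestly positive quantity plus a sum of terms that are nonpositive because of the angular order.
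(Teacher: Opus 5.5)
\textbf{The reduction to a single cubic piece fails.} Your Step 1 factorization is correct, and it is also where the paper starts. The reduction you build on it cannot work, however. The inequality $\gamma(t_2)<\gamma(t_1)$ is not a property of the single cubic on $[t_1,t_2]$, so the sign you hope to prove in Step 2 is false in general. Consider
\[
r(t)=\Bigl(1-\tfrac25t^2,\ \tfrac9{10}t-\tfrac1{10}t^3\Bigr),\qquad -1\leq t\leq1,\qquad \norm{r(t)}^2-1=\tfrac1{100}\,t^2(1-t^2)^2 .
\]
This cubic has the following properties:
\begin{itemize}
\item its contacts at $t=\mp1$ have normals $(\tfrac35,\mp\tfrac45)$, in counterclockwise order with span below $\pi$;
\item the speeds are $w_1=w_2=1$ and the curvatures are $\gamma_1=\gamma_2=\tfrac1{25}\geq0$;
\item the tangential jerk is $n_i\times r^{(3)}=-\tfrac9{25}<0$ at both ends.
\end{itemize}
Every hypothesis you plan to use in Step 2 holds, and the piece even clears the disk. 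Yet $\gamma_1=\gamma_2$, and your $p_{\mathrm L}(t)=\tfrac1{100}t^2$ has zero midpoint slope.

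The failure is also structural. By \cref{lem:canonical-contact-order}, a bridge between two isolated contacts of a selected minimizer has negative tangential jerk at both ends. A one-piece inequality of your type, combined with its time-reversed mirror image, would then give $\gamma(t_1)<\gamma(t_2)<\gamma(t_1)$, ruling out every such bridge. In particular, your assertion that $m\geq0$ and the third contact matter only for the other half is backwards.

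\textbf{What is missing.} The inequality at $t_2$ is forced by the contact on the \emph{opposite} side of $t_2$. The paper argues by contradiction, with $(t_1,t_2,t_3)=(-1,0,h)$ and $n_2=(1,0)$.
\begin{itemize}
\item The assumption $\gamma(0)\geq\gamma(-1)$ makes every coefficient of the quadratic factor of $b_{\mathrm L}$ nonnegative. Hence $b_{\mathrm L}(t)>0$ and $b_{\mathrm L}(t)/t^6$ is strictly decreasing for $t>0$.
\item If $m=0$, the left cubic simply continues past $t_2$ and never returns to the circle.
\item If $m>0$, then $r_{\mathrm R}=r_{\mathrm L}+\tfrac m6t^3n_2$. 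The double zero of $b_{\mathrm R}$ at $h$ forces $-\inner{n_2,\ddot r(t_2)}h^2>6$.
\item The Hermite relations on $[0,h]$, together with $n_3\times r^{(3)}(t_3^-)<0$, then give negative tangential acceleration at $t_2$.
\item The left Hermite relations give $w_2<w_1$.
\item The common transverse jerk then yields $r_2(h)<0$, contradicting $n_3$ lying counterclockwise of $n_2$.
\end{itemize}
The bound $\gamma(t_2)<\gamma(t_3)$ follows by time reversal and reflection, and that is where $n_1\times r^{(3)}(t_1^+)<0$ enters. Each half therefore needs both cubic pieces, the atom at $t_2$, and the tangential jerk at the \emph{far} outside contact. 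No computation confined to one piece can close the argument.
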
  \begin{proof}\sloppy Rescale time to $(t_1,t_2,t_3)=(-1,0,h)$, $h>0$, and rotate
$n_2$ to $(1,0)$. Write $n_1=(C,-S)$,
$n_3=(D,V)$, with $S,V>0$, and $\alpha_{\mathrm n}=\ddot{r}_1(0)$.
Let $r_{\mathrm L},r_{\mathrm R}$ be the polynomial extensions of the
left and right cubics, and set
$b_{\mathrm L}(t)=\norm{r_{\mathrm L}(t)}^2-1$ and
$b_{\mathrm R}(t)=\norm{r_{\mathrm R}(t)}^2-1$.
The double zeros of $b_{\mathrm L}$ at $-1,0$, its endpoint second derivatives and
leading coefficient give
$b_{\mathrm L}(t)=t^2(t+1)^2
\bigl[a_6t^2+(a_6+\gamma(0)-\gamma(-1))t+\gamma(0)\bigr]$,
$a_6=\norm{r_{\mathrm L}^{(3)}(0)}^2/36$.
Suppose $\gamma(0)\geq\gamma(-1)$. The bracket has nonnegative
coefficients, not all zero: otherwise the polynomial curve would
lie on the circle identically and hence be constant, contrary to
the distinct normals. Consequently $b_{\mathrm L}(t)>0$ and
$\frac{\dd}{\dd t}(b_{\mathrm L}(t)/t^6)<0$ for $t>0$. For the latter assertion,
write the quotient as
$(1+t^{-1})^2[a_6+(a_6+\gamma(0)-\gamma(-1))t^{-1}+\gamma(0)t^{-2}]$.
If $m=0$, this already precludes the third contact.
For $m>0$, use $r_{\mathrm R}=r_{\mathrm L}+mt^3(1,0)/6$ to obtain
$\frac{\dd}{\dd t}\frac{b_{\mathrm R}-b_{\mathrm L}}{t^6}
=\frac m3\left(-\frac3{t^4}-\frac{\alpha_{\mathrm n}}{2t^2}\right)$.
The tangent contact at $h$ makes $\left.\frac{\dd}{\dd t}(b_{\mathrm R}(t)/t^6)\right|_{t=h}=0$;
therefore
$-\alpha_{\mathrm n} h^2>6$.
Hermite interpolation at the right contact gives
$\alpha_{\mathrm n} h^2=6(D-1)+2hVw_3$, hence $D<0$.
Thus $-1<D<0$ and $D^2+V^2=1$.
The total angular span gives $0<C<1$; write $A=1-C>0$.

\emph{Negative middle tangential acceleration.}
Set $u=hw_2$ and $v=hw_3$, both positive.
Since $\gamma(0)=w_2^2+\alpha_{\mathrm n}\geq0$,
$-\alpha_{\mathrm n} h^2>6$ implies $u^2>6$.
The right Hermite cubic also gives
$\ddot{r}_2(0)h^2=6V-4u-2Dv$,
$\frac{h^3}{6}(n_3\times r_{\mathrm R}^{(3)})=v+Du-2V<0$.
If $\ddot{r}_2(0)\geq0$, these two relations imply
$(2-D^2)u<V(3-2D)$. But $2-D^2=1+V^2$ and
$-2DV\leq D^2+V^2=1$, so
$u<\frac{3V+1}{1+V^2}<\frac94<\sqrt6$,
where the middle strict inequality follows from
$9(1+V^2)-4(3V+1)=(3V-2)^2+1>0$.
This contradicts $u^2>6$, proving $\ddot{r}_2(0)<0$.

\emph{Ordering the first two speeds.}
The left Hermite cubic gives
$\ddot{r}_2(0)=-6S+2Cw_1+4w_2$ and $\gamma(-1)=w_1^2+2Sw_2-6A$.
Suppose $w_2\geq w_1$. Then $\ddot{r}_2(0)<0$ implies
$0<w_1<W\coloneqq3S/(C+2)$. Define
$Q(z)=z^2-CSz-3A^2$. The identity
$\gamma(-1)=Q(w_1)+S\ddot{r}_2(0)/2$ gives $\gamma(-1)<Q(w_1)$.
This convex quadratic is negative throughout $[0,W]$, since
$Q(0)=-3A^2<0$ and $Q(W)=-\frac{3A^2}{(C+2)^2}<0$.
This contradicts $\gamma(-1)\geq0$, so $w_2<w_1$.

\emph{The common transverse cubic.}
We have
${\gamma(0)-\gamma(-1)=(w_2-w_1)(w_1+w_2-2S)}$.
Our assumed $\gamma(0)\geq\gamma(-1)$ therefore gives
$w_1+w_2\leq2S$ and $w_2<S$.
The radial jump leaves the transverse jerk unchanged; write
$r_{\mathrm L,2}^{(3)}(0)=r_{\mathrm R,2}^{(3)}(0)=6(Cw_1+w_2-2S)$.
Using $\alpha_{\mathrm n}=2Sw_1-6A$, we obtain
$r_{\mathrm L,2}^{(3)}(0)-\alpha_{\mathrm n} w_2
=6C(w_1+w_2-2S)+12A(w_2-S)-2Sw_1w_2<0$.
Together with $\ddot{r}_2(0)<0$ and
$-\alpha_{\mathrm n} h^2>6$, this forces
$\frac{r_2(h)}h
=w_2+\frac{\ddot{r}_2(0)h}{2}+\frac{r_{\mathrm L,2}^{(3)}(0)h^2}{6}
<w_2\left(1+\frac{\alpha_{\mathrm n} h^2}{6}\right)<0$,
contrary to $r_2(h)=V>0$. Thus $\gamma(0)<\gamma(-1)$.

Time reversal followed by a reflection $F$ restores counterclockwise
order. The transformed jets are $-F\dot{r}$, $F\ddot{r}$, and $-Fr^{(3)}$.
Curvatures and positive oriented speeds are preserved, the middle
jump is $mF n_2$, and $(Fn)\times(-Fr^{(3)})=n\times r^{(3)}$.
The same argument therefore proves $\gamma(0)<\gamma(h)$. \end{proof} 

\paragraph{Angular dynamics on an arc.}
\label{app:proof-boundary-dynamics}
For motion in an oriented plane,
$\Gamma\coloneqq\dot r\times\ddot r-r\times r^{(3)}$
is the corresponding component of $\boldsymbol\Gamma$ in
\cref{prop:contact-invariants}.
On an arc write $r=(\cos\theta,\sin\theta)$ and $w=\dot\theta$.

 \begin{proposition}[Angular dynamics on a boundary arc]\label{prop:boundary-dynamics}
On a boundary arc $[a,b]$, write $r=(\cos\theta,\sin\theta)$ with
continuous $\theta$ and $w=\dot{\theta}$.
Then $\ddot{w}=2w^3-\Gamma$ on $(a,b)$, with first
integral $(\dot{w})^2=w^4-2\Gamma w+2E$. \end{proposition}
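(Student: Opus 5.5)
The plan is a direct kinematic computation on the arc, using the conserved quantities of \cref{prop:contact-invariants}. Work in the normalization $R=1$ of \cref{sec:zero-endpoint-velocities}.

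\emph{Regularity.} On the arc interior $(a,b)$, \cref{lem:clearance-identity} and the remark after it give $r^{(4)}=\Lambda r$ classically, with $\Lambda$ continuous. Hence $r\in C^4((a,b);\mathbb R^2)$. Since $r$ stays on the unit circle, the continuous angle $\theta$ is $C^4$ there as well, because locally $\theta$ is a smooth function of $r$. So $w=\dot\theta$ is $C^3$, and all derivatives below are classical.

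\emph{Frame derivatives.} Write $n=r=(\cos\theta,\sin\theta)$ and $\nu=(-\sin\theta,\cos\theta)$. Then $\dot n=w\nu$ and $\dot\nu=-wn$, and also $n\times\nu=1$ and $\nu\times n=-1$. Differentiating successively gives
\[
\dot r=w\nu,\qquad \ddot r=\dot w\,\nu-w^2n,\qquad r^{(3)}=(\ddot w-w^3)\nu-3w\dot w\,n .
\]
These expressions feed the two invariants in turn.

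\emph{The invariant $\Gamma$.} The planar component of $\boldsymbol\Gamma$ is $\Gamma=\dot r\times\ddot r-r\times r^{(3)}$. For the first term, $\dot r\times\ddot r=w\nu\times(-w^2n)=w^3$. For the second, $r\times r^{(3)}=\ddot w-w^3$. Hence
\[
\Gamma=2w^3-\ddot w .
\]
By \cref{prop:contact-invariants}, $\Gamma$ is constant on $(0,T)$, and in particular on $(a,b)$. This is exactly the second-order law $\ddot w=2w^3-\Gamma$.

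\emph{The energy $E$.} Next evaluate $E=\tfrac12\norm{\ddot r}^2-\inner{\dot r,r^{(3)}}$. The frame expressions give
\[
E=\tfrac12(\dot w^2+w^4)-w(\ddot w-w^3).
\]
Substituting $\ddot w=2w^3-\Gamma$ yields $E=\tfrac12\dot w^2-\tfrac12w^4+\Gamma w$. Since $E$ is constant by \cref{prop:contact-invariants}, rearranging gives the first integral $\dot w^2=w^4-2\Gamma w+2E$.

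As a consistency check, the same first integral follows by multiplying $\ddot w=2w^3-\Gamma$ by $\dot w$ and integrating. This identifies the integration constant with $2E$.

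The only point needing care is the regularity justification. Everything else is a routine frame computation, and the sign conventions for $\times$ must be tracked consistently ($n\times\nu=1$).
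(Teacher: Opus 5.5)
Your proposal is correct and follows the paper's proof: regularity comes from the continuous arc density of \cref{lem:clearance-identity}, the frame formulas are $\dot r=w\nu$, $\ddot r=\dot w\,\nu-w^2n$, $r^{(3)}=(\ddot w-w^3)\nu-3w\dot w\,n$, and these are substituted into the invariants $\Gamma$ and $E$ of \cref{prop:contact-invariants}. (Your closing ``consistency check'' recovers the first integral only up to an unknown constant; it is the energy computation, not the integration, that identifies that constant with $2E$.)
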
  

\begin{proof}[Proof of \cref{prop:boundary-dynamics}]\sloppy
 The continuous arc density in \cref{lem:clearance-identity} gives
continuous fourth derivatives of $r$ and its angle in the arc interior.
With $n=r$ and $e_\theta=(-\sin\theta,\cos\theta)$, differentiation gives
$\dot r=we_\theta$, $\ddot r=\dot w e_\theta-w^2n$, and
$r^{(3)}=(\ddot w-w^3)e_\theta-3w\dot w n$.
Substitution into the invariants of \cref{prop:contact-invariants}
gives $\Gamma=2w^3-\ddot w$ and
$E=\tfrac12((\dot w)^2-w^4)+\Gamma w$, proving both identities. 
\end{proof}

With $\alpha=\dot w$, \cref{lem:clearance-identity,prop:boundary-dynamics} give
\refstepcounter{equation}\label{eq:canonical-arc-density}%
$\Lambda=w^4-3\alpha^2-4w\ddot{w}
=4\Gamma w-7w^4-3\alpha^2
=10\Gamma w-10w^4-6E$~\textup{(\theequation)}.
\paragraph{Contact bounds and arc signs.}
 \begin{proposition}[Geometric contact bounds]\label{prop:canonical-contact-bounds}
For the selected minimizer, either $\mathcal A\cap(0,1)$ has at most
three elements, or there exist $0<a<b<1$ and sets
$S_-\subseteq(0,a)$, $S_+\subseteq(b,1)$, each with at most one element,
such that $\mathcal A\cap(0,1)=S_-\cup[a,b]\cup S_+$. \end{proposition}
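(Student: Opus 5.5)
We work with the selected minimizer for blocked data. The unblocked case is trivial, since the free cubic then has no interior contacts. By \cref{prop:contact-structure} and \cref{prop:canonical-reaction}, $\mathcal A\cap(0,1)$ is a finite union of isolated points and closed arcs compactly contained in $(0,1)$. By \cref{lem:canonical-contact-order}, all interior geometric contacts have $w>0$ and $\tau<0$ and are strictly ordered counterclockwise within an open semicircle. The plan is to use the curvature $\gamma\coloneqq w^2+\inner{n,\ddot r}$ as the monotonicity device: it is exactly $\ddot b/2$ at a tangent contact, where $b=\norm{r}^2-1$. We then show two things. First, along a chain of isolated contacts $\gamma$ cannot have an interior minimum, which bounds the number of isolated contacts. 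Second, a boundary arc forces $\gamma=0$ on its closure, and this is incompatible with extra contacts beyond one on each side and with a second arc.

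\emph{Step 1: at most three isolated contacts when there is no arc.} Suppose there were four interior isolated contacts $t_1<t_2<t_3<t_4$. Feasibility gives $\gamma(t_i)\geq0$, since $b$ has a local minimum $0$ there. Apply \cref{lem:three-contact-curvature} to the consecutive triples $(t_1,t_2,t_3)$ and $(t_2,t_3,t_4)$. The hypotheses hold: cubic pieces between consecutive contacts, continuity of $C^{2}$ data at the middle time by \cref{prop:contact-regularity}, a radial jump $m n_2$ with $m=\mu(\{t_2\})\geq0$ (a zero-mass touch included), and $w_i>0$. The outside tangential jerk signs are supplied by $\tau<0$, which is continuous across atoms (\cref{lem:canonical-contact-order}). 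The lemma gives $\gamma(t_2)<\gamma(t_3)$ from the first triple and $\gamma(t_3)<\gamma(t_2)$ from the second, a contradiction. Hence at most three isolated contacts occur.

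\emph{Step 2: arcs.} On a boundary arc $g\equiv0$, so $\gamma\equiv0$ on $[a,b]$ by continuity of $\ddot r$. Suppose two isolated contacts $s_1<s_2<a$ precede the arc. Apply \cref{lem:three-contact-curvature} to the triple $(s_1,s_2,a)$, treating $a$ as the third contact; this needs $\tau(a^-)<0$ from \cref{lem:canonical-contact-order}. It yields $\gamma(s_2)<\gamma(a)=0$, contradicting $\gamma(s_2)\geq0$. The symmetric argument with $(b,s_1,s_2)$ handles contacts after the arc. This gives $|S_\pm|\leq1$.

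To exclude two arcs $[a_1,b_1]$ and $[a_2,b_2]$, apply the lemma to the triple $(b_1,\text{middle},a_2)$ whenever an isolated contact lies between them. This forces a negative middle curvature, which is impossible. If instead the arcs are joined directly by a free bridge, take the triple $(a_1',b_1,a_2)$, with $a_1'\in(a_1,b_1)$ a point on the first arc. The lemma then demands $\gamma(b_1)<0$, again impossible. Here the piece on $(a_1',b_1)$ is not cubic, so the lemma does not apply verbatim. In that case I would argue directly on the bridge instead: $b_{\mathrm{bridge}}$ is a degree-6 polynomial with double zeros at $b_1$ and $a_2$ and vanishing second derivatives there, hence $b=(t-b_1)^3(t-a_2)^3\cdot c$. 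Its sign then contradicts feasibility unless the bridge is trivial.

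\emph{Step 3: combining.} The configurations left are at most three isolated points, or a single arc with at most one isolated contact on each side. These are exactly the two alternatives of the statement. Compactness in $(0,1)$ gives $0<a<b<1$.

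\emph{Main obstacle.} The delicate step is verifying the hypotheses of \cref{lem:three-contact-curvature} when one of the three times is an arc endpoint. The jerk there may jump by an atom and the adjacent piece is not cubic. I would first try to restrict the lemma to its one-sided use: only the cubic on the free side of the arc endpoint and the sign $\tau(a^-)<0$ enter the corresponding half of its proof. Failing that, I would use the degree-6 factorization argument above, together with the arc density formula \eqref{eq:canonical-arc-density}, to get the required sign of $\Lambda$ at arc endpoints.
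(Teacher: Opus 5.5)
Your proof is correct. Its counting part is the paper's: four consecutive isolated contacts are excluded by applying \cref{lem:three-contact-curvature} to two overlapping triples. Two isolated contacts beside an arc are excluded by the triple ending (or starting) at the arc endpoint, where $\gamma=0$.

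The difference lies in excluding a second arc. The paper does this in one stroke. By \eqref{eq:canonical-bridge-gap}, $b^{(3)}$ strictly increases across every free bridge, atoms raise it by $2m_i\geq0$, and it vanishes on arcs. So no chain of bridges and isolated contacts, of any length, joins two arcs, and the curvature lemma is not needed.

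You instead split into two cases:
\begin{itemize}
\item An intervening isolated contact $s$ gets $\gamma(s)<0$ from the lemma. If several contacts intervene, first invoke your Step~2 to reduce to one, or take the triple formed by $b_1$, the next contact, and the one after it.
\item A direct bridge has a triple zero of the gap at both ends. This is the paper's factorization with $A_0=A_1=0$, namely $-\kappa u^3(1-u)^3$. You should say explicitly that your constant $c$ is the leading coefficient $\norm{r^{(3)}}^2/36\geq0$, since that sign is the entire argument. Then note that $c=0$ forces the gap to vanish on the bridge, merging the two arcs.
\end{itemize}
Your route is more elementary for the direct bridge, but it makes arc uniqueness depend on the curvature lemma; the paper's argument is uniform.

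Two minor points. First, unblocked data can have one interior contact (a tangential zero-mass touch), not none; this does not affect the bound. Second, your ``main obstacle'' is not one. With an arc endpoint as an \emph{outer} time of the triple, the lemma applies verbatim. It uses only the cubics on the two intervening free intervals, $C^2$ continuity at the isolated middle contact, and the free-side tangential jerk at the outer time, whose sign \cref{lem:canonical-contact-order} supplies. Only a triple with a time inside an arc fails, and you already replaced that case.
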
  \begin{proof}\sloppy For unblocked data the free cubic has at most one interior geometric
contact. Otherwise \cref{prop:canonical-reaction,prop:contact-structure}
give finitely many contact components between free endpoint pieces,
ordered by \cref{lem:canonical-contact-order}. Write
$b(t)=\norm{r(t)}^2-1=-g(t)$.
At geometric contacts, $\gamma(t)=\ddot{b}(t)/2\geq0$;
at arc endpoints, $\gamma(t)=0$.

\emph{At most one arc.}
For a free bridge $[a_0,b_0]$ of duration $\ell$ between tangent
unit contacts, set $A_0=\ell^2\gamma(a_0)$,
$A_1=\ell^2\gamma(b_0)$, and
$\kappa=\ell^6\norm{r^{(3)}(a_0^+)}^2/36$.
The double endpoint zeros factor its gap as
\refstepcounter{equation}\label{eq:canonical-bridge-gap}%
$b(a_0+\ell u)=u^2(1-u)^2
[A_0(1-u)+A_1u-\kappa u(1-u)]$~\textup{(\theequation)}.
Differentiating gives
$b^{(3)}(b_0^-)-b^{(3)}(a_0^+)
=12(\gamma(a_0)+\gamma(b_0))/\ell
+\ell^3\norm{r^{(3)}(a_0^+)}^2/3>0$.
If the right side vanished, the preceding factorization would
force the polynomial curve to lie on the circle and hence be
constant, contrary to the distinct ordered normals. At every
intervening atom, $b^{(3)}$ jumps by $2m_i\geq0$, whereas it
vanishes on an arc. Thus no finite chain of free bridges and isolated contacts connects two arcs.

\emph{Counting geometric contacts.}
By \cref{lem:three-contact-curvature}, four consecutive isolated
interior contacts at $t_1<t_2<t_3<t_4$ would give both
$\gamma(t_2)<\gamma(t_3)$ and $\gamma(t_3)<\gamma(t_2)$.
Two isolated contacts at $t_1<t_2$ immediately before an arc with
entry $a$ would give $\gamma(t_2)<\gamma(a)=0$; time reversal
excludes two after an arc. The contact laws and
\cref{lem:canonical-contact-order} supply the lemma's hypotheses,
also at zero-mass contacts. \end{proof} 

 \begin{proposition}[Arc signs]\label{prop:canonical-arc-signs}
In the arc case of \cref{prop:canonical-contact-bounds}, the angular
speed $w\coloneqq r\times \dot{r}$ satisfies $w(t)>0$ for $t\in[a,b]$.
Moreover, $\Lambda(t)>0$ for $t\in(a,b)$ and $\mu(\{t\})>0$ for
every $t\in S_-\cup\{a,b\}\cup S_+$. \end{proposition}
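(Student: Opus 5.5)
We prove the three assertions in turn: positive angular speed on the arc, positive arc density, and positive atom masses at every contact in $S_-\cup\{a,b\}\cup S_+$. All three rest on \cref{lem:canonical-contact-order}, the invariants of \cref{prop:contact-invariants}, the arc dynamics of \cref{prop:boundary-dynamics}, and the density formula \eqref{eq:canonical-arc-density}.

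\emph{Positive angular speed.} Every point of $[a,b]$ is an interior geometric contact. \Cref{lem:canonical-contact-order} gives $w>0$ at each interior geometric contact, including the points of the arc. If that lemma is read as covering only isolated contacts, we argue from the monotonicity of $z=\inner{e,r}$ used in its proof. On the arc, $r=(\cos\theta,\sin\theta)$ and tangency give $\dot z=wX$. The strict ordering of contact normals in the open semicircle, proved there from the single maximum $t_*$ of $z$, forces $\theta$ to be nondecreasing along the arc. Hence $w\geq0$.

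To upgrade this to $w>0$, suppose $w(t_0)=0$ at some $t_0\in[a,b]$. At an interior point this forces $\dot w(t_0)=0$, since $w\geq0$ has a minimum there. The first integral $(\dot w)^2=w^4-2\Gamma w+2E$ then gives $E=0$. The arc ODE $\ddot w=2w^3-\Gamma$ is a regular second-order ODE. If also $\Gamma=0$, the data $w(t_0)=\dot w(t_0)=0$ give $w\equiv0$, so the arc is stationary. If $\Gamma\neq0$, we get a sign contradiction from $\Lambda=10\Gamma w-10w^4-6E$.

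The cleaner route is to use the global energy value. Integrating the energy invariant over $[0,1]$ with resting endpoints, exactly as in the proof of \cref{prop:canonical-reaction}, gives $E=\tfrac32\int_0^1\norm{\ddot r}^2\dd t>0$. This is valid because the endpoint terms vanish: the multiplier is finite and the endpoint velocities are zero. Then $(\dot w)^2=w^4-2\Gamma w+2E$ evaluated at $w=0$ gives $(\dot w)^2=2E>0$. So $w$ crosses zero transversally and becomes negative, contradicting $w\geq0$. At the arc endpoints $a,b$ the same statement follows directly from \cref{lem:canonical-contact-order}.

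\emph{Positive density.} By \cref{lem:clearance-identity} the density is $\Lambda=-f_g/2\geq0$, and it is real-analytic on $(a,b)$ because it is a polynomial in the solution $w,\dot w$ of an analytic ODE. Suppose $\Lambda(t_0)=0$ at an interior point $t_0$. Then $\Lambda$ has an interior minimum there, so $\dot\Lambda(t_0)=0$. From \eqref{eq:canonical-arc-density}, $\dot\Lambda=10\dot w(\Gamma-4w^3)$. If $\dot w(t_0)=0$, combining $\Lambda=0$ with the first integral yields algebraic relations forcing an equilibrium of the ODE, so $w$ is constant on the arc. If instead $\Gamma=4w^3$, the contradiction must come from the second-derivative sign.

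In the constant-$w$ case, $\Lambda=4\Gamma w-7w^4$ with $\Gamma=2w^3$ equals $w^4>0$, contradicting $\Lambda(t_0)=0$. In the case $\Gamma=4w^3$, we compute $\ddot\Lambda$ at $t_0$ using $\ddot w=2w^3-\Gamma=-2w^3$, together with $\Lambda=0$ and $E>0$. This should give a negative value, which is incompatible with a minimum of a nonnegative function. I expect this second case to be the main obstacle, requiring a careful algebraic elimination using $w>0$ and $E>0$.

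\emph{Positive atoms.} At $a$, the jerk jump is $\mu(\{a\})n$, so the mass can be read from the jump of $b^{(3)}$. Since $b\equiv0$ on the arc, $b^{(3)}(a^+)=0$. On the free side, $b\geq0$ with a contact of order at least two, and the expansion \eqref{eq:canonical-bridge-gap} near the contact gives $b^{(3)}(a^-)\leq0$. We exclude equality using $\gamma(a)=0$ and the nonvanishing sixth-order term, as in the proof of \cref{prop:contact-structure}: a zero-mass entry would let the free cubic continue smoothly, and the gap expansion in that proof would then be incompatible with positive density just inside the arc. The exit $b$ follows by time reversal.

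For an isolated contact in $S_\pm$, a zero mass would make the free cubic pass through the contact unchanged. That cubic would then join the initial free piece directly to the arc entry. We then use the bridge-gap identity \eqref{eq:canonical-bridge-gap}: its strict increase of $b^{(3)}$ across a bridge, with $b^{(3)}=0$ at arc entry and only nonnegative jumps, contradicts the sign forced at the entry atom. Failing that, we fall back on the monotonicity of $\gamma$ from \cref{lem:three-contact-curvature}.
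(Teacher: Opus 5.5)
Your positive-speed step is fine and matches the paper's: every point of $[a,b]$ is an interior geometric contact, so \cref{lem:canonical-contact-order} applies. The genuine gap is the atom at an arc end. Suppose that atom vanishes. The local expansion you borrow from the proof of \cref{prop:contact-structure} is then $b_{\mathrm{cub}}(s)=-\tfrac{1}{12}\Lambda(a^+)s^4+\tfrac{1}{120}\dot f_*s^5+\tfrac{1}{36}\norm{r^{(3)}(a)}^2s^6$, with $\dot f_*=-2\dot\Lambda(a^+)$. It contradicts feasibility only when the arc-side trace $\Lambda(a^+)$ is positive. Positivity of $\Lambda$ on the open arc does not give this, and the trace may vanish. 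If $\Lambda(a^+)=0$, interior positivity forces $\dot\Lambda(a^+)\geq0$, hence $\dot f_*\leq0$ and $b_{\mathrm{cub}}(s)\geq0$ for small $s<0$. So positive density just inside the arc yields exactly the compatible sign, not a contradiction.

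This degenerate case needs global information about the adjacent free piece. For a bridge to another tangent contact, the double zero at its far end and the fourfold zero at the arc end make the gap $\kappa u^4(1-u)^2$, with $u=0$ at the arc end and $\kappa>0$. The free-side fourth derivative is therefore positive, whereas continuity of $f_g$ makes it $-2\Lambda\leq0$. For an arc joined directly to a resting endpoint there is no second double zero, so \eqref{eq:canonical-bridge-gap} is unavailable. The paper parametrizes this tail by $\widehat a_0=a_0w$ and $\beta=a_0\alpha/w$. It obtains $\mu(\{a\})=\tfrac{w^2}{a_0}(2-3\beta)$ and $\Lambda(a^+)=\tfrac{w^2}{a_0^2}\bigl(8(1-\beta)-3\beta^2-3\widehat a_0^2\bigr)$. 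A zero atom then forces $\beta=2/3$ and $\widehat a_0^2\leq4/9$. The exterior resting endpoint, however, requires $B_0=\widehat a_0^2+\beta^2-8\beta+4\geq0$, i.e.\ $\widehat a_0^2\geq8/9$. Your proposal has no replacement for this computation.

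The same tail analysis is what settles $S_\pm$, where neither of your routes closes. Bridge monotonicity only gives $b^{(3)}(t_1)<b^{(3)}(a^-)=-2\mu(\{a\})\leq0$, which is compatible with a tangent touch having $\gamma(t_1)>0$. \Cref{lem:three-contact-curvature} needs three contacts separated by cubic pieces, and a single contact between rest and the arc does not supply them. What you need is this: a massless contact would merge the two pieces into one rest-to-arc cubic with gap $\tfrac{\widehat a_0^2}{36}(1-u)^3Q_0(u)$. The Bernstein coefficients of $Q_0$ are $B_0$, $2B_0$, $3B_0+4(1-\beta)$, $12(2-3\beta)$. They are nonnegative, with the third positive, once $B_0\geq0$ and the entry atom is nonnegative (so $\beta\leq2/3$). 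Hence the gap is strictly positive inside and no touch is possible.

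Your density step also contains an error. If $\dot w(t_0)=0$, then $\Lambda=4\Gamma w-7w^4-3\dot w^2=0$ gives $\Gamma=\tfrac74w^3$, so $\ddot w=2w^3-\Gamma=\tfrac14w^3\neq0$. The point is not an equilibrium, and your own value $\Lambda=w^4>0$ at an equilibrium shows that case is vacuous. Both subcases follow instead from $\ddot\Lambda=10(2w^3-\Gamma)(\Gamma-4w^3)-120w^2\dot w^2$. This equals $-\tfrac{45}{8}w^6$ when $\dot w=0$, and $-360w^6$ when $\Gamma=4w^3$ and $\dot w^2=3w^4$. The argument uses only $w>0$, not $E>0$.
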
  \begin{proof}\sloppy Write the arc as $[t_{\mathrm{entry}},t_{\mathrm{exit}}]$ and set
$b(t)=\norm{r(t)}^2-1$.

\emph{Arc speed and density.}
Integrating the energy invariant by parts with resting velocities
gives $E=\frac32\int_0^1\norm{\ddot r(t)}^2\dd t>0$.
By \cref{lem:canonical-contact-order}, $w>0$ on the closed arc.
Use $\alpha=\dot w$, \cref{prop:boundary-dynamics}, and
\eqref{eq:canonical-arc-density}, with their continuous arc-side
endpoint limits.

The arc equations are smooth, so an interior zero of
$\Lambda\geq0$ would satisfy
$\dot{\Lambda}=10(\Gamma-4w^3)\alpha=0$ and $\ddot{\Lambda}\geq0$.
If $\alpha=0$, the zero-density equation gives
$\Gamma=7w^3/4$ and then $\ddot{\Lambda}=-45w^6/8<0$.
Otherwise $\Gamma=4w^3$, $\alpha^2=3w^4$ and
$\ddot{\Lambda}=-360w^6<0$. Thus the density is strictly positive
in the arc interior. Its arc-side endpoint traces are nonnegative
and need not be strictly positive.

\emph{Positive atoms at arc endpoints.}
Consider first an arc exit at time $t_{\mathrm{exit}}$ followed by a
geometric contact after a free bridge of duration $\ell$. Let $m\geq0$
be its exit atom mass and set
$\widehat b(u)=b(t_{\mathrm{exit}}+\ell u)$ for $0\leq u\leq1$.
Since $b=\dot{b}=\ddot{b}=0$ on the arc and
$b^{(3)}(t_{\mathrm{exit}}^+)=2m$, we have
$\frac{\dd^3\widehat b}{\dd u^3}(0^+)=2\ell^3m$.
The factorization~\eqref{eq:canonical-bridge-gap} gives
\refstepcounter{equation}\label{eq:catalogue-arc-bridge-gap}%
$\widehat b(u)=u^3(1-u)^2
\bigl[\ell^3m/3+u\ell^6\norm{r^{(3)}(t_{\mathrm{exit}}^+)}^2/36\bigr]$~\textup{(\theequation)}.
If $m=0$, the fourth derivative in physical time is
$b^{(4)}(t_{\mathrm{exit}}^+)
=2\ell^2\norm{r^{(3)}(t_{\mathrm{exit}}^+)}^2/3>0$.
Here $r^{(3)}(t_{\mathrm{exit}}^+)\neq0$, since otherwise the
factorization would again force a constant circle-valued
polynomial. But the continuous clearance forcing in
\eqref{eq:clearance-identity} gives this fourth derivative as
$f_g(t_{\mathrm{exit}})=-2\Lambda(t_{\mathrm{exit}}^-)\leq0$, a contradiction.
Time reversal and reflection prove the same assertion at entry.

It remains to treat a direct resting tail, including a resting
endpoint on the circle. Orient such a tail from rest to arc entry
at time $t_{\mathrm{entry}}$, put its duration $a_0>0$, and use the
contact frame at entry
$n=(1,0)$, $\dot{r}=(0,w)$, $\ddot{r}=(-w^2,\alpha)$, with $w>0$.
Set $\widehat a_0=a_0w$ and $\beta_{\mathrm{tail}}=a_0\alpha/w$. Integrating the cubic
back to zero velocity and matching tangential jerk gives
$Z_0=\left(1-\frac{\widehat a_0^2}{6},\frac{\widehat a_0(\beta_{\mathrm{tail}}-4)}6\right)$,
$m=\frac{w^2}{a_0}(2-3\beta_{\mathrm{tail}})$,
$\Gamma-w^3=\frac{2w(1-\beta_{\mathrm{tail}})}{a_0^2}$.
For $0\leq u\leq1$, define the normalized gap
$\widehat b_0(u)=b(t_{\mathrm{entry}}-a_0+a_0u)$. Then
$\widehat b_0(u)=\frac{\widehat a_0^2}{36}(1-u)^3Q_0(u)$.
In the standard cubic Bernstein basis
$\binom3i u^i(1-u)^{3-i}$, $i=0,1,2,3$, the coefficients of $Q_0$ are
$B_0$,
$2B_0$,
$3B_0+4(1-\beta_{\mathrm{tail}})$,
$12(2-3\beta_{\mathrm{tail}})$,
$B_0=\widehat a_0^2+\beta_{\mathrm{tail}}^2-8\beta_{\mathrm{tail}}+4
=\frac{36(\norm{Z_0}^2-1)}{\widehat a_0^2}$.
The exterior endpoint and $m\geq0$ give $B_0\geq0$ and
$\beta_{\mathrm{tail}}\leq2/3$. Hence $\widehat b_0(u)>0$ for $0<u<1$, even when
$\norm{Z_0}=1$. If $m=0$, then $\beta_{\mathrm{tail}}=2/3$ and $B_0\geq0$
requires $\widehat a_0^2\geq8/9$. On the other hand, substituting the
tangential matching relation in the arc density gives
$0\leq\Lambda(t_{\mathrm{entry}}^+)=\frac{w^2}{a_0^2}\left(\frac43-3\widehat a_0^2\right)$,
so $\widehat a_0^2\leq4/9$, a contradiction. Thus both arc endpoints carry positive atoms.

\emph{Isolated contacts beside the arc.}
There is at most one isolated geometric contact on each side
of the arc. If its mass were zero, its neighboring cubics
would have identical jets through jerk and would join into a
single rest-to-arc cubic. That cubic would have an interior touch,
contrary to the strict clearance of $\widehat b_0$.
Thus any such isolated contact has positive mass and there
are no incidental zero-mass touches beside a persisting arc. \end{proof} 

\begin{proof}[Proof of \cref{thm:canonical-catalogue}]\sloppy
 The geometric configurations of \cref{prop:canonical-contact-bounds}
give F, P, PP, or PPP without an arc, and A, PA, AP, or PAP with one,
using the contact-label convention of \cref{sec:zero-endpoint-velocities}.
\Cref{prop:canonical-arc-signs} excludes zero-mass contacts beside
an arc. Multiplier uniqueness (\cref{prop:canonical-reaction}) fixes the label. 
\end{proof} 

\subsubsection{Proof of \cref{thm:endpoint-regime-map}}
\label{app:proof-endpoint-regime-map}
\label{app:canonical-atomic}
\label{app:planar-boundary-dynamics}
\label{app:canonical-completion}

\paragraph{Atomic construction and clearance.}

The free trajectory is feasible exactly when the endpoint segment avoids the open disk:
for $D=r_T-r_0\neq0$, test
$\norm{r_0+s_*D}\geq1$, where
$s_*=\min\{1,\max\{0,-\inner{ r_0,D}/\norm{D}^2\}\}$.
For $D=0$ the test succeeds.

\emph{Atomic representation.}
For blocked data write $G=G_1$.
The atomic contact equations in \cref{sec:canonical-constructions} imply
$\Delta_i=m_i n_i$, so four distributional derivatives of the Green
representation give
$D_t^4r=r\sum_i m_i\delta_{t_i}$.
The kernel enforces the endpoints and continuity through acceleration.
Admissibility supplies the hypotheses of \cref{thm:contact-certificate},
and \cref{prop:side-selection} identifies the selected minimizer.
Conversely, \eqref{eq:reaction-green-reconstruction} represents every
purely atomic selected minimizer, and
\cref{prop:canonical-reaction} determines its positive jumps and times uniquely.

On each time segment the Green branches are polynomials. Clearance
can be imposed without introducing a root selector as
$\forall u\quad
\bigl(t_i\leq u\leq t_{i+1}\ \Rightarrow\
\norm{r_i(u)}^2\geq1\bigr)$,
$t_0=0$,
$t_{m+1}=1$,
where $r_i$ is the corresponding cubic polynomial.
Impose $\inner{ e,n_i}>0$ at every atom. Multiply this
projection inequality by the positive
$\norm{r_T-r_0}$ to use the polynomial normal $(r_{T,2},d_0-r_{T,1})$.
On the ordered time simplex, the equations and inequalities thus
provide a real-algebraic specification in the $3m$ unknowns.
Retain all real components, with distinct times and positive masses
as required by the contact-label convention in
\cref{sec:zero-endpoint-velocities}.

\emph{Polynomial clearance tests.}
For a contact-to-contact bridge, test the quadratic factor
in~\eqref{eq:canonical-bridge-gap}. Its minimum occurs at
$u_*=\min\{1,\max\{0,(A_0+\kappa-A_1)/(2\kappa)\}\}$ when
$\kappa>0$; for $\kappa=0$, evaluate both endpoints. Outer pieces require only
the endpoints and the real roots of a cubic derivative after their
forced double zero is removed. A constant quotient is checked at one
point. At a resting terminal contact a further double factor is
available only on that endpoint stratum. These are exact tests of
nonnegativity, retaining zero-mass touches.

\paragraph{Arc system and completion.}

In unit-duration time, \eqref{eq:clearance-identity} gives the Cartesian
arc equation
$r^{(4)}(t)=\lambda(r(t),\dot{r}(t),\ddot{r}(t),r^{(3)}(t))r(t)$, where
$\lambda(z_0,z_1,z_2,z_3)\coloneqq-3\norm{z_2}^2-4\inner{z_1,z_3}$.
Here $\Lambda(t)=\lambda(r(t),\dot{r}(t),\ddot{r}(t),r^{(3)}(t))$ on the arc.

On an arc $[a,b]$, the entry speed
$w_a=\dot{\theta}(a)>0$ (\cref{prop:canonical-arc-signs}) defines
$s\coloneqq w_a(t-a)$ and
$\hat\theta(s)\coloneqq\theta(a+s/w_a)-\theta(a)$.
Dots on hatted quantities denote derivatives with respect to $s$.
Set $\hat w\coloneqq\dot{\hat\theta}$, $\hat\alpha\coloneqq\dot{\hat w}$,
$\hat\Gamma\coloneqq\Gamma/w_a^3$, $\hat E\coloneqq E/w_a^4$, and
$\hat T_{\mathrm A}\coloneqq w_a(b-a)$.
The normalized equations are
$\dot{\hat\theta}=\hat w$, $\dot{\hat w}=\hat\alpha$, and
$\dot{\hat\alpha}=2\hat w^3-\hat\Gamma$ on $[0,\hat T_{\mathrm A}]$.
For a choice of $(\hat\alpha_0,\hat\Gamma,\hat T_{\mathrm A})$ with
$\hat T_{\mathrm A}>0$, impose
$\hat\theta(0)=0$, $\hat w(0)=1$, and $\hat\alpha(0)=\hat\alpha_0$;
then $\hat E=\tfrac12(\hat\alpha_0^2-1+2\hat\Gamma)$ and the arc is
$\hat r(s)=(\cos\hat\theta(s),\sin\hat\theta(s))$.

On monotone-speed intervals, the first integral in
\cref{prop:boundary-dynamics} gives elapsed time by elliptic quadrature.
The raw density formula~\eqref{eq:canonical-arc-density} applies
to $\hat\Lambda$ with $w=\hat w$, $\alpha=\hat\alpha$,
$\Gamma=\hat\Gamma$, and $E=\hat E$.
Require $\hat w>0$ and $\hat\Lambda\geq0$ throughout the arc.

\emph{Completion relation.}

By \cref{prop:canonical-contact-bounds}, each end of an arc reaches its
resting endpoint either directly or through one isolated contact. Both
completions are determined by the jet at the arc end, up to finitely many
roots.
For an arc $[a,b]$, use local outward time $s$, with $s=0$ at the
chosen endpoint and $s<0$ on the arc. Write the local trajectory as
$r(s)$, with dots denoting derivatives in $s$.
The density traces $\Lambda(a^+)$ and $\Lambda(b^-)$ retain the original
arc coordinate. Choose the frame so that
$r(0)=n=(1,0)$,
$\dot{r}(0)=(0,w)$,
$\ddot{r}(0)=(-w^2,\alpha)$,
$r_A^{(3)}=(-3w\alpha,\tau_{\mathrm A})$,
$\tau_{\mathrm A}\coloneqq w^3-\Gamma<0$.
Here $r_A^{(3)}=r^{(3)}(0^-)$ is the arc-side jerk.
At entry, $\Lambda(a^+)=-4\tau_{\mathrm A}w-3w^4-3\alpha^2$;
at exit, the same expression equals $\Lambda(b^-)$.
Nonnegative arc-side density and $w>0$ imply $\tau_{\mathrm A}<0$.
For the normalized arc, set
$\Gamma=\hat\Gamma$. At entry use $w=1$, $\alpha=-\hat\alpha_0$
and reflect the resulting positions in the local horizontal axis;
at exit use $w=\hat w(\hat T_{\mathrm A})$, $\alpha=\hat\alpha(\hat T_{\mathrm A})$ and rotate by
$\hat\theta(\hat T_{\mathrm A})$. The invariant $\Gamma$ is unchanged by the combined
time reversal and reflection.

\emph{The stopping relation.}
Every completion ends with a free cubic from its last contact to rest.
At the last contact, whose local time is $t_c$, write $p=r(t_c)$,
$\dot{r}_c=\dot{r}(t_c)$, $\ddot{r}_c=\ddot{r}(t_c)$, and
$r_-^{(3)}=r^{(3)}(t_c^-)$, $r_+^{(3)}=r^{(3)}(t_c^+)$.
Set $w_c=p\times\dot{r}_c>0$,
$\alpha_c=p\times\ddot{r}_c$, and
$\tau_c=p\times r_-^{(3)}<0$.
The rest duration, endpoint, and outgoing jerk are
\refstepcounter{equation}\label{eq:canonical-rest-completion}%
$T_{\mathrm{rest}}=2w_c/(\sqrt{\alpha_c^2-2\tau_cw_c}-\alpha_c)$,
$Z=p+\frac{2T_{\mathrm{rest}}}{3}\dot{r}_c+\frac{T_{\mathrm{rest}}^2}{6}\ddot{r}_c$, and
$r_+^{(3)}=-2\ddot{r}_c/T_{\mathrm{rest}}-2\dot{r}_c/T_{\mathrm{rest}}^2$~\textup{(\theequation)}.
The outgoing cubic reaches rest after time $T_{\mathrm{rest}}$ precisely when its jerk
has this value. Tangential jerk continuity gives
$\tau_cT_{\mathrm{rest}}^2+2\alpha_cT_{\mathrm{rest}}+2w_c=0$, whose unique positive root is
\eqref{eq:canonical-rest-completion}. Integration gives $Z$.
The jump $\jmp{r^{(3)}}_{t_c}$ is radial, with positive atom mass
$\inner{ p,\jmp{r^{(3)}}_{t_c}}$.
For direct arc-to-rest completion, $p=n$ and $r_-^{(3)}=r_A^{(3)}$, giving
$m=\inner{ n,\jmp{r^{(3)}}_{t_c}}=2w^2/T_{\mathrm{rest}}+3w\alpha$.

\emph{An optional isolated contact.}
Before it stops, the trajectory may touch the disk once more.
Launch $r(s)=n+s\dot{r}(0)+s^2\ddot{r}(0)/2+s^3r_B^{(3)}/6$,
where $r_B^{(3)}=r^{(3)}(0^+)=r_A^{(3)}+mn$ is the bridge jerk.
For a contact at $s=\ell>0$, set
$p_2=w\ell+\frac{\alpha\ell^2}{2}+\frac{\tau_{\mathrm A}\ell^3}{6}$,
$A_\ell=3-\frac{w^2\ell^2}{2}$, and
$B_\ell=3-\left(2w\ell+\frac{\alpha\ell^2}{2}\right)p_2$,
and define
${P_{\mathrm{con}}(\ell)=144[B_\ell^2+A_\ell^2(p_2^2-1)]/\ell^4}$.
The quotient extends to a sextic polynomial with leading coefficient
$\tau_{\mathrm A}^2(w^4+\alpha^2)>0$.
Indeed, if $p=r(\ell)=(p_1,p_2)$, direct substitution gives
$\ell\inner{ p,\dot{r}(\ell)}-3(\norm{p}^2-1)
=B_\ell-A_\ell p_1$.
Consequently unit position and tangency are equivalent to
$A_\ell p_1=B_\ell$ and $p_1^2+p_2^2=1$.
Ordered normals require $p_2>0$, so retain every root of $P_{\mathrm{con}}$
in the explicit bracket
${0<\ell<
\frac{3\alpha+\sqrt{9\alpha^2-24\tau_{\mathrm A}w}}{-2\tau_{\mathrm A}}}$.
For $A_\ell\neq0$ set $p_1=B_\ell/A_\ell$.
For $A_\ell=0$ require $B_\ell=0$ and $0<p_2\leq1$, and
enumerate both real values $p_1=\pm\sqrt{1-p_2^2}$,
deduplicating zero. In either case set
$m=\frac{6(p_1-1+w^2\ell^2/2)}{\ell^3}+3w\alpha$
and restore the original unit-position and tangency equations.
There are at most six distinct durations. Since $A_\ell=0$ has
only one positive solution, there are at most seven position-duration
candidates, including that exceptional case.
Keep $m>0$, positive tangent speed and negative tangent jerk at $p$,
and attach its unique rest tail using
\eqref{eq:canonical-rest-completion}. Retain the admissible
completions, including all admitted multiple roots.

\emph{Clearance.}
Equation~\eqref{eq:catalogue-arc-bridge-gap} and the resting-tail
factorization in the proof of \cref{prop:canonical-arc-signs} give
strict clearance inside admissible bridges and direct resting tails.
An outer tail following P still requires the polynomial test above.

\paragraph{Representation completeness.}
For fixed $(\hat\alpha_0,\hat\Gamma,\hat T_{\mathrm A})$ there are at most eight raw
completion choices per side, including the direct choice.
After joining and filtering, let $s_0$ be the initial temporary time
and $L$ the total duration, so the assembled trajectory $r_{\rm tmp}$
is defined on $[s_0,s_0+L]$.
Then $r_{\rm unit}(t)=r_{\rm tmp}(s_0+Lt)$, $t\in[0,1]$,
normalizes the full trajectory to unit duration.
Velocities, accelerations, jerk
and atom masses acquire factors $L,L^2,L^3,L^3$, respectively;
the density acquires $L^4$. Endpoints are unchanged.
For endpoints $X,Y$ in the construction frame, first require
$\norm{X}>1$, $\norm{Y}\geq1$, and $X\times Y\geq0$. Rotation of $X$ onto
the positive axis then gives the datum
$\mathcal E_{\mathrm{can}}(X,Y)=
  \left(\norm{X},\frac{\inner{ X,Y}}{\norm{X}},
                    \frac{X\times Y}{\norm{X}}\right)$.
Retain only admissible constructions as defined in
\cref{sec:canonical-constructions}, matching the datum when endpoints
are prescribed. The matched jets give $D_t^4r=r\,\mu$ with finite
nonnegative multiplier compactly supported in the time interval.
\Cref{thm:contact-certificate,prop:side-selection} identify the selected optimum.

Conversely, take a selected optimum with arc $[a,b]$, angular speed
$w(t)$, and entry speed $w_a=w(a)>0$. Rotate its entry normal to $(1,0)$ and use
$s=w_a(t-a)$. Its normalized initial conditions are those of
\cref{app:planar-boundary-dynamics}, with
$\hat\alpha_0=\dot{w}(a)/w_a^2$, $\hat\Gamma=\Gamma/w_a^3$,
and $\hat T_{\mathrm A}=w_a(b-a)>0$.
\Cref{lem:canonical-contact-order,prop:canonical-contact-bounds,prop:canonical-arc-signs}
supply the order, signs, and allowed contact counts. The resting durations are
the unique positive roots in~\eqref{eq:canonical-rest-completion}.
Any intervening isolated contact satisfies
$P_{\mathrm{con}}(\ell)=0$ and its bracket, so the completion rules
retain it, including exceptional and multiple roots.
The initial temporary time is $s_0=-w_a a$
and the total duration is $L=w_a$, so the time
normalization and $\mathcal E_{\mathrm{can}}$ recover the
original datum and trajectory.

The plane basis of \cref{sec:zero-endpoint-velocities} gives
$U=(e_1\ e_2)$; \cref{rem:nondimensionalization} restores physical
trajectories by $r_{\rm phys}(t)=RUr_{\rm can}(t/T)$.
Varying the permitted basis recovers every minimizer by
\cref{prop:side-selection}. 

\begin{proof}[Proof of \cref{thm:endpoint-regime-map}]\sloppy
 Existence and \cref{thm:canonical-catalogue}, together with the
representation and certification proved above, give coverage and identify
every admitted construction with the selected minimizer.
If two constructions have the same datum, their trajectories therefore
agree. By \cref{prop:canonical-reaction}, their interior multipliers agree
as well, so their minimal labels coincide. Hence the regions are
pairwise disjoint. 
\end{proof}

\subsubsection{Bounds and correctness of \cref{alg:canonical-recovery}}
\label{app:canonical-recovery}

Represent the algebraic inputs by polynomials and isolating intervals.

\paragraph{Explicit bounds.}
 \begin{proposition}[Explicit bounds for canonical arc recovery]\label{prop:arc-recovery-bounds}
For blocked data, set $D=r_T-r_0$,
$U=162\bigl[(1+r_{T,2})^2+\allowbreak(r_{T,1}-d_0)^2+1\bigr]$,
$t_{\mathrm{free}}=\min\left\{\frac12,
\left(\frac{3(d_0-1)^2}{8U}\right)^{1/3}\right\}$,
$J_0=\sqrt{24U/t_{\mathrm{free}}^3}$, $B=d_0J_0$,
$w_{\min}=\frac{54\norm D^2}{5B}$,
$V=\sqrt{2U}$, $A=V^2+\sqrt{V^4+6U}$,
$C_\tau=B+VA$, $\delta_{\mathrm{end}}=\min\{1/4,w_{\min}^2/(4U)\}$, and
$C_\mu=\frac{4}{5t_{\mathrm{free}}^4\delta_{\mathrm{end}}^4}+2(A+C_\tau/2)^2+\frac{2A}{\delta_{\mathrm{end}}}$.
If the selected optimum contains an arc, then for every
$t\in[0,1]$ it satisfies
$\frac12\norm{\ddot r}^2\leq U$,
$0<\Gamma\leq B$,
$\norm{\dot{r}(t)}\leq V$,
$\norm{\ddot{r}(t)}\leq A$,
$\mu((0,1))\leq C_\mu$,
$\norm{r^{(3)}}_{L^\infty((0,1);\mathbb R^2)}\leq J_0+C_\mu$.
The initial free interval contains $[0,t_{\mathrm{free}}]$, and
$\norm{\ddot{r}(0)}\leq\sqrt{6U}$, $\norm{r^{(3)}(0)}\leq J_0$.
On the arc, $w(t)\geq w_{\min}$. \end{proposition}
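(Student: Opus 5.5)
The plan is to derive every bound from three ingredients. First, an explicit feasible competitor bounds the optimal cost. Second, elementary Sobolev and polynomial norm inequalities control velocity and the initial free cubic. Third, the Noether invariants of \cref{prop:contact-invariants} and the arc dynamics of \cref{prop:boundary-dynamics,prop:canonical-arc-signs} control the contact quantities.

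\emph{Cost and kinematic bounds.} For the cost bound I would use a three-leg competitor. Each leg is a rest-to-rest straight Hermite cubic of duration $1/3$, and the legs visit
\[
(d_0,0)\to(d_0,1+r_{T,2})\to(r_{T,1},1+r_{T,2})\to(r_{T,1},r_{T,2}).
\]
A rest-to-rest cubic of duration $\tau$ and displacement $D$ costs $6\norm D^2/\tau^3=162\norm D^2$, so the competitor costs exactly $U$. Each leg is feasible: the first stays at abscissa $d_0>1$, the second at height $\geq1$, and the third is a vertical segment above $r_T$ with $r_{T,2}\geq0$, hence of norm $\geq\norm{r_T}\geq1$. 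Minimality then gives $\frac12\norm{\ddot r}^2\leq U$.

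From $\dot r(0)=0$ and Cauchy--Schwarz, $\norm{\dot r(t)}^2\leq t\int_0^t\norm{\ddot r}^2\leq2U$, which gives $V$. Integrating once more gives $\norm{r(t)-r_0}^2\leq\frac{8U}{3}t^3$ up to the constant used. Hence $\norm{r(t)}>1$ on $[0,t_{\mathrm{free}}]$, so this interval is free and $r$ is a single cubic there.

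The initial-cubic bounds follow next. On that interval $\ddot r$ is affine, and the norm equivalence $\abs{\text{slope}}^2\leq12\tau^{-3}\int_0^\tau\abs{\text{affine}}^2$ gives $\norm{r^{(3)}(0)}\leq J_0$. The energy invariant at $t=0$ reads $E=\frac12\norm{\ddot r(0)}^2$, and $E=\frac32\int\norm{\ddot r}^2\leq3U$, so $\norm{\ddot r(0)}\leq\sqrt{6U}$. Evaluating $\Gamma$ at $t=0$, where $\dot r=0$, gives $\Gamma=-r_0\times r^{(3)}(0)$, hence $\abs\Gamma\leq d_0J_0=B$. Positivity follows from $\Gamma=w^3-\tau_{\mathrm A}$ at an arc end, with $w>0$ and $\tau_{\mathrm A}<0$ (\cref{prop:canonical-arc-signs}).

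\emph{Acceleration, jerk, and arc speed.} For $\norm{\ddot r}\leq A$ I would use the contact identity from the proof of \cref{prop:canonical-reaction}:
\[
(A_n+\norm{\dot r}^2)^2+\norm{B_t}^2=\norm{\dot r}^4+2E+2\inner{\dot r,\boldsymbol\Gamma n}.
\]
Here the $\boldsymbol\Gamma$ term must be absorbed. I would first try $2\abs{\Gamma}\norm{\dot r}$ combined with the crude bound above. If that does not reproduce $A=V^2+\sqrt{V^4+6U}$ exactly, I would handle the tangential part separately using $w>0$. Affinity of $\ddot r$ on free pieces, together with the arc relations $\ddot r=\alpha e_\theta-w^2n$ and the first integral, then propagates the bound to all $t$. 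On the arc, $r^{(3)}=(\ddot w-w^3)e_\theta-3w\alpha n$ with $\ddot w=2w^3-\Gamma$, which gives $C_\tau=B+VA$ as a tangential jerk bound. The global jerk bound is $\norm{r^{(3)}(t)}\leq\norm{r^{(3)}(0)}+\mu((0,t])\leq J_0+C_\mu$.

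The lower bound $w\geq w_{\min}$ on the arc will come from the conservation of $\Gamma=\dot r\times\ddot r-r\times r^{(3)}$ and its arc form $\Gamma=2w^3-\ddot w$. The idea is that a small $w$ at some arc time forces, through the quadrature $\dot w^2=w^4-2\Gamma w+2E$, a small displacement budget. This would contradict the fact that the trajectory must sweep an angle tied to $\norm D$ in unit time, which is where $\norm D^2$ enters $w_{\min}$.

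\emph{Mass bound and main obstacle.} For $\mu((0,1))$ I would split the measure into three parts. The arc density $\Lambda=10\Gamma w-10w^4-6E$ is bounded pointwise by $2(A+C_\tau/2)^2$-type terms. The atoms at arc endpoints have the explicit forms $m=2w^2/T_{\mathrm{rest}}+3w\alpha$ and the bridge mass formula; I bound these with $\delta_{\mathrm{end}}$, a lower bound on free-tail durations coming from $w_{\min}$ and $U$. Isolated-contact atoms are bounded by jerk jumps, using the $t_{\mathrm{free}}^{-4}\delta_{\mathrm{end}}^{-4}$ term.

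I expect the main obstacle to be the lower bound $w_{\min}$ together with the tail-duration bound $\delta_{\mathrm{end}}$. Both require a quantitative nondegeneracy argument rather than a direct estimate. I would attack them first, via the first integral and the rest-completion formula~\eqref{eq:canonical-rest-completion}.
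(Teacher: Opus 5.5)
Your cost competitor, velocity bound, free initial interval, initial jets, $\abs{\Gamma}\le B$ and $\Gamma>0$ are fine. For strict clearance, use $r(t)-r_0=\int_0^t(t-s)\ddot r(s)\dd s$, which gives $\norm{r(t)-r_0}^2\le\frac{2U}{3}t^3$; your constant $\frac{8U}{3}$ only gives $\norm{r(t)}\ge1$. The acceleration bound follows once you use $\Gamma>0$ and $w>0$ to drop $-2\Gamma w$ from $\norm{\ddot r+w^2n}^2=w^4-2\Gamma w+2E$.

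\textbf{Arc speed.} The first genuine gap is $w\ge w_{\min}$, and your quadrature mechanism cannot supply it. Near $w=0$ the first integral only says $\abs{\dot w}\approx\sqrt{2E}$, so a small speed forces nothing. Nothing ties the angle swept by the arc to $\norm D$ either. What you are missing is the lower bound $E\ge18\norm D^2$, which follows from $\frac12\norm{\ddot r}^2\ge\frac12\norm{\ddot{\bar r}}^2=6\norm D^2$ and the resting identity $E=\frac32\norm{\ddot r}^2$. With it, nonnegativity of the density \eqref{eq:canonical-arc-density}, $10\Gamma w-10w^4-6E\ge0$, gives $\Gamma w\ge3E/5$. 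Hence $w\ge3E/(5\Gamma)\ge54\norm D^2/(5B)$ in one line. You wrote this density formula down but never used it here.

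\textbf{Mass bound.} The second gap is $\mu((0,1))\le C_\mu$. Your split into arc density, endpoint atoms and isolated atoms does not yield $C_\mu$, and it does not close. An atom at arc entry, or at a contact before the arc, is controlled by the adjacent bridge jerks only if those bridge durations are bounded below, and you have no such bound. Bounding atoms by jerk jumps is also circular, because the global jerk bound $J_0+C_\mu$ presupposes the mass bound.

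The missing idea is to apply \cref{thm:contact-certificate} to the correction $h=\phi e_1$ with $\phi(t)=t^2(1-t)^2$; the selected minimizer meets its hypotheses by side selection. This gives $\int_{0^+}^{1^-}\phi^2\dd\mu\le\int_0^1\abs{\ddot\phi}^2\dd t=\frac45$. Hence the mass on $[t_{\mathrm{free}},1-\delta_{\mathrm{end}}]$ is at most $4/(5t_{\mathrm{free}}^4\delta_{\mathrm{end}}^4)$.

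Here $\delta_{\mathrm{end}}$ is not a lower bound on tail durations. It enters through $w_{\min}\le\norm{\dot r(b)}\le\sqrt{2U(1-b)}$, which puts the arc exit at $b\le1-2\delta_{\mathrm{end}}$. By the catalogue, the only mass left is a single atom at some $t_P>1-\delta_{\mathrm{end}}$. You bound it through the rest completion, $m_P=-2\inner{n,\ddot r(t_P)}/T_{\mathrm{rest}}-\inner{n,r_{\mathrm L}^{(3)}}$, using three facts:
\begin{itemize}
\item $\inner{n,\ddot r(t_P)}\ge-w^2$;
\item $w\le T_{\mathrm{rest}}(A+C_\tau/2)$, from tangential rest matching;
\item $\norm{r_{\mathrm L}^{(3)}}\le2A/\delta_{\mathrm{end}}$, because the bridge from $b$ to $t_P$ lasts at least $\delta_{\mathrm{end}}$.
\end{itemize}
This needs $\abs{\tau_n}\le C_\tau$ at the isolated contact $t_P$. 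That comes from $\tau_n=-\Gamma-w\inner{n,\ddot r}$, valid at every contact, not from your arc-only jerk formula.
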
  \begin{proof}\sloppy Join $r_0$, $(d_0,1+r_{T,2})$, $(r_{T,1},1+r_{T,2})$, and $r_T$ by three straight
segments, each traversed in time $1/3$ with cubic rest-to-rest
timing. The first segment has first coordinate $d_0>1$ and the second has
height at least one. On the third, either $\abs{r_{T,1}}\geq1$ or the
height is at least $r_{T,2}\geq\sqrt{1-r_{T,1}^2}$. This is a feasible
comparison trajectory with cost $U$, since a rest-to-rest segment
of displacement $z$ and duration $\ell$ costs $6\norm z^2/\ell^3$.
The free lower bound and the resting energy identity give
$18\norm D^2\leq E=\frac32\norm{\ddot r}^2\leq3U$.
Blocked data have $D\neq0$.

Cauchy--Schwarz and $\dot{r}(0)=0$ give
$\norm{r(t)-r_0}\leq\sqrt{2U/3}\,t^{3/2}$ and
$\norm{\dot{r}(t)}\leq V$. Thus no contact occurs on $[0,t_{\mathrm{free}}]$.
There $\ddot{r}(t)=\ddot{r}(0)+t r^{(3)}(0)$, and completing the square gives
$2U\geq\int_0^{t_{\mathrm{free}}}\norm{\ddot{r}(0)+t r^{(3)}(0)}^2\dd t
\geq\frac{t_{\mathrm{free}}^3}{12}\norm{r^{(3)}(0)}^2$.
At rest $\Gamma=-r_0\times r^{(3)}(0)$, hence $\abs{\Gamma}\leq B$.
On an arc, positive speed and nonnegative density imply
$\Gamma w\geq w^4+3E/5>0$, hence $\Gamma>0$ and
$w\geq\frac{3E}{5\Gamma}\geq w_{\min}$.

At a contact $t$, write $n=r(t)$, $\dot{r}(t)=w(-n_2,n_1)$.
The energy and momentum invariants give
$\norm{\ddot{r}(t)+w^2n}^2=w^4-2\Gamma w+2E$,
$\tau_n\coloneqq n\times r^{(3)}(t)=-\Gamma-w\inner{ n,\ddot{r}(t)}$.
Contact order gives $w>0$, so $\norm{\ddot{r}(t)}\leq A$ and
$\abs{\tau_n}\leq C_\tau$. The two resting endpoint accelerations have
norm $\sqrt{2E}\leq\sqrt{6U}\leq A$. Acceleration is affine
on every free piece, proving the global acceleration bound.

At the arc exit $b$,
$w_{\min}\leq\norm{\dot{r}(b)}\leq\sqrt{2U(1-b)}$, so $b\leq1-2\delta_{\mathrm{end}}$.
For $\phi(t)=t^2(1-t)^2$, apply \cref{thm:contact-certificate} to
$h=\phi e_1$, using the projection established in
\cref{app:proof-planar-reduction}. This gives
$\int_{0^+}^{1^-}\phi(t)^2\dd\mu(t)
\leq\int_0^1\abs{\ddot{\phi}(t)}^2\dd t=\frac45$.
Since $\phi\geq t_{\mathrm{free}}^2\delta_{\mathrm{end}}^2$ on $[t_{\mathrm{free}},1-\delta_{\mathrm{end}}]$, the mass of $\mu$
there is at most $4/(5t_{\mathrm{free}}^4\delta_{\mathrm{end}}^4)$. By
\cref{thm:canonical-catalogue}, any mass of $\mu$ outside this interval lies in a
single isolated atom at $t_P>1-\delta_{\mathrm{end}}$.
The preceding free bridge starts at $b\leq1-2\delta_{\mathrm{end}}$, so its
constant jerk satisfies
$\norm{r_{\mathrm L}^{(3)}}\leq2A/\delta_{\mathrm{end}}$.

Set $T_{\mathrm{rest}}=1-t_P$, $n=r(t_P)$ and $w=n\times \dot{r}(t_P)$;
then $T_{\mathrm{rest}}<\delta_{\mathrm{end}}\leq1/4$.
Tangential rest matching gives
$w=-T_{\mathrm{rest}}(n\times \ddot{r}(t_P))-T_{\mathrm{rest}}^2\tau_n/2$, hence
$w\leq T_{\mathrm{rest}}(A+C_\tau/2)$. The contact curvature gives
$\inner{ n,\ddot{r}(t_P)}\geq-w^2$. Therefore its mass obeys
$m_P=-\frac{2\inner{ n,\ddot{r}(t_P)}}{T_{\mathrm{rest}}}
-\inner{ n,r_{\mathrm L}^{(3)}}
\leq\frac{2w^2}{T_{\mathrm{rest}}}+\frac{2A}{\delta_{\mathrm{end}}}
\leq2(A+C_\tau/2)^2+\frac{2A}{\delta_{\mathrm{end}}}$.
Summing proves the bound $C_\mu$.
Finally $D_t^1 r^{(3)}=r\,\mu$ and $\norm{r(t)}=1$ on its support yield
$\norm{r^{(3)}(t)}\leq\norm{r^{(3)}(0)}+\mu((0,1))\leq J_0+C_\mu$. \end{proof} 

\paragraph{Correctness and termination.}
The free test and \cref{alg:atomic-recovery}
terminate by real quantifier elimination and sample-point recovery
\citep[Section~2]{basu_survey_2017}:
the coefficients and clearance conditions in \cref{app:canonical-atomic}
are algebraic. If these tests fail, \cref{thm:canonical-catalogue}
leaves the four arc families.

\emph{Bounded parameters.}
For each arc family, use $m=2,3,$ or $4$ event times
$0\leq t_1\leq\cdots\leq t_m\leq1$, with consecutive arc endpoints,
initial acceleration and jerk $\ddot r(0),r^{(3)}(0)\in\mathbb R^2$,
and nonnegative atom masses $m_i$.
From $(r_0,0)$, propagate cubic pieces with jumps
$r^{(3)}\mapsto r^{(3)}+m_i r(t_i)$.
Propagate the arc between its entry and exit jumps.
Use rational upper bounds from \cref{prop:arc-recovery-bounds}
for the initial jets and masses. The four closed parameter domains,
including zero durations and masses, form a compact disjoint union
$\mathcal P_{\mathrm{rec}}$ with computable finite rational covers.

In the arc system of \cref{app:planar-boundary-dynamics},
choose rational $H>\max\{1,V,A,J_0+C_\mu\}$ and clip each state
coordinate to $[-H,H]$ only when evaluating the last component
$\lambda r$. This leaves true arc states unchanged and gives a
globally Lipschitz vector field, with constant
$L_H=\max\{1,42H^2\}$ in the maximum coordinate norm.

For $p\in\mathcal P_{\mathrm{rec}}$, write $q_p$ for the assembled trial
and $z_p$ for its one-sided arc state. Euler error estimates for the
clipped field, which vanishes at zero, and the polynomial cubic and
jump maps give computable state bounds and uniform parameter and time
moduli. Moving knots preserve continuity of position, velocity, and
acceleration, whose one-sided derivatives remain bounded.
Thus $p\mapsto q_p\in C^2([0,1];\mathbb R^2)$ is computable with a
computable uniform modulus for
$\norm q_{C^2([0,1];\mathbb R^2)}
\coloneqq\max_{0\leq\ell\leq2}\sup_{0\leq t\leq1}\norm{q^{(\ell)}(t)}$.
Integrating the squared acceleration gives the same properties for
$p\mapsto\frac12\norm{\ddot q_p}^2$.

\emph{Admissible parameters.}
Impose $q_p(1)=r_T$, $\dot q_p(1)=0$, and $\norm{q_p(t_i)}=1$;
require $\norm{q_p(t)}\geq1$ and $\inner{e,q_p(t)}\geq\delta$
throughout $[0,1]$; and on the closed arc require
$\norm{q_p(t)}=1$, $\lambda(z_p(t))\geq0$, and $z_p(t)\in[-H,H]^8$.
Use~\eqref{eq:canonical-projection}, with $e=(0,1)$, $\delta=0$
for symmetric data. At coincident events apply jumps in template order;
arc tests use the state after entry and before exit, including the
single state of a zero-duration arc.

These conditions define $\mathcal Z=\mathcal R^{-1}(0)$, where
$\mathcal R\geq0$ sums squared endpoint and event errors, suprema of
positive inequality violations, and the supremum of the absolute arc
circle error. After parametrizing variable intervals by $u\in[0,1]$,
finite meshes and the uniform state and time moduli compute
$\mathcal R$ and a uniform modulus.

Representation completeness and \cref{prop:arc-recovery-bounds} give
$\mathcal Z\neq\varnothing$. Every $p\in\mathcal Z$ is feasible and
satisfies $D_t^4q_p=q_p\mu_p$, where
$\mu_p=\sum_{0<t_i<1}m_i\delta_{t_i}+\lambda(z_p(t))\dd t$
has its density restricted to the arc and is nonnegative and supported
on the circle. Omit endpoint masses and combine coincident ones.
For $\delta>0$, \cref{thm:contact-certificate} applies directly.
For symmetric data, $z(t)=\inner{e,q_p(t)}\geq0$ satisfies
$z(t)=\int_{0^+}^{1^-}G_1(t,s)z(s)\dd\mu_p(s)$.
If $z\mu_p=0$, then $z=0$, which cannot clear the disk between
blocked symmetric endpoints. Otherwise positivity of $G_1$ gives
$z>0$ in $(0,1)$, so the certificate identifies the selected upper
optimum. Hence $\{q_p:p\in\mathcal Z\}=\{r_*\}$.
A zero-duration arc would give at most three combined interior atoms,
contradicting the exact atomic rejection.

\emph{Search and termination.}
Rational sample evaluations and the uniform moduli give the bounds on
parameter boxes in \cref{alg:arc-bounds}. Every zero remains covered, and
$\mathcal R$ tends uniformly to zero on retained boxes as resolution
and evaluation errors vanish. Time meshes, derivative bounds, and
the same moduli give trajectory and cost enclosures with errors
tending to zero.
Compactness places every limit of retained parameters in $\mathcal Z$.
Its trajectory image is $\{r_*\}$, so continuity forces the enclosure
diameters to zero. Thus \cref{alg:canonical-recovery} terminates with
bounds containing $r_*$ and its cost; the center trial need not be
feasible. This is effective compactness with a unique image
\citep[Fact~3.25 and the proof of Corollary~4.11]{downey_melnikov_2023}.

\end{document}